\documentclass{article}



    \usepackage[preprint]{neurips_2024}



\usepackage[utf8]{inputenc} 
\usepackage[T1]{fontenc}    
\usepackage{hyperref}       
\usepackage{url}            
\usepackage{booktabs}       
\usepackage{amsfonts}       
\usepackage{nicefrac}       
\usepackage{microtype}      
\usepackage{xcolor}         

\input{Defs/mathdefs}
\usepackage{amsthm}
\usepackage{amssymb}
\usepackage{thmtools}







    

\setlength{\topsep}{\medskipamount}

\declaretheoremstyle[
	    spaceabove=\topsep, 
	    spacebelow=\topsep, 
	    headfont=\normalfont\bfseries,
	    bodyfont=\normalfont\itshape,
	    notefont=\normalfont\bfseries,
	    notebraces={(}{)},
	    postheadspace=0.33em, 
	    headpunct={.},
    ]{theorem}
\declaretheorem[style=theorem]{theorem}

\declaretheoremstyle[
	    spaceabove=\topsep, 
	    spacebelow=\topsep, 
	    headfont=\normalfont\bfseries,
	    bodyfont=\normalfont,
	    notefont=\normalfont\bfseries,
	    notebraces={(}{)},
	    postheadspace=0.33em, 
	    headpunct={.},
    ]{definition}

\declaretheoremstyle[
        spaceabove=\topsep, 
        spacebelow=\topsep, 
        headfont=\normalfont\bfseries,
        bodyfont=\normalfont,
        notefont=\normalfont\bfseries,
        notebraces={}{},
        postheadspace=0.33em, 
        qed=$\blacksquare$, 
        headpunct={.},
    ]{proofstyle}
\declaretheorem[style=proofstyle,numbered=no,name=Proof]{proof}

		
\declaretheorem[style=theorem,sibling=theorem,name=Lemma]{lemma}

\declaretheorem[style=theorem,numbered=no,name=Theorem]{theorem*}
\declaretheorem[style=theorem,numbered=no,name=Lemma]{lemma*}
\declaretheorem[style=theorem,numbered=no,name=Corollary]{corollary*}
\declaretheorem[style=theorem,numbered=no,name=Proposition]{proposition*}
\declaretheorem[style=theorem,numbered=no,name=Claim]{claim*}
\declaretheorem[style=theorem,numbered=no,name=Fact]{fact*}
\declaretheorem[style=theorem,numbered=no,name=Observation]{observation*}
\declaretheorem[style=theorem,numbered=no,name=Conjecture]{conjecture*}

\declaretheorem[style=definition,numbered=no,name=Definition]{definition*}
\declaretheorem[style=definition,numbered=no,name=Remark]{remark*}
\declaretheorem[style=definition,numbered=no,name=Example]{example*}
\declaretheorem[style=definition,numbered=no,name=Question]{question*}


\newcommand{\piClass}{\Pi_M}

\newcommand{\pBarKe}[1][\kEpoch]{\bar{P}^{#1}}
\newcommand{\phiBarH}[1][h]{\bar{\phi}_h}

\newcommand{\Egood}{E_g}
\newcommand{\EgoodBar}{\bar{E}_g}

\newcommand{\setEpochs}{[E]}
\newcommand{\numEpochs}{E}
\newcommand{\setEpochsE}{K_e}
\newcommand{\kEpoch}{k_e}
\newcommand{\kPrevEpoch}{k_{e-1}}

\newcommand{\covK}[1][k]{{\Lambda^{#1}}}

\newcommand{\covKH}[1][k]{{\Lambda^{#1}_{h}}}

\newcommand{\hatCovK}[1][k]{{\covK[\kEpoch]}}
\newcommand{\hatCovKH}[1][h]{{\Lambda^{\kEpoch}_{#1}}}

\newcommand{\betaR}{\beta_r}
\newcommand{\betaP}{\beta_p}
\newcommand{\betaPh}[1][h]{\beta_{p,#1}}

\newcommand{\betaQ}{\beta_Q}
\newcommand{\betaQh}[1][h]{\beta_{Q,#1}}
\newcommand{\betaWarmup}{\beta_w}

\newcommand{\M}{\mathcal{M}}
\newcommand{\A}{\mathcal{A}}
\newcommand{\X}{\mathcal{X}}

\newcommand{\Q}{\mathcal{Q}}
\newcommand{\V}{\mathcal{V}}

\newcommand{\etaO}{\eta_o}

\newcommand{\hatPsiH}[1][h]{\widehat{\psi}_{#1}}
\newcommand{\psiKH}[1][h]{\widehat{\psi}_{#1}^{k}}

\newcommand{\psiH}[1][h]{\psi_{#1}}

\newcommand{\thetaH}[1][h]{\theta_{#1}}
\newcommand{\thetaKH}[1][h]{\theta^{k}_{#1}}
\newcommand{\hatThetaK}[1][k]{{\widehat{\theta}^{#1}}}
\newcommand{\hatThetaKH}[1][h]{{\widehat{\theta}^{k}_{#1}}}

\newcommand{\piK}[1][]{\pi^k_{#1}}
\newcommand{\piOpt}[1][]{\pi^\star_{#1}}

\newcommand{\regret}{\mathrm{Regret}}

\newcommand{\sig}{\sigma}

\newcommand{\betaB}{\beta_b}
\usepackage{cleveref}
\usepackage{algorithm}
\usepackage{algorithmic}

\Crefname{ALC@unique}{Line}{Lines}

\usepackage[textsize=tiny]{todonotes}

\title{Warm-up Free Policy Optimization:\\Improved Regret in Linear Markov Decision Processes}

%

\author{%
  Asaf Cassel\\
  Tel Aviv University\\
  \texttt{acassel@mail.tau.ac.il}\\
  \And
  Aviv Rosenberg\\
  Google Research\\
  \texttt{avivros@google.com}\\
}

\begin{document}

\maketitle

\begin{abstract}
    Policy Optimization (PO) methods are among the most popular Reinforcement Learning (RL) algorithms in practice.
    Recently, \citet{sherman2023rate} proposed a PO-based algorithm with rate-optimal regret guarantees under the linear Markov Decision Process (MDP) model.
    However, their algorithm relies on a costly pure exploration warm-up phase that is hard to implement in practice.
    This paper eliminates this undesired warm-up phase, replacing it with a simple and efficient contraction mechanism.
    Our PO algorithm achieves rate-optimal regret with improved dependence on the other parameters of the problem (horizon and function approximation dimension) in two fundamental settings: adversarial losses with full-information feedback and stochastic losses with bandit feedback.
\end{abstract}

\section{Introduction}

Policy Optimization (PO) is a widely used method in Reinforcement Learning (RL) that achieved tremendous  empirical success, with applications ranging from robotics and computer games \citep{schulman2015trust,schulman2017proximal,mnih2015human,haarnoja2018soft} to Large Language Models (LLMs; \cite{stiennon2020learning,ouyang2022training}).
Theoretical work on policy optimization algorithms initially considered tabular Markov Decision Processes (MDPs; \cite{even2009online,neu2010online,shani2020optimistic,luo2021policy}), where the number of states is assumed to be finite and small.
In recent years the theory was generalized to infinite state spaces under function approximation, specifically under linear function approximation in the linear MDP model \citep{luo2021policy,dai2023refined,sherman2023improved,sherman2023rate,liu2023towards}.

Recently, \cite{sherman2023rate} presented the first policy optimization algorithm that achieves rate-optimal regret in linear MDPs, i.e., a regret bound of $\smash{\widetilde{O} (\mathrm{poly}(H,d) \sqrt{K})}$, where $K$ is the number of interaction episodes, $H$ is the horizon, and $d$ is the dimension of the linear function approximation.
However, their algorithm requires a pure exploration warm-up phase to obtain an initial estimate of the transition dynamics.
To that end, they utilize the algorithm of \cite{wagenmaker2022reward} for reward-free exploration which is not based on the policy optimization paradigm.
Moreover, although this algorithm is computationally efficient, it relies on intricate estimation techniques that are hard to implement in practice and unlikely to generalize beyond linear function approximation (see discussion in \cref{sec:algo}).

In this paper, we propose a novel contraction mechanism to avoid this costly warm-up phase.
Both our contraction mechanism and the warm-up phase serve a similar purpose -- ensuring that the Q-value estimates are bounded and yield ``simple'' policies.
But, unlike the warm-up, our method is integrated directly into the PO algorithm, implemented using a simple conditional truncation of the Q-estimates, and only contributes a lower-order term to the final regret bound.
Moreover, our approach is much more efficient in practice since it does not rely on any reward-free methods, which explore the state space uniformly without taking the reward into account.

Based on this contraction mechanism, we build a new policy optimization algorithm that is simpler, more computationally efficient, easier to implement, and most importantly, improves upon the best-known regret bounds for policy optimization in linear MDPs.
Our regret bound holds in two fundamental settings: 
\begin{enumerate}
    \item Adversarial losses with full-information feedback, where the loss function changes arbitrarily between episodes and is revealed to the agent entirely at the end of each episode.

    \item Stochastic losses with bandit feedback, where the loss function in each episode is sampled i.i.d from some unknown fixed distribution and the agent only observes instantaneous losses in the state-action pairs that she visits.
\end{enumerate}
In these settings, the best-known regret bound (by \cite{sherman2023improved}) was $\widetilde{O} (\sqrt{H^7 d^4 K})$.
Our algorithm, Contracted Features Policy Optimization (CFPO), achieves $\widetilde{O} (\sqrt{H^4 d^3 K})$ regret, yielding a $\smash{\sqrt{H^3 d}}$ improvement over any algorithm for the adversarial setting and matching the value iteration based approach of \cite{jin2020provably} in the stochastic setting.
We note that this is the best regret we can hope for without more sophisticated variance reduction techniques \citep{azar2017minimax,zanette2019tighter,he2023nearly,zhang2024horizon}, that have not yet been applied to PO algorithms even in the tabular setting.\footnote{\cite{wu2022nearly} apply variance reduction techniques to get better regret bounds in the tabular setting, but they use $L_2$-regularization instead of KL-regularization which does not align with practical PO algorithms \cite{schulman2015trust,schulman2017proximal}.}
Ignoring logarithmic factors, the regret of CFPO leaves a gap of only $\sqrt{H d}$ from the $\Omega( \sqrt{H^3 d^2 K})$ lower bound for linear MDPs \citep{zhou2021nearly}.
Finally, our analysis relies on a novel regret decomposition that uses a notion of contracted (sub) MDP and may be of separate interest (see \cref{sec:analysis}).

\subsection{Related work}

\paragraph{Policy optimization in tabular MDPs.}
The regret analysis of PO methods in tabular MDPs was introduced by \cite{even2009online}, which considered the case of known transitions and adversarial losses under full-information feedback.
\cite{neu2010on,neu2010online} extended their algorithms to adversarial losses under bandit feedback.
Then, \cite{shani2020optimistic} presented the first PO algorithms for the case of unknown transitions (for both stochastic and adversarial losses), and finally \cite{luo2021policy} devised a PO algorithm with rate-optimal regret for the challenging case of unknown transitions with adversarial losses under bandit feedback.
Since then, PO was studied in more challenging cases, e.g., delayed feedback \citep{lancewicki2020learning,lancewicki2023learning} and best-of-both-worlds \citep{dann2023best}.

\paragraph{Other regret minimization methods in tabular MDPs.}
An alternative popular method for regret minimization in tabular MDPs with adversarial losses is O-REPS \citep{zimin2013online,rosenberg2019bandit,rosenberg2019online,jin2020learning}, which optimizes over the global state-action occupancy measures instead of locally over the policies in each state.
However, this method is hard to implement in practice and does not generalize to the function approximation setting (without restrictive assumptions).
For stochastic losses, optimistic methods based on Value Iteration (VI; \cite{jaksch2010near,azar2017minimax,zanette2019tighter}) and Q-learning \citep{jin2018q,zhang2020almost} are known to guarantee optimal regret, which has not been established yet for adversarial losses.

\paragraph{Policy optimization in linear MDPs.}
While \cite{sherman2023rate} established rate-optimal regret for PO methods in linear MDPs with stochastic losses, most of the recent research focused on the case of adversarial losses with bandit feedback \citep{luo2021policy,neu2021online,dai2023refined,sherman2023improved,kong2023improved,liu2023towards,zhong2023theoretical}, where rate-optimality has not been achieved yet.

\paragraph{Other regret minimization methods in linear MDPs and other models for function approximation.}
Unlike O-REPS methods that do not generalize to linear function approximation, value-based methods (operating under the stochastic loss assumption) are also popular in linear MDPs and have been shown to yield optimal regret \citep{jin2020learning,zanette2020frequentist,wagenmaker2022first,hu2022nearly,he2023nearly,agarwal2023vo}.
Another line of works \citep{ayoub2020model,modi2020sample,cai2020provably,zhang2021improved,zhou2021nearly,zhou2021provably,he2022near,zhou2022computationally} study linear mixture MDP which is a different model that is incomparable with linear MDP \citep{zhou2021provably}.
Finally, there is a rich line of works studying statistical properties of RL with more general function approximation \citep{munos2005error,jiang2017contextual,dong2020root,jin2021bellman,du2021bilinear}, but these usually do not admit computationally efficient algorithms.

\section{Problem setup}
\label{sec:problem-setup}

\paragraph{Episodic Markov Decision Process (MDP).}
A finite-horizon episodic MDP $\M$ is defined by a tuple $(\X, \A, x_1, \{ \ell^k \}_{k=1}^K, P, H)$ with $\X$, a set of states, $\A$, a set of actions, $H$, decision horizon, $x_1 \in \X$, an initial state (assumed to be fixed for simplicity), $\smash{P = (P_h)_{h \in [H]}, P_h: \X \times \A \to \Delta(\X)}$, the transition probabilities, and $\smash{\{ \ell^k \}_{k=1}^K}$, sequence of loss functions such that $\smash{\ell^k = (\ell^k_h)_{h \in [H]}, \ell^k_h: \X \times \A \to [0,1]}$, is a horizon dependent immediate loss function for taking action $a$ at state $x$ and horizon $h$ of episode $k$. 
A single episode $k$ of an MDP is a sequence $\smash{(x^k_h, a^k_h, \ell^k_h(x^k_h,a^k_h))_{h \in [H]} \in (\X \times \A \times [0,1])^H}$ such that
\begin{align*}
    \Pr[x^k_{h+1} = x' \mid x^k_h = x, a^k_h = a] = P_h(x' \mid x,a)
    .
\end{align*}
For the losses, we consider two settings: stochastic and adversarial.
In the stochastic setting, there exists a fixed loss function  $\ell = (\ell_h)_{h \in [H]}, \ell_h: \X \times \A \to [0,1]$ such that $\ell^k$ is sampled i.i.d from a distribution whose expected value is defined by $\ell$, i.e., $\EE\brk[s]*{\ell^k_h(x,a) \mid x, a} = \ell_h(x,a)$. 
In the adversarial setting, the loss function sequence $\{ \ell^k \}_{k=1}^K$ is chosen by an
adaptive adversary.

\paragraph{Linear MDP.}
A linear MDP \cite{jin2020provably} satisfies all the properties of the above MDP but has the following additional structural assumptions. 
There is a known feature mapping $\phi : \mathcal{X} \times A \to \RR[d]$ such that $P_h(x' \mid x,a) = \phi(x,a)\tran \psiH(x')$ where $\psiH : \mathcal{X} \to \RR[d]$ are unknown parameters.
Moreover, for all $h \in [H], k \in [K]$, there is an unknown vector $\thetaKH \in \RR[d]$ such that, in the adversarial case, $\ell_h^k(x,a) = \phi(x,a)\tran \thetaKH$, while in the stochastic case, $\thetaKH = \thetaH$ and $\ell_h(x,a) = \phi(x,a)\tran \thetaH$.
%
We make the following normalization assumptions, common throughout the literature:
\begin{enumerate}
    \item $\norm{\phi(x,a)} \le 1$ for all $x \in X, a \in \A$;
    \item $\norm{\thetaKH} \le \sqrt{d}$ for all $h \in [H], k \in [K]$;
    \item $\norm{\abs{\psiH}(\X)} = \norm{\sum_{x \in \X} \abs{\psiH(x)}} \le \sqrt{d}$ for all $h \in [H]$;
\end{enumerate}
where $\abs{\psiH(x)}$ is the entry-wise absolute value of $\psiH(x) \in \RR[d]$.
We follow the standard assumption in the literature that the action space $\A$ is finite.
In addition, without loss of generality (see \cite{cassel2024near} for details), we also assume that the state space $\X$ is finite.

\paragraph{Policy and value.} A stochastic Markov policy $\pi = (\pi_h)_{h \in [H]} : [H] \times \X \mapsto \Delta(\A)$ is a mapping from a step and a state to a distribution over actions. Such a policy induces a distribution over trajectories $\iota = (x_h,a_h)_{h \in [H]}$, i.e., sequences of $H$ state-action pairs. For $f : (\X \times \A)^H \to \RR$, which maps trajectories to real values, we denote the expectation with respect to $\iota$ under dynamics $P$ and policy $\pi$ as $\EE[P, \pi] \brk[s]{f(\iota)}$. Similarly, we denote the probability under this distribution by $\PR[P, \pi] \brk[s]{\cdot}$. We denote the class of stochastic Markov policies as $\piClass$.
For any policy $\pi \in \piClass$, horizon $h \in [H]$ and episode $k \in [K]$ we define its loss-to-go, as
\begin{align*}
V_h^{k,\pi}(x)
=
\EE[P,\pi]\brk[s]*{
    \sum_{h'=h}^{H} 
    \EE\brk[s]{\ell_{h'}^k(x_{h'},a_{h'}) \mid x_{h'}, a_{h'}}
    \;\bigg|\; x_h = x
}
,
\end{align*}
which is the expected loss if one starts from state $x \in \X$ at horizon $h$ of episode $k$ and follows policy $\pi$ onwards. Note that the inner expectation is only relevant for stochastic losses as its argument is deterministic in the adversarial setup.
The performance of a policy in episode $k$, also known as its value, is measured by its expected cumulative loss $V_1^{k,\pi}(x_1)$.

\paragraph{Interaction protocol and regret.}
We consider a standard episodic regret minimization setting where an algorithm performs $K$ interactions with an MDP $\M$.
For stochastic losses we consider bandit feedback, where the agent observes only the instantaneous losses along its trajectory, while for adversarial losses we consider full-information feedback, where the agent observes the full loss function $\ell^k$ in the end of episode $k \in [K]$.
Concretely, at the start of each interaction/episode $k \in [K]$, the agent specifies a stochastic Markov policy $\piK = (\piK[h])_{h \in [H]}$. Subsequently, it observes the trajectory $\iota^k$ sampled from the distribution $\PR[P, \piK]$, and, either the individual episode losses $\ell_h^k(x^k_h,a^k_h), h \in [H]$ in the case of bandit feedback, or the entire loss function $\ell^k$ in the case of full-information feedback.

We measure the quality of any algorithm via its \emph{regret} -- the difference between the value of the policies $\piK$ generated by the algorithm and that of the best policy in hindsight, i.e.,
\begin{align*}
    \regret
    =
    \sum_{k=1}^{K} V_1^{k,\piK}(x_1) - \min_{\pi \in \piClass} \sum_{k=1}^{K} V_1^{k,\pi}(x_1)
    =
    \sum_{k=1}^{K} V_1^{k,\piK}(x_1) - V_1^{k,\piOpt}(x_1)
    ,
\end{align*}
where the best policy in hindsight is denoted by $\piOpt$ (known to be optimal even among the class of stochastic history-dependent policies). 

\paragraph{Notation.} 
Throughout the paper
$
    \phi_h^k
    =
    \phi(x_h^k, a_h^k) \in \RR[d]
$
denote the state-action features 
at horizon $h$ of episode $k$. 
In addition, $\norm{v}_A = \sqrt{v\tran A v}$.
Hyper-parameters follow the notations $\beta_z$ and $\eta_z$ for some $z$, and $\delta \in (0,1)$ denotes a confidence parameter. Finally, in the context of an algorithm, $\gets$ signs refer to compute operations whereas $=$ signs define operators, which are evaluated at specific points as part of compute operations.

\section{The role of value clipping}
\label{sec:role-of-clipping}

Before presenting our contraction technique and main results, we discuss the role that value clipping plays in regret minimization and its apparent necessity for linear MDPs.
%
As a starting point, it is important to note that, while commonly used \citep{azar2017minimax,luo2021policy}, value clipping is not strictly necessary in tabular MDPs. To demonstrate this, consider a fairly standard optimistic Value Iteration (VI) algorithm that
%
%
%
%
%
%
constructs sample-based estimates $\hat{\ell}, \hat{P}$ with error bounds $\Delta \ell, \Delta P$, defines exploration bonuses $b = (\Delta \ell + H \cdot\Delta P)$, and chooses a policy $\hat{\pi}^\star$ that is optimal in the empirical MDP whose dynamics are $\hat{P}$ and losses are $\hat{\ell}-b$.
Then its single-episode regret may be decomposed as
\begin{align*}
    V_1^{\hat{\pi}^\star}(x_1) - V_1^{\piOpt}(x_1)
    =
    \underbrace{
    V_1^{\hat{\pi}^\star}(x_1) - \hat{V}_1^{\hat{\pi}^\star}(x_1)
    }_{(i) - \text{bias / cost of optimism}}
    +
    \underbrace{
    \hat{V}_1^{\hat{\pi}^\star}(x_1) - \hat{V}_1^{\piOpt}(x_1)
    }_{(ii) - \text{FTL / ERM}}
    +
    \underbrace{
    \hat{V}_1^{\piOpt}(x_1) - V_1^{\piOpt}(x_1)
    }_{(iii) - \text{optimism}}
    ,
\end{align*}
where $\hat{V}$ is the value under the empirical MDP.
Now, by definition of $\hat{\pi}^\star$, we have that $(ii) \le 0$. Next, using a standard value difference lemma (\cref{lemma:extended-value-difference} in \cref{sec:technical}) we have that $(i) \lesssim b$ and
\begin{align}
    \label{eq:clipping-free-value-difference}
    (iii)
    &
    =
    \EE[\hat{P}, \piOpt] \brk[s]*{
    \sum_{h \in [H]} \Delta {\ell}(x_h,a_h) - b(x_h,a_h) + \sum_{x' \in \X} \Delta {P}(x' \mid x_h, a_h) {V}^{\piOpt}_{h+1}(x')
    }
    \\
    \nonumber
    &
    \le
    \EE[\hat{P}, \piOpt] \brk[s]*{
    \sum_{h \in [H]} \Delta {\ell}(x_h,a_h) + H \Delta P(x_h, a_h)
    -
    b(x_h,a_h)
    }
    =
    0
    ,
\end{align}
where the inequality also used that $V^{\piOpt}_h \in [0, H]$.
The final regret bound is concluded by summing over $k \in [K]$ and using a bound on harmonic sums. We note that a similar clipping-free method also works for tabular PO (see \cite{cassel2024near}).

Moving on to Linear MDPs, one might expect a similar approach to work. Unfortunately, the standard approach that estimates the dynamics backup operators $\psiH, h \in [H]$ using regularized least-squares presents a significant challenge. This is because, unlike the tabular setting, the resulting estimate $\hat{P}_h(\cdot \mid x,a)= \phi(x,a)\tran\hatPsiH(\cdot)$ (\cref{eq:LS-estimate-psi}) is not guaranteed to yield a valid probability distribution, i.e., there could exist $x \in \X, a \in \A, h \in [H]$ such that
\begin{align*}
    \norm{\hat{P}_h(\cdot \mid x,a)}_1 = c > 1
    \quad
    \text{and/or}
    \quad
    \min_{x' \in \X} \hat{P}_h(x' \mid x,a) < 0
    .
\end{align*}
$\hat{P}$ is still a finite signed-measure, which is enough for the first equality in \cref{eq:clipping-free-value-difference} to hold. However, since $\smash{\EE[\hat{P},\piOpt]}$ could contain negative probability terms, the inequality in \cref{eq:clipping-free-value-difference} does not hold. These negative probabilities also seem to make calculating $\hat{\pi}^\star$ computationally hard. Finally, the $\ell_1-$norm exceeding $1$ may cause term $(i)$ to depend on $H$ exponentially. While some of these issues could be mitigated without clipping, we are not aware of a method that resolves all simultaneously.

The use of value clipping opens the path for an alternative value decomposition that replaces $\EE[\hat{P}, \piOpt]$ in \cref{eq:clipping-free-value-difference} with $\EE[P, \piOpt]$ at the cost of also replacing $\smash{V_{h+1}^{\piOpt}}$ with $\smash{\hat{V}_{h+1}^{\piOpt}}$. We thus need that $\smash{\abs{\hat{V}_{h+1}^{\piOpt}} \lesssim H}$ for the inequality in \cref{eq:clipping-free-value-difference} to work. This is made possible using a clipping mechanism that decouples the scale of $\smash{\hat{V}_{h+1}^{\piOpt}}$ from the magnitude of the bonuses $b$, which may be much larger when the error estimates $\Delta \ell, \Delta P$ are large.
This is typically achieved by adding $\max\brk[c]{0, \cdot}$ to the recursive formula for the value function. 
A similar clipping approach also works for tabular PO and VI \citep{azar2017minimax,luo2021policy}, and even for VI in linear MDPs \citep{jin2020provably}.

However, this is not the case for PO in linear MDPs where \cite{sherman2023rate} explain that this type of value clipping leads to prohibitive complexity of the policy and value function classes, and thus sub-optimal regret. They overcome this issue using a warm-up based truncation technique. In what follows, we suggest an alternative solution that uses a novel notion of contracted features and has several advantages over their approach
(see discussion at the end of \cref{sec:algo}).

\section{Algorithm and main result}
\label{sec:algo}

We present Contracted Features Policy Optimization (CFPO; \cref{alg:r-opo-for-linear-mdp-regular-bonus}), a policy optimization routine for regret minimization in linear MDPs.
The algorithm operates in epochs, each beginning when the uncertainty of the dynamics estimation shrinks by a multiplicative factor, as expressed by the determinant of the covariance matrices $\covKH, h \in [H]$ (see \cref{eq:covKH} for the definition of $\covKH$ and \cref{line:repo-epoch-condition} for the epoch change condition).
At the start of each epoch $e$, we reset the policy to its initial (uniform) state, and define the contracted features $\smash{\phiBarH^{\kEpoch}, h \in [H]}$ (\cref{eq:contracted-features}) by multiplying the original features with coefficients in the range $[0,1]$, and thus shrinking their distance to the origin. Inspired by ideas from \cite{zanette2020frequentist}, these coefficients are chosen inversely proportional to the current uncertainty of the least squares estimators in each state-action pair, essentially degenerating the MDP in areas of high uncertainty.
%
Inside an epoch, at episode $k$, we compute the estimated reward vector $\smash{\hatThetaK}$ (\cref{eq:LS-estimate-theta}) and estimated dynamics backup operators $\smash{\psiKH}$ (\cref{eq:LS-estimate-psi}).
Then, we use these $\smash{\hatThetaK}$ and $\smash{\psiKH}$ to compute our Q-value estimates with the contracted features (\cref{eq:value-iteration-Q}), and run an online mirror descent (OMD) update over them (\cref{eq:policy-omd-update}), i.e., run a policy optimization step with respect to the contracted empirical MDP (more on this in \cref{sec:contracted-mdp}).

\begin{algorithm}[t]
	\caption{Contracted Features PO for linear MDPs} 
 \label{alg:r-opo-for-linear-mdp-regular-bonus}
	\begin{algorithmic}[1]
	    \STATE \textbf{input}:
		$d, H, K, \A, \delta, \betaWarmup, \betaB, \etaO > 0$.

            \STATE \textbf{initialize}: $e \gets -1, \covKH[1] \gets I, h \in [H]$.
        
	    \FOR{episode $k = 1,2,\ldots, K$}
     
                \IF{\mbox{$k = 1$
                    \textbf{ or }
                    $\exists h \in [H], \  \det\brk{\covKH} \ge 2 \det\brk{\hatCovKH}$}
                }
                \label{line:repo-epoch-condition}

                    \STATE $e \gets e + 1$ and $\kEpoch \gets k$.

                    \STATE
                    \label{eq:contracted-features}
                    $\bar{\phi}_h^{\kEpoch}(x,a)
                    =
                    \phi(x,a) \cdot \sig\brk*{-\betaWarmup\norm{\phi(x,a)}_{(\hatCovKH)^{-1}} + \log K}$.
                    \hfill\COMMENT{$\sig(z) = 1 / (1 + \exp(-z))$}

                    \STATE $\pi^k_h(a \mid x) = 1 / \abs{\A}$ for all $h \in [H], a \in \A, x \in \X.$
                \ENDIF

            \STATE Play $\piK$ and
            observe losses $(\ell_h^k(x^k_h,a^k_h))_{h \in [H]}$ and trajectory $\iota^k = (x^k_h,a^k_h)_{h \in [H]}$.

            \STATE In the case of full-information feedback: observe $\thetaKH$.
            
            \STATE Define $\hat{V}_{H+1}^{k}(x) = 0$ for all $x \in \mathcal{X}$.
            
            \FOR{$h = H, \ldots, 1$}

                \STATE \label{eq:covKH}$\covKH[k+1] \gets I + \sum_{\tau \in [k]} \phi_h^\tau (\phi_h^\tau)\tran.$ 

                \STATE \label{eq:LS-estimate-theta}$\hatThetaKH \gets
                \begin{cases}
                    (\covKH)^{-1} \sum_{\tau \in [k-1]} \phi_h^\tau \ell_h^\tau(x^\tau_h,a^\tau_h)
                    ,
                    &
                    \mathrm{feedback}=bandit
                    \\
                     \thetaKH
                     ,
                    &
                    \mathrm{feedback}=full.
                \end{cases}
                $
                
            \STATE For any $V: \X \to \RR, x \in \X, a \in \A$ define:
            \begin{alignat}{2}
                    \label{eq:LS-estimate-psi}
                    &
                    \qquad
                    \psiKH V 
                    &&
                    =
                    (\covKH)^{-1} \sum_{\tau \in [k-1]} \phi_h^\tau V(x_{h+1}^\tau),
                    \\
                    \label{eq:value-iteration-Q}
                    &
                    \qquad
                    \hat{Q}_h^{k}(x,a)
                    &&
                    =
                    \bar{\phi}_h^{\kEpoch}(x,a)\tran \brk[s]{
                    \hatThetaKH
                    +
                    \psiKH \hat{V}^{k}_{h+1}
                    }
                    -
                    \betaB \norm{\bar{\phi}_h^{\kEpoch}(x,a)}_{(\hatCovKH)^{-1}}
                    ,
                    \\
                    \label{eq:value-iteration-V}
                    &
                    \qquad
                    \hat{V}_h^{k}(x)
                    &&
                    =
                    \sum_{a \in \mathcal{A}} \pi^{k}_h(a \mid x) \hat{Q}_h^{k}(x,a)
                    ,
                    \\
                    \label{eq:policy-omd-update}
                    &
                    \qquad
                    \pi^{k+1}_h(a \mid x)
                    &&
                    \propto
                    \pi^{k}_h(a \mid x) \exp (-\etaO \hat{Q}_h^{k}(x,a))
                    .
                \end{alignat}
            \ENDFOR       
	       
	    \ENDFOR
	\end{algorithmic}
	
\end{algorithm}
We note that the computational complexity of \cref{alg:r-opo-for-linear-mdp-regular-bonus} is comparable to other algorithms for regret minimization in linear MDPs, such as LSVI-UCB \citep{jin2020provably}.
The following is our main result for \cref{alg:r-opo-for-linear-mdp-regular-bonus} (see the full analysis \cref{appendix-sec:analysis}).
\begin{theorem}
\label{thm:regret}
    Suppose that we run CFPO (\cref{alg:r-opo-for-linear-mdp-regular-bonus}) with the parameters defined in \cref{thm:regret-bound-PO-linear-regular-bonus} (in \cref{appendix-sec:analysis}).
    Then, with probability at least $1 - \delta$, we have
    \begin{align*}
        \regret
        =
        O \left(
        \sqrt{H^4 d^3 K \log(K) \log (K H / \delta)}
        +
        \sqrt{H^5 d K \log (K) \log \abs{\A}}
        \right)
        .
    \end{align*}
\end{theorem}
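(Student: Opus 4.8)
The plan is to prove \cref{thm:regret} through a regret decomposition organized around the \emph{contracted} (sub-)MDP. Fix an epoch $e$ with contracted features $\bar{\phi}_h^{\kEpoch}$ (\cref{eq:contracted-features}); since each coefficient $\sig\brk*{-\betaWarmup\norm{\phi(x,a)}_{(\hatCovKH)^{-1}} + \log K}$ lies in $[0,1]$, the kernel $\bar{P}_h(x' \mid x,a) = \bar{\phi}_h^{\kEpoch}(x,a)\tran \psiH(x')$ is sub-stochastic and the induced losses $\bar{\phi}_h^{\kEpoch}{}\tran\theta \le \phi\tran\theta$, so we obtain a contracted MDP $\bar{\M}^{\kEpoch}$ whose trajectories terminate early (with zero future loss) precisely where the least-squares uncertainty $\norm{\phi(x,a)}_{(\hatCovKH)^{-1}}$ is large. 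The quantities $\hat{Q}_h^k,\hat{V}_h^k$ and the OMD step (\cref{eq:value-iteration-Q,eq:policy-omd-update}) are exactly optimistic policy-optimization updates inside $\bar{\M}^{\kEpoch}$. Writing $\bar{V}^{k,\pi}$ for the value under the contracted dynamics (so $\bar{V}^{k,\pi}\le V^{k,\pi}$ for every $\pi$), I would split each episode's regret as
\begin{align*}
    V_1^{k,\piK}(x_1) - V_1^{k,\piOpt}(x_1)
    =
    \underbrace{\brk*{V_1^{k,\piK}(x_1) - \bar{V}_1^{k,\piK}(x_1)}}_{\text{(A) contraction cost}}
    +
    \underbrace{\brk*{\bar{V}_1^{k,\piK}(x_1) - \bar{V}_1^{k,\piOpt}(x_1)}}_{\text{(B) contracted regret}}
    +
    \underbrace{\brk*{\bar{V}_1^{k,\piOpt}(x_1) - V_1^{k,\piOpt}(x_1)}}_{\le 0}
    ,
\end{align*}
where the last term is nonpositive because contraction only removes loss, leaving (A) and (B) to bound.

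For term (B) I would run the standard optimistic-PO argument entirely inside the well-behaved contracted MDP. Via the extended value difference lemma (\cref{lemma:extended-value-difference}),
\begin{align*}
    \bar{V}_1^{k,\piK}(x_1) - \bar{V}_1^{k,\piOpt}(x_1)
    \le
    \sum_{h \in [H]} \EE[\bar{P},\piOpt]\brk[s]*{\innProd{\hat{Q}_h^k(x_h,\cdot), \piK[h](\cdot \mid x_h) - \piOpt[h](\cdot \mid x_h)}}
    + (\text{bias and bonus terms})
    ,
\end{align*}
where choosing $\betaB$ so that the bonus $\betaB\norm{\bar{\phi}_h^{\kEpoch}}_{(\hatCovKH)^{-1}}$ dominates the estimation error of $\hatThetaKH$ and $\psiKH$ with high probability makes the bias nonpositive up to a sum of bonuses. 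The inner-product terms are a per-state online-mirror-descent regret, bounded by the entropic/KL analysis of \cref{eq:policy-omd-update} as $\log\abs{\A}/\etaO + \etaO\sum_k\sum_a \piK[h](a\mid x)\hat{Q}_h^k(x,a)^2$ at each $(h,x)$. The crucial enabler is that contraction keeps $\abs{\hat{Q}_h^k}=O(H)$; summing over the $H$ layers and the $O(Hd\log K)$ epochs (the policy resets each epoch, and the determinant-doubling rule of \cref{line:repo-epoch-condition} caps the epoch count) and optimizing $\etaO$ yields the $\sqrt{H^5 d K \log(K)\log\abs{\A}}$ term.

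The leftover bonuses together with term (A) produce the first term. Summing $\betaB\norm{\bar{\phi}_h^{\kEpoch}}_{(\hatCovKH)^{-1}}$ over episodes through the elliptical potential lemma (using rare-switching to replace the epoch-frozen $\hatCovKH$ by $\covKH$ up to a constant) gives $\sum_k\norm{\phi_h^k}_{(\covKH)^{-1}} = O(\sqrt{dK\log K})$ per layer; with the bonus magnitude $\betaB = \OtildeOf{dH}$ forced by a covering-number argument over the linear value class (the $d\times d$ bonus matrix contributes the extra factor $d$ over a naive $\sqrt d$), this gives $\sqrt{H^4 d^3 K\log(K)\log(KH/\delta)}$. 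Term (A), $V_1^{k,\piK}-\bar{V}_1^{k,\piK}$, is the expected future loss on trajectories the contraction cuts short, bounded by $H$ times the probability of traversing a high-uncertainty pair; since $\sig\brk*{-\betaWarmup t + \log K}$ departs from $1$ only once $t=\norm{\phi}_{(\hatCovKH)^{-1}}\gtrsim(\log K)/\betaWarmup$, and such pairs are visited on a vanishing fraction of steps by the elliptical potential lemma, term (A) is lower-order.

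\textbf{The main obstacle is the three-way tension inside the contraction.} The sigmoid with the $+\log K$ offset must simultaneously (i) force $\abs{\hat{Q}_h^k}=O(H)$ — the very property that keeps the value and policy classes ``simple'' and whose usual provider, value clipping, fails here (\cref{sec:role-of-clipping}) because clipping destroys linearity in $\bar{\phi}_h^{\kEpoch}$ and inflates the covering number; (ii) retain enough optimism in $\bar{\M}^{\kEpoch}$ for (B); and (iii) keep the contraction cost (A) lower-order. Establishing (i) is the linchpin: one must show $\bar{\phi}_h^{\kEpoch}(x,a)\tran\psiKH\hat{V}_{h+1}^k = \sig(\cdot)\,\phi(x,a)\tran(\hatCovKH)^{-1}\sum_\tau\phi_h^\tau\hat{V}_{h+1}^k$ stays $O(H)$ uniformly, exploiting that the sigmoid decays faster than $\norm{\phi}_{(\hatCovKH)^{-1}}$ grows. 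The stochastic-bandit and adversarial full-information cases then differ only in whether $\hatThetaKH$ carries least-squares loss error (absorbed into $\betaB$) or equals $\thetaKH$ exactly, so a single analysis covers both.
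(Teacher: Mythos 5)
Your proposal is correct and follows essentially the same route as the paper: regret is bounded by comparing against the contracted value of $\piOpt$ (\cref{lemma:contracted-mdp-is-optimistic}), optimism is enforced pointwise with respect to the \emph{contracted} features so that term vanishes (\cref{lemma:optimism-PO-linear-regular-bonus}), the contraction cost on played trajectories is controlled by the quadratic bound on the sigmoid plus elliptical potentials (\cref{lemma:cost-of-contraction,lemma:elliptical-potential}), the OMD term is handled per epoch with $O(dH\log K)$ epochs, and $\betaP = \OtildeOf{dH}$ comes from the covering-number argument — all matching \cref{thm:regret-bound-PO-linear-regular-bonus}. The only (cosmetic) difference is that you split the paper's cost-of-optimism term $V_1^{k,\piK}-\hat V_1^k$ into a pure contraction cost $V_1^{k,\piK}-\bar V_1^{k,\piK}$ plus an estimation bias inside the contracted MDP, which requires the same tools (and, like the paper, a Bernstein-type martingale step to pass from expectations under $\piK$ to realized trajectories, i.e., the event \cref{eq:goodBernsteinPOLinearRegularBonus}).
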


\paragraph{Discussion.}
Policy optimization algorithms typically entail running OMD over estimates $\hat{Q}$ of the state-action value function $Q$, as in \cref{eq:policy-omd-update}. The crux of the algorithm is in obtaining such estimates that satisfy an optimistic condition similar to \cref{eq:clipping-free-value-difference}, while also keeping the complexity of the policy class bounded. As discussed in \cite{sherman2023rate}, the latter depends on $\sum_{k' \in [k]}\hat{Q}_h^{k'}$ (\cref{eq:value-iteration-Q}) having a low dimensional representation nearly independent of $k$. Although standard unclipped estimates admit such a representation, they lack other essential properties (see discussion in \cref{sec:role-of-clipping}).
On the other hand, the standard clipping method, which restricts the value to $[0,H]$ between each backup operation (see, e.g., \cite{jin2020provably}), does not admit the desired representation.

\cite{sherman2023rate} overcame this issue by employing a warm-up phase based on a reward-free pure exploration algorithm by \cite{wagenmaker2022reward} to obtain initial backup operators $\smash{\widehat{\psi}^0_h, h \in [H]}$ and subsets $\bar{\X}_h \subseteq \X, h \in [H]$ such that: (i) for every $x,a \in \bar{\X}_h \times \A$ the bonuses (b in \cref{sec:role-of-clipping}), which are proportional to the estimation uncertainty of the value backup estimates, are small ($\le 1$); and (ii) for all policies $\pi \in \piClass$, the probability of reaching any $x,a \notin  \cup_{h \in [H]}\bar{\X}_h \times \A$ is small ($\lesssim K^{-1/2}$). To ensure that the overall value estimates remain bounded, they truncate (zero out) the Q-value estimate of these nearly unreachable state-action pairs, an operation that allows for a low-dimensional representation of the policies.
%
Nonetheless, their warm-up approach has several drawbacks. 
\begin{itemize}
    \item It runs for $K_0 = \mathrm{poly}(d, H)\sqrt{K}$ episodes, contributing the leading term in their regret guarantee;

    \item It relies on a first-order regret algorithm by \cite{wagenmaker2022first} that is not PO-based and uses a computationally hard variance-aware Catoni estimator for robust mean estimation of the value backups, instead of the standard least-squares estimator. To maintain computational efficiency, they use an approximate version of the estimator, losing a factor of $\smash{\sqrt{d}}$ in the regret;
    
    \item Still, to the best of our knowledge, even the approximate estimator must be computed using binary search methods, making it hard to apply in practical methods that typically rely on gradient-based continuous optimization techniques;

    \item It runs separate algorithms for each horizon $h \in [H]$, using only $1$ out of $H$ samples during the warm-up phase;
    
    \item It is not reward-aware, and thus has to explore the space uniformly to ensure that the uncertainty is small for all policies, which could be highly prohibitive in practice.
\end{itemize}
Our feature contraction approach obtains the desired bounded Q-value estimates and low-complexity policy class without relying on a dedicated warm-up phase.
Crucially, it only contributes a lower order term of $\mathrm{poly}(d, H) \log K$ to the regret guarantee, thus improving the overall dependence on $d$ and $H$. Additionally, it uses all samples, is easy to implement, and is reward-aware. To understand the benefit of reward-awareness, consider an MDP where at the initial state the agent has two actions, each leading to a distinct MDP. Now, suppose that both MDPs have only a single state and action for the first $H/2$ steps with one MDP incurring a loss of $1$ in these steps while the other incurring $0$ loss. Notice that regardless of the last $H/2$ steps, the $0$ loss MDP will outperform the $1$ loss MDP. Nonetheless, the reward-free warm-up, which does not observe the losses, will have to fully explore both MDPs. In contrast, our reward-aware approach would quickly stop exploring the inferior MDP, leading to better performance in practice.

\section{Analysis}\label{sec:analysis}
In this section, we prove the main claims of our result. For full details see \cref{appendix-sec:analysis}. We begin by introducing the main technical tool for our contraction mechanism -- the contracted MDP.

\subsection{Contracted (sub) MDP}\label{sec:contracted-mdp}
For any MDP $\M = (\X, \A, x_1, \{ \ell^k \}_{k=1}^K, P, H)$ and contraction coefficients $\rho: [H] \times \X \times \A \to [0,1]$ we define a contracted (sub) MDP $\bar{\M}(\rho) = (\X, \A, x_1, \{ \bar{\ell}^k \}_{k=1}^K, \bar{P}, H)$ where as $\bar{\ell}_h^k(x,a) = \rho_h(x,a)\ell_h^k(x,a) \in [0,1]$ are the contracted losses and $\bar{P}_h(x' \mid x,a) = \rho_h(x,a) P_h(x' \mid x,a) \in [0,1]$ are the contracted (sub) probability transitions.
Notice that the transitions being a sub-probability measure implies that $\sum_{x' \in \X} P_h(x' \mid x,a) \le 1$ as compared with a probability measure where this holds with equality. 
For any Markov policy $\pi \in \piClass$, let $\smash{\bar{V}_h^{k,\pi}(\cdot ; \rho) : \X \to \RR, h \in [H]}$ be the loss-to-go (or value) functions of the contracted MDP. In particular, these may be defined by the usual backward recursion
\begin{align*}
    \bar{V}_h^{k,\pi}(x ; \rho)
    =
    \EE[a \sim \pi(\cdot \mid x)]\brk[s]*{
    \EE\brk[s]{\bar{\ell}_h^k(x,a) \mid x,a }
    +
    \sum_{x' \in \X} \bar{P}_h(x' \mid x,a) \bar{V}_{h+1}^{k,\pi}(x' ; \rho)
    }
    ,
\end{align*}
with $\bar{V}_{H+1}^{k,\pi}(x ; \rho) = 0$ for all $x \in \X$.
%
%
The following result shows that the value of any contracted MDP lower bounds its non-contracted variant.
\begin{lemma}
\label{lemma:contracted-mdp-is-optimistic}
    For any $\rho: [H] \times \X \times \A \to [0,1], \pi \in \piClass, h \in [H], k \in [K]$, and $x \in \X$ we have that
    $
    \bar{V}_h^{k,\pi}(x ; \rho) \le V_h^{k,\pi}(x)
    .
    $
\end{lemma}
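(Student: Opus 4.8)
The plan is to prove the claim by backward induction on the horizon, running from $h = H+1$ down to $h = 1$, with the inductive hypothesis that $\bar{V}_{h+1}^{k,\pi}(x';\rho) \le V_{h+1}^{k,\pi}(x')$ for every $x' \in \X$. The base case is immediate: by definition both $\bar{V}_{H+1}^{k,\pi}(\cdot;\rho)$ and $V_{H+1}^{k,\pi}(\cdot)$ are identically zero.

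Before the inductive step I would record one auxiliary fact that does the real work: the \emph{non-contracted} value is non-negative, i.e.\ $V_h^{k,\pi}(x) \ge 0$ for all $h \in [H], x \in \X$. This follows because each loss satisfies $\ell_h^k \in [0,1]$ and the $P_h(\cdot \mid x,a)$ are genuine probability distributions, so $V_h^{k,\pi}(x)$ is an expected sum of non-negative quantities (equivalently, a one-line backward induction). This non-negativity is exactly what makes the contraction coefficient $\rho_h(x,a) \le 1$ behave as a shrinkage.

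For the inductive step I would expand $\bar{V}_h^{k,\pi}(x;\rho)$ using the contracted recursion, substituting $\bar{\ell}_h^k = \rho_h \ell_h^k$ and $\bar{P}_h = \rho_h P_h$, so that the common factor $\rho_h(x,a)$ pulls out of both terms inside the expectation over $a \sim \pi(\cdot\mid x)$. Then I chain two inequalities. First, since the transition weights $P_h(x'\mid x,a)$ are non-negative, the inductive hypothesis $\bar{V}_{h+1}^{k,\pi} \le V_{h+1}^{k,\pi}$ gives $\sum_{x'} P_h(x'\mid x,a)\bar{V}_{h+1}^{k,\pi}(x';\rho) \le \sum_{x'} P_h(x'\mid x,a) V_{h+1}^{k,\pi}(x')$. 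Second, the bracketed expression $\EE\brk[s]{\ell_h^k(x,a)\mid x,a} + \sum_{x'} P_h(x'\mid x,a) V_{h+1}^{k,\pi}(x')$ is non-negative — the loss term is non-negative and, by the auxiliary fact, so is $V_{h+1}^{k,\pi}$ — so multiplying it by $\rho_h(x,a) \le 1$ can only decrease it. Discarding the factor $\rho_h(x,a)$ then leaves precisely the non-contracted recursion defining $V_h^{k,\pi}(x)$, which closes the induction.

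The only point that requires care — and the main thing to get right — is the ordering of these two manipulations relative to where non-negativity is invoked. One must first pass from $\bar{V}_{h+1}$ to $V_{h+1}$ via the inductive hypothesis, and only then drop $\rho_h$, because the inequality $\rho_h(x,a)\,z \le z$ is valid only for $z \ge 0$; applying it to the contracted value $\bar{V}_{h+1}$ (whose sign is not yet controlled) would be illegitimate. Beyond this bookkeeping there is no genuine obstacle: because the contraction scales the loss and transition terms by the \emph{same} coefficient $\rho_h(x,a)$, the argument never compares terms of opposite sign. Finiteness of $\X$ and the sub-probability property of $\bar{P}_h$ enter only to guarantee that all the sums are well defined.
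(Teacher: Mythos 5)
Your proof is correct and takes essentially the same route as the paper's: backward induction on $h$, a trivial base case at $H+1$, and an inductive step that upgrades the contracted recursion to the non-contracted one. You are in fact more explicit than the paper, which compresses both of your steps (applying the inductive hypothesis under the non-negative weights $P_h$, then discarding $\rho_h$ using non-negativity of $V_{h+1}^{k,\pi}$) into a single unjustified inequality.
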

\begin{proof}
    The proof follows by backward induction on $h \in [H+1]$. For the base case $h = H+1$, both values are $0$ and the claim holds trivially. Now suppose the claim holds for $h+1$, then we have that for all $x \in \X$
    \begin{align*}
        \bar{V}_h^{k,\pi}(x ; \rho)
        &
        =
        \EE[a \sim \pi(\cdot \mid x)]\brk[s]*{
        \EE\brk[s]{\bar{\ell}_h^k(x,a) \mid x,a}
        +
        \sum_{x' \in \X} \bar{P}_h(x' \mid x,a) \bar{V}_{h+1}^{k,\pi}(x' ; \rho)
        }
        \\
        &
        \le
        \EE[a \sim \pi(\cdot \mid x)]\brk[s]*{
        \EE\brk[s]{{\ell}_h^k(x,a) \mid x,a}
        +
        \sum_{x' \in \X} {P}_h(x' \mid x,a) {V}_{h+1}^{k,\pi}(x')
        }
        =
        V_h^{k,\pi}(x)
        .
        \qedhere
    \end{align*}
\end{proof}
Next, for any epoch $e \in \setEpochs$, consider its contracted linear MDP (\cref{eq:contracted-features} in \cref{alg:r-opo-for-linear-mdp-regular-bonus}) whose contraction coefficients are
$
    \rho_h^{\kEpoch}(x,a)
    =
    \sig\brk*{-\betaWarmup\norm{\phi(x,a)}_{(\hatCovKH)^{-1}} + \log K}
    .
$
The following result gives an upper bound on the performance gap between the contracted and non-contracted variants.
\begin{lemma}
\label{lemma:cost-of-contraction}
    For any $e \in \setEpochs$ and $v \in \RR[d]$ we have that
    \begin{align*}
        (\phi(x_h,a_h) - \phiBarH^{\kEpoch}(x_h,a_h))\tran v
        \le
        \brk{
        4 \betaWarmup^2 \norm{\phi(x_h,a_h)}_{(\covKH)^{-1}}^2
        +
        2 K^{-1}
        } \abs*{\phi(x_h,a_h)\tran v}
        .
    \end{align*}
\end{lemma}
\begin{proof}
    We have that
    \begin{align*}
        (\phi(x_h,a_h) - \phiBarH^{\kEpoch}(x_h,a_h))\tran v
        &
        =
        \sig\brk{\betaWarmup\norm{\phi(x_h,a_h)}_{(\hatCovKH)^{-1}} - \log K}
        \cdot{\phi(x_h,a_h)\tran v}
        \\
        &
        \le
        2\brk{\betaWarmup^2 \norm{\phi(x_h,a_h)}_{(\hatCovKH)^{-1}}^2 
        +
        K^{-1}}\abs*{\phi(x_h,a_h)\tran v}
        \\
        &
        \le
        \brk{
        4 \betaWarmup^2 \norm{\phi(x_h,a_h)}_{(\covKH)^{-1}}^2
        +
        2 K^{-1}
        } \abs*{\phi(x_h,a_h)\tran v}
        ,
    \end{align*}
    where the first relation is by the property of the sigmoid $1 - \sig(x) = \sig(-x)$, the second is by a simple algebric argument that a quadratic function bounds the sigmoid (\cref{lemma:logistic-to-quadratic} in \cref{sec:technical}), and the last relation uses $\det(\covKH) \preceq 2\det(\hatCovKH)$ by \cref{line:repo-epoch-condition} in \cref{alg:r-opo-for-linear-mdp-regular-bonus} (see \cref{lemma:matrix-norm-inequality}  in \cref{sec:technical}).
\end{proof}
We note that the corresponding claim in \cite{sherman2023rate} shows that for all $\pi \in \piClass$
\begin{align}
\label{eq:sherman-cost-of-truncation}
    \EE[P, \pi]\brk[s]{(\phi(x_h,a_h) - \phiBarH^{\kEpoch}(x_h,a_h))\tran v}
    \le
    \epsilon \abs*{\phi(x_h,a_h)\tran v}
    ,
\end{align}
where $\epsilon \approx K^{-1/2}$. Summing this over $k \in [K]$ yields a term that scales as $\sqrt{K}$. In contrast, we use a standard bound on elliptical potentials (\cref{lemma:elliptical-potential}  in \cref{sec:technical}) to get that
\begin{align*}
    \sum_{k \in [K]}
    \brk{
        4 \betaWarmup^2 \norm{\phi(x_h^k,a_h^k)}_{(\covKH)^{-1}}^2
        +
        2 K^{-1}
        }
        \lesssim
        \log K
        .
\end{align*}
This implies that the cost of our contraction is significantly lower than the truncation of \cite{sherman2023rate}. We achieve this reduced cost by using a quadratic (rather than linear) bound on the logistic function.
The challenge in our approach is that the above bound only holds for the observed trajectories rather than for all policies as in \cite{sherman2023rate}. In what follows, we overcome this challenge using a novel regret decomposition.

\subsection{Regret bound} 
\label{sec:regret-bound-main}
For any epoch $e \in [E]$, let $\setEpochsE$ be the set of episodes that it contains, and let $\bar{V}^{k,\pi}_1(x_1; \rho^{\kEpoch})$ denote the value of its contracted MDP as defined above and in \cref{eq:contracted-features} of \cref{alg:r-opo-for-linear-mdp-regular-bonus}.
We bound the regret as
\begin{align*}
    \regret
    & =
    \sum_{k \in [K]} V^{k,\pi^k}_1(x_1) - V^{k,\piOpt}_1(x_1)
    \\
    \tag{\cref{lemma:contracted-mdp-is-optimistic}}
    & \le
    \sum_{e \in \setEpochs}\sum_{k \in \setEpochsE} V^{k,\pi^k}_1(x_1) - \bar{V}^{k,\piOpt}_1(x_1; \rho^{\kEpoch})
    \\
    & 
    =
    \sum_{k \in [K]} V^{k,\pi^k}_1(x_1) - \hat{V}^{k}_1(x_1) 
    +
    \sum_{e \in \setEpochs} \sum_{k \in \setEpochsE} 
    \hat{V}^{k}_1(x_1) - \bar{V}^{k,\piOpt}_1(x_1; \rho^{\kEpoch})
    \\
    &
    =
    \underbrace{
    \sum_{k \in [K]} V^{k,\pi^k}_1(x_1) - \hat{V}^{k}_1(x_1)
    }_{(i) - \text{Bias / Cost of optimism}}
    \\
    &
    \quad
    +
    \underbrace{\sum_{e \in \setEpochs} \sum_{h \in [H]}  \EE[\pBarKe, \piOpt]\brk[s]*{ \sum_{k \in \setEpochsE} \sum_{a \in \A} \hat{Q}^{k}_h(x_h,a) \brk{ \pi^{k}_h(a \mid x_h) - \pi^\star_h(a \mid x_h) } } }_{(ii) - \text{OMD regret}}
    \\
    & \quad +
    \underbrace{\sum_{e \in \setEpochs} \sum_{k \in \setEpochsE} \sum_{h \in [H]} \EE[\pBarKe, \piOpt]\brk[s]*{ 
    \hat{Q}^{k}_h(x_h,a_h)  - \phiBarH^{\kEpoch}(x_h,a_h)\tran \brk{ \thetaKH + \psiH \hat{V}^{k}_{h+1} } } }_{(iii) - \text{Optimism}}
    ,
\end{align*}
where the last relation is by the extended value difference lemma (see \cite{shani2020optimistic} and \cref{lemma:extended-value-difference} in \cref{sec:technical}). 
This decomposition is very similar to the standard one for PO algorithms, but with the crucial difference that term $(iii)$ depends on the contracted features $\phiBarH^{\kEpoch}(x_h,a_h)$ instead of the true features $\phi(x_h,a_h)$.
As a by-product, the expectation in terms $(ii)$ and $(iii)$ is taken with respect to the contracted MDP instead of the true one.
The purpose of this modification will be made clear in the proof of optimism (see \cref{lemma:optimism-PO-linear-regular-bonus}).

In what follows, we bound each term deterministically, conditioned on the following ``good event'':
\begin{flalign}
    \label{eq:goodRewardPOLinearRegularBonus}
     &
     E_1 = \brk[c]*{\forall k \in [K], h \in [H] : \norm{\thetaKH - \hatThetaKH}_{\covKH} \le \betaR};
     &
     \\
     \label{eq:goodTransitionPOLinearRegularBonus}
     &
     E_2 = \brk[c]*{k \in [K] , h \in [H] : \norm{(\psiH - \psiKH) \hat{V}_{h+1}^{k}}_{\covKH} \le \betaP, \norm{\hat{Q}_{h+1}^{k}}_\infty \le 2H}
     .
     &
\end{flalign}
$E_1$ and $E_2$ are error bounds on the loss and dynamics estimation, respectively. In the full feedback setting, $E_1$ holds trivially with $\betaR=0$. In the bandit setting, it holds with high probability with $\betaR = O(\sqrt{d\log(KH/\delta)})$ by well-established bounds for regularized least-squares estimation \citep{abbasi2011improved}. Showing that $E_2$ holds with high probability follows similarly to \cite{sherman2023rate}, again using least-squares arguments but also using the contraction to ensure that $\hat{Q}_h^k$ are bounded
(see \cref{lemma:good-event-PO-linear-regular-bonus} in \cref{appendix-sec:analysis} for details), specifically $\betaP = O(Hd\sqrt{\log(KH/\delta)})$.
The proof of \cref{thm:regret} is concluded by bounding each of the terms in the regret decomposition, summing over $k \in [K]$ and using a standard bound on elliptical potentials (\cref{lemma:elliptical-potential}  in \cref{sec:technical}). Term $(ii)$ is bounded using a standard Online Mirror Descent (OMD) argument (\cref{lemma:omd-term-PO-linear} in \cref{appendix-sec:analysis}).

\paragraph{Optimism and its cost.}
The following lemmas bound terms $(iii)$ and $(i)$, respectively.
\begin{lemma}[Optimism]
\label{lemma:optimism-PO-linear-regular-bonus}
    Suppose that \cref{eq:goodRewardPOLinearRegularBonus,eq:goodTransitionPOLinearRegularBonus} hold, then
    \begin{align*}
        \hat{Q}^{k}_h(x,a)  - \phiBarH^{\kEpoch}(x,a)\tran \brk{ \thetaKH + \psiH \hat{V}^{k}_{h+1} }
        \le
        0
        \quad
        ,
        \forall h \in [H], k \in [K], x \in \X, a \in \A
        .
    \end{align*}
\end{lemma}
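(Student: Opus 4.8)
The plan is to establish the optimism inequality by comparing the estimated Q-value $\hat{Q}^{k}_h(x,a)$ in \cref{eq:value-iteration-Q} with the target $\phiBarH^{\kEpoch}(x,a)\tran(\thetaKH + \psiH \hat{V}^{k}_{h+1})$ and showing that the bonus term $\betaB \norm{\phiBarH^{\kEpoch}(x,a)}_{(\hatCovKH)^{-1}}$ dominates the estimation error. Writing out the difference, the $\betaB$ bonus appears with a negative sign, so after cancellation it suffices to show that
\begin{align*}
    \phiBarH^{\kEpoch}(x,a)\tran\brk*{(\hatThetaKH - \thetaKH) + (\psiKH - \psiH)\hat{V}^{k}_{h+1}}
    \le
    \betaB \norm{\phiBarH^{\kEpoch}(x,a)}_{(\hatCovKH)^{-1}}
    .
\end{align*}
I would bound the left-hand side term-by-term using Cauchy–Schwarz in the $(\hatCovKH)^{-1}$ norm: each inner product $\phiBarH^{\kEpoch}(x,a)\tran u$ is at most $\norm{\phiBarH^{\kEpoch}(x,a)}_{(\hatCovKH)^{-1}} \cdot \norm{u}_{\hatCovKH}$. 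The reward error $\norm{\hatThetaKH - \thetaKH}_{\hatCovKH}$ is controlled by $\betaR$ under \cref{eq:goodRewardPOLinearRegularBonus}, and the value-backup error $\norm{(\psiKH - \psiH)\hat{V}^{k}_{h+1}}_{\hatCovKH}$ is controlled by $\betaP$ under \cref{eq:goodTransitionPOLinearRegularBonus}. Summing these gives that the error is at most $(\betaR + \betaP)\norm{\phiBarH^{\kEpoch}(x,a)}_{(\hatCovKH)^{-1}}$, so it suffices to choose $\betaB \ge \betaR + \betaP$.

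The subtle point — and the reason the contracted features matter here — is that the good event $E_2$ only bounds $\norm{(\psiH - \psiKH)\hat{V}^{k}_{h+1}}_{\hatCovKH}$, where $\hat{V}^{k}_{h+1}$ is the \emph{specific} contracted value function produced by the algorithm, rather than a bound holding uniformly over an entire value-function class. A naive attempt would require a union bound over a covering net of all possible value functions, and it is exactly the complexity of this class that the clipping/truncation in prior work aimed to control. Here the covariance $\hatCovKH$ in the norm is the one fixed at the start of the epoch, matching the covariance used to build the least-squares estimator, so the self-normalized concentration bound applies directly to this estimator. The main obstacle I expect is therefore not the Cauchy–Schwarz step itself but justifying that the concentration bound in $E_2$ is legitimately applicable to $\hat{V}^{k}_{h+1}$ — i.e.\ that the contracted value functions admit a low-complexity (covering) representation independent of $k$, which is what permits the high-probability statement defining $E_2$ to be established in the first place (deferred to \cref{lemma:good-event-PO-linear-regular-bonus}).

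Given $E_1$ and $E_2$ as hypotheses, the lemma itself is then a short deterministic computation: expand $\hat{Q}^{k}_h$, cancel the bonus against the two estimation-error terms via Cauchy–Schwarz, and invoke $\betaB \ge \betaR + \betaP$. I would present it as a single displayed chain of inequalities quantified over all $h, k, x, a$, taking care that the value function $\hat{V}^{k}_{h+1}$ feeding the $\psiKH$ operator is the same one appearing in the error bound of $E_2$, so that no additional uniformity over value functions is silently invoked.
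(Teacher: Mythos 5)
Your proposal follows the paper's proof in all essentials: expand $\hat{Q}^{k}_h$ via \cref{eq:value-iteration-Q}, apply Cauchy--Schwarz to the two estimation-error terms, cancel against the bonus, and invoke the choice $\betaB = \betaR + \betaP$. Your framing of the argument as a short deterministic computation once $E_1, E_2$ are assumed, with all the probabilistic and covering-number work deferred to \cref{lemma:good-event-PO-linear-regular-bonus}, also matches the paper exactly.

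One step is mis-stated, however, even though the conclusion survives. You apply Cauchy--Schwarz in the $(\hatCovKH)^{-1}$/$\hatCovKH$ pairing and then assert that the good events bound $\norm{\hatThetaKH - \thetaKH}_{\hatCovKH}$ and $\norm{(\psiKH - \psiH)\hat{V}^{k}_{h+1}}_{\hatCovKH}$, justifying this by saying that $\hatCovKH$ (the epoch-start covariance) ``matches the covariance used to build the least-squares estimator.'' That is not how the algorithm is set up: the least-squares estimators in \cref{eq:LS-estimate-theta} and \cref{eq:LS-estimate-psi} are built with the \emph{current} covariance $\covKH$, and accordingly the events \cref{eq:goodRewardPOLinearRegularBonus,eq:goodTransitionPOLinearRegularBonus} are stated in the $\covKH$ norm; only the bonus uses $(\hatCovKH)^{-1}$. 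The mismatch between the two norms must be bridged explicitly, and the bridge is the monotonicity $\hatCovKH \preceq \covKH$ (the covariance only grows within an epoch, since $\kEpoch \le k$). You can place it on your side of the Cauchy--Schwarz, via $\norm{u}_{\hatCovKH} \le \norm{u}_{\covKH}$, or, as the paper does, pair with $(\covKH)^{-1}$/$\covKH$ and then use $\norm{\phiBarH^{\kEpoch}(x,a)}_{(\covKH)^{-1}} \le \norm{\phiBarH^{\kEpoch}(x,a)}_{(\hatCovKH)^{-1}}$. Either way it is a one-line fix, but as written your appeal to the good events in the $\hatCovKH$ norm is unjustified, and the reason you give for why no such step is needed rests on a misreading of which covariance the estimator uses.
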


\begin{proof}
    We have that
    \begin{align*}
        \hat{Q}^{k}_h(x,a) - \phiBarH^{\kEpoch}(x,a)\tran \brk{ \thetaH + \psiH \hat{V}^{k}_{h+1} } 
        &
        =
        \phiBarH^{\kEpoch}(x,a)\tran \brk{\hatThetaKH -\thetaH + (\psiKH - \psiH)\hat{V}^{k}_{h+1}}
        \\
        &
        \qquad
        -
        \betaB \norm{\phiBarH^{\kEpoch}(x,a)}_{(\hatCovKH)^{-1}}
        \\
        &
        \le
        (\betaR + \betaP)\norm{\phiBarH^{\kEpoch}(x,a)}_{\covKH^{-1}}
        -
        \betaB \norm{\phiBarH^{\kEpoch}(x,a)}_{(\hatCovKH)^{-1}}
        \\
        &
        \le
        (\betaR + \betaP - \betaB)\norm{\phiBarH^{\kEpoch}(x,a)}_{(\hatCovKH)^{-1}}
        =
        0
        ,
    \end{align*}
    where the first relation is by definition of $\hat{Q}^{k}_h$ (\cref{eq:value-iteration-Q} in \cref{alg:r-opo-for-linear-mdp-regular-bonus}), the second relation is by \cref{eq:goodRewardPOLinearRegularBonus,eq:goodTransitionPOLinearRegularBonus} together with Cauchy-Schwarz, the third relation follows since $\hatCovKH \preceq \covKH$ and the last one  is by our choice $\betaB = \betaR + \betaP$ (see \cref{thm:regret-bound-PO-linear-regular-bonus} in \cref{appendix-sec:analysis} for hyper-parameter choices).
\end{proof}
Notice that the standard PO decomposition would have required that we bound the non-contracted expression
$
    \EE[P, \piOpt]\brk[s]{\hat{Q}^{k}_h(x,a)  - \phi(x,a)\tran \brk{ \thetaKH + \psiH \hat{V}^{k}_{h+1} }}
    .
$
In \cite{sherman2023rate} the gap between this argument and that of \cref{lemma:optimism-PO-linear-regular-bonus} can be bounded using \cref{eq:sherman-cost-of-truncation}. However, the equivalent argument for our contraction is \cref{lemma:cost-of-contraction}, which is bounded only for $\piK$ and not for any policy $\pi \in \piClass$.
\begin{lemma}[Cost of optimism]
\label{lemma:cost-of-optimism-PO-linear-regular-bonus}
    Suppose that \cref{eq:goodRewardPOLinearRegularBonus,eq:goodTransitionPOLinearRegularBonus} hold, then for every $k \in [K]$
    \begin{align*}
        V_1^{k,\piK}(x_1)
        -
        \hat{V}^{k}_1(x_1)
        &
        \le
        3(\betaR + \betaP) \EE[P, \piK]\brk[s]*{
        \sum_{h \in [H]} 
        \norm{\phi(x_h,a_h)}_{(\covKH)^{-1}}}
        \\
        &
        \qquad
        +
        16H \betaWarmup^2 \EE[P, \piK]\brk[s]*{\sum_{h \in [H]}  \norm{\phi(x_h,a_h)}_{(\covKH)^{-1}}^2}
        +
        16 H^2 K^{-1}
        .
    \end{align*}
\end{lemma}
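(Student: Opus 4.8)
The plan is to bound the per-episode bias $V_1^{k,\piK}(x_1) - \hat{V}^{k}_1(x_1)$ by unrolling the recursion via the extended value difference lemma (\cref{lemma:extended-value-difference}), expressing the gap as an expectation under the \emph{true} dynamics and policy $\piK$ of the per-step Bellman residuals. Concretely, I would write
\begin{align*}
    V_1^{k,\piK}(x_1) - \hat{V}^{k}_1(x_1)
    =
    \EE[P, \piK]\brk[s]*{\sum_{h \in [H]} \brk{ \phi(x_h,a_h)\tran\brk{\thetaKH + \psiH \hat{V}^{k}_{h+1}} - \hat{Q}^{k}_h(x_h,a_h) } }
    ,
\end{align*}
so that the summand is exactly the negation of the optimism expression from \cref{lemma:optimism-PO-linear-regular-bonus}, but evaluated with the \emph{true} features $\phi$ rather than the contracted features $\phiBarH^{\kEpoch}$. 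The crux is that the clean sign cancellation in \cref{lemma:optimism-PO-linear-regular-bonus} used $\phiBarH^{\kEpoch}$, so here I must pay the price of swapping $\phi$ for $\phiBarH^{\kEpoch}$.

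The key step is to split each summand as $\phi(x_h,a_h)\tran v - \hat{Q}^k_h(x_h,a_h)$ where $v = \thetaKH + \psiH\hat{V}^k_{h+1}$, and insert the contracted features:
\begin{align*}
    \phi(x_h,a_h)\tran v - \hat{Q}^k_h(x_h,a_h)
    =
    \underbrace{(\phi - \phiBarH^{\kEpoch})(x_h,a_h)\tran v}_{\text{contraction gap}}
    +
    \underbrace{\phiBarH^{\kEpoch}(x_h,a_h)\tran v - \hat{Q}^k_h(x_h,a_h)}_{\le 0 \text{ by Lemma } \ref{lemma:optimism-PO-linear-regular-bonus}}
    .
\end{align*}
The second bracket is nonpositive by the optimism lemma, so it can be dropped. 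For the contraction gap I would apply \cref{lemma:cost-of-contraction} with this $v$, which bounds it by $\brk{4\betaWarmup^2\norm{\phi(x_h,a_h)}_{(\covKH)^{-1}}^2 + 2K^{-1}}\abs{\phi(x_h,a_h)\tran v}$. This is precisely where the reward-awareness pays off: \cref{lemma:cost-of-contraction} holds pointwise on the realized trajectory, and since the outer expectation $\EE[P,\piK]$ is taken along trajectories actually generated by $\piK$, the pointwise bound suffices (this is exactly the obstacle flagged after \cref{lemma:optimism-PO-linear-regular-bonus} — the bound fails for arbitrary $\pi$, but here we only need it for $\piK$).

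It then remains to bound $\abs{\phi(x_h,a_h)\tran v}$ by a constant multiple of $H$. Using the triangle inequality, $\abs{\phi\tran v} \le \abs{\phi\tran\thetaKH} + \abs{\phi\tran\psiH\hat{V}^k_{h+1}}$; the first term is a true loss, bounded by $1$, and the second is bounded using $\norm{\abs{\psiH}(\X)} \le \sqrt d$ together with $\norm{\hat{V}^k_{h+1}}_\infty \le 2H$ (from event $E_2$ in \cref{eq:goodTransitionPOLinearRegularBonus}), giving $\abs{\phi\tran v} \lesssim H$. Substituting this into the $4\betaWarmup^2\norm{\cdot}^2$ term yields the $16H\betaWarmup^2$ coefficient and, from the $2K^{-1}$ term summed over $h \in [H]$, the $16H^2K^{-1}$ residual. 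The first term of the bound, $3(\betaR+\betaP)\EE[P,\piK]\sum_h\norm{\phi}_{(\covKH)^{-1}}$, should arise separately: before dropping the second bracket entirely I would instead retain the explicit gap between the contracted-feature Bellman residual and $\hat{Q}^k_h$, which by the computation inside \cref{lemma:optimism-PO-linear-regular-bonus} contributes the $(\betaR+\betaP)\norm{\phiBarH^{\kEpoch}}_{(\covKH)^{-1}}$ estimation error; bounding $\norm{\phiBarH^{\kEpoch}}_{(\covKH)^{-1}} \le \norm{\phi}_{(\covKH)^{-1}}$ (since the contraction coefficient lies in $[0,1]$) and absorbing the constant $\det(\covKH) \preceq 2\det(\hatCovKH)$ conversion factor produces the factor $3$. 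The main obstacle is the careful bookkeeping of these three contributions — ensuring the $\hatCovKH$ versus $\covKH$ norm conversions (via \cref{lemma:matrix-norm-inequality}) are applied consistently and that the constant in front of the estimation-error term comes out as claimed.
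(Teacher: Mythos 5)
Your proposal follows the paper's proof essentially step for step: the same extended value difference decomposition under $(P,\piK)$ via \cref{lemma:extended-value-difference}, the same split of each summand into a contraction gap plus a contracted-feature estimation gap, \cref{lemma:cost-of-contraction} applied with $v=\thetaKH+\psiH\hat{V}^{k}_{h+1}$, and the same final bookkeeping ($\det(\covKH)\le 2\det(\hatCovKH)$ via \cref{lemma:matrix-norm-inequality}, $\betaB=\betaR+\betaP$, and $\sig\in[0,1]$ to replace $\norm{\phiBarH^{\kEpoch}(x_h,a_h)}_{(\covKH)^{-1}}$ by $\norm{\phi(x_h,a_h)}_{(\covKH)^{-1}}$). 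However, two local corrections are needed. The more important one: your intermediate claim that the bracket $\phiBarH^{\kEpoch}(x_h,a_h)\tran v-\hat{Q}^{k}_h(x_h,a_h)$ is ``nonpositive by \cref{lemma:optimism-PO-linear-regular-bonus}, so it can be dropped'' has the sign backwards. Optimism states $\hat{Q}^{k}_h\le\phiBarH^{\kEpoch}\tran\brk{\thetaKH+\psiH\hat{V}^{k}_{h+1}}$, so this bracket is \emph{nonnegative}; discarding it from an upper bound would be invalid, and would also lose the $3(\betaR+\betaP)$ term entirely. Your subsequent self-correction --- retaining the bracket and bounding it by $3(\betaR+\betaP)\norm{\phi(x_h,a_h)}_{(\covKH)^{-1}}$ via the computation inside the optimism lemma --- is exactly what the paper does, and is the version that must stand; the earlier ``drop it'' sentence should be deleted.

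The second correction concerns your justification of $\abs{\phi(x_h,a_h)\tran v}\lesssim H$. The normalization $\norm{\abs{\psiH}(\X)}\le\sqrt d$ only yields $\abs{\phi(x_h,a_h)\tran\psiH\hat{V}^{k}_{h+1}}\le 2H\sqrt{d}$, which would inflate the middle term of the bound to order $H\sqrt{d}\,\betaWarmup^2$ and fail to match the stated constant $16H\betaWarmup^2$. Instead, use that $\phi(x_h,a_h)\tran\psiH(\cdot)=P_h(\cdot\mid x_h,a_h)$ is a genuine probability distribution, so $\abs{\phi(x_h,a_h)\tran\psiH\hat{V}^{k}_{h+1}}\le\norm{\hat{V}^{k}_{h+1}}_\infty\le\norm{\hat{Q}^{k}_{h+1}}_\infty\le 2H$ by \cref{eq:goodTransitionPOLinearRegularBonus}, giving $\abs{\phi(x_h,a_h)\tran v}\le 1+2H\le 4H$ as in the paper; this is what produces the constants $16H\betaWarmup^2$ and $16H^2K^{-1}$ in the statement. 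With these two fixes your argument coincides with the paper's proof.
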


\begin{proof}
    First, by \cref{lemma:extended-value-difference} in \cref{sec:technical}, a value difference lemma by \cite{shani2020optimistic},
    \begin{align*}
        &
        V_1^{k,\piK}(x_1)
        -
        \hat{V}^{k}_1(x_1)
        =
        \EE[P, \piK]\brk[s]*{
        \sum_{h \in [H]}  \phi(x_h,a_h)\tran \brk*{ \thetaH  + \psiH\hat{V}_{h+1}^{k} } - \hat{Q}_k^{k}(x_h,a_h) }
        .
    \end{align*}
    Now, using \cref{lemma:cost-of-contraction} with $v = \thetaKH + \psiH \hat{V}_{h+1}^k$ we have that $\abs{\phi(x,a)\tran v} \le 4H$ (by \cref{eq:goodTransitionPOLinearRegularBonus}) and thus
    \begin{align*}
        \brk[s]{\phi(x_h,a_h) - \phiBarH^{\kEpoch}(x_h,a_h)}\tran \brk*{ \thetaH  + \psiH\hat{V}_{h+1}^{k} }
        \le
        16H \betaWarmup^2 \norm{\phi(x_h,a_h)}_{(\covKH)^{-1}}^2
        +
        16 H^2 K^{-1}
        .
    \end{align*}
    We can thus conclude the proof using standard arguments to show that
    \begin{align*}
        \phiBarH^{\kEpoch}(x_h,&a_h)\tran \brk*{ \thetaH  + \psiH\hat{V}_{h+1}^{k} } - \hat{Q}_k^{k}(x_h,a_h)
        \\
        \tag{\cref{eq:value-iteration-Q}}
        &
        =
        \phiBarH^{\kEpoch}(x_h,a_h)\tran \brk*{\thetaKH - \hatThetaKH + (\psiH - \psiKH)\hat{V}_{h+1}^{k} } 
        +
        \betaB \norm{\phiBarH^{\kEpoch}(x_h,a_h)}_{(\hatCovKH)^{-1}}
        \\
        \tag{Cauchy-Schwarz, \cref{eq:goodRewardPOLinearRegularBonus,eq:goodTransitionPOLinearRegularBonus}}
        &
        \le
        (\betaR + \betaP)
        \norm{\phiBarH^{\kEpoch}(x_h,a_h)}_{(\covKH)^{-1}}
        +
        \betaB \norm{\phiBarH^{\kEpoch}(x_h,a_h)}_{(\hatCovKH)^{-1}}
        \\
        \tag{$\det(\covKH) \le 2 \det(\hatCovKH), \betaB = \betaR + \betaP$}
        &
        \le
        3(\betaR + \betaP)
        \norm{\phiBarH^{\kEpoch}(x_h,a_h)}_{(\covKH)^{-1}}
        \\
        \tag{$\sig(x) \in [0,1], \forall x \in \RR$}
        &
        \le
        3(\betaR + \betaP)
        \norm{\phi(x_h,a_h)}_{(\covKH)^{-1}}
        ,
    \end{align*}
    as desired.
\end{proof}

\begin{ack}
    This project has received funding from the European Research Council (ERC) under the European Union’s Horizon 2020 research and innovation program (grant agreement No. 101078075).
    Views and opinions expressed are however those of the author(s) only and do not necessarily reflect those of the European Union or the European Research Council. Neither the European Union nor the granting authority can be held responsible for them. 
    This work received additional support from the Israel Science Foundation (ISF, grant number 2549/19), the Len Blavatnik and the Blavatnik Family Foundation, and the Israeli VATAT data science scholarship.
\end{ack}

\bibliography{bibliography}

\begin{thebibliography}{56}
\providecommand{\natexlab}[1]{#1}
\providecommand{\url}[1]{\texttt{#1}}
\expandafter\ifx\csname urlstyle\endcsname\relax
  \providecommand{\doi}[1]{doi: #1}\else
  \providecommand{\doi}{doi: \begingroup \urlstyle{rm}\Url}\fi

\bibitem[Abbasi-Yadkori et~al.(2011)Abbasi-Yadkori, P{\'a}l, and Szepesv{\'a}ri]{abbasi2011improved}
Y.~Abbasi-Yadkori, D.~P{\'a}l, and C.~Szepesv{\'a}ri.
\newblock Improved algorithms for linear stochastic bandits.
\newblock In \emph{Advances in Neural Information Processing Systems}, pages 2312--2320, 2011.

\bibitem[Agarwal et~al.(2023)Agarwal, Jin, and Zhang]{agarwal2023vo}
A.~Agarwal, Y.~Jin, and T.~Zhang.
\newblock Vo $ q $ l: Towards optimal regret in model-free rl with nonlinear function approximation.
\newblock In \emph{The Thirty Sixth Annual Conference on Learning Theory}, pages 987--1063. PMLR, 2023.

\bibitem[Ayoub et~al.(2020)Ayoub, Jia, Szepesvari, Wang, and Yang]{ayoub2020model}
A.~Ayoub, Z.~Jia, C.~Szepesvari, M.~Wang, and L.~Yang.
\newblock Model-based reinforcement learning with value-targeted regression.
\newblock In \emph{International Conference on Machine Learning}, pages 463--474. PMLR, 2020.

\bibitem[Azar et~al.(2017)Azar, Osband, and Munos]{azar2017minimax}
M.~G. Azar, I.~Osband, and R.~Munos.
\newblock Minimax regret bounds for reinforcement learning.
\newblock In \emph{International Conference on Machine Learning}, pages 263--272. PMLR, 2017.

\bibitem[Cai et~al.(2020)Cai, Yang, Jin, and Wang]{cai2020provably}
Q.~Cai, Z.~Yang, C.~Jin, and Z.~Wang.
\newblock Provably efficient exploration in policy optimization.
\newblock In \emph{International Conference on Machine Learning}, pages 1283--1294. PMLR, 2020.

\bibitem[Cassel et~al.(2024)Cassel, Luo, Rosenberg, and Sotnikov]{cassel2024near}
A.~Cassel, H.~Luo, A.~Rosenberg, and D.~Sotnikov.
\newblock Near-optimal regret in linear mdps with aggregate bandit feedback.
\newblock \emph{arXiv preprint arXiv:2405.07637}, 2024.

\bibitem[Cohen et~al.(2019)Cohen, Koren, and Mansour]{cohen2019learning}
A.~Cohen, T.~Koren, and Y.~Mansour.
\newblock Learning linear-quadratic regulators efficiently with only $\sqrt{T}$ regret.
\newblock In \emph{International Conference on Machine Learning}, pages 1300--1309, 2019.

\bibitem[Dai et~al.(2023)Dai, Luo, Wei, and Zimmert]{dai2023refined}
Y.~Dai, H.~Luo, C.-Y. Wei, and J.~Zimmert.
\newblock Refined regret for adversarial mdps with linear function approximation.
\newblock In \emph{International Conference on Machine Learning}, pages 6726--6759. PMLR, 2023.

\bibitem[Dann et~al.(2023)Dann, Wei, and Zimmert]{dann2023best}
C.~Dann, C.-Y. Wei, and J.~Zimmert.
\newblock Best of both worlds policy optimization.
\newblock In \emph{International Conference on Machine Learning}, pages 6968--7008. PMLR, 2023.

\bibitem[Dong et~al.(2020)Dong, Peng, Wang, and Zhou]{dong2020root}
K.~Dong, J.~Peng, Y.~Wang, and Y.~Zhou.
\newblock Root-n-regret for learning in markov decision processes with function approximation and low bellman rank.
\newblock In \emph{Conference on Learning Theory}, pages 1554--1557. PMLR, 2020.

\bibitem[Du et~al.(2021)Du, Kakade, Lee, Lovett, Mahajan, Sun, and Wang]{du2021bilinear}
S.~Du, S.~Kakade, J.~Lee, S.~Lovett, G.~Mahajan, W.~Sun, and R.~Wang.
\newblock Bilinear classes: A structural framework for provable generalization in rl.
\newblock In \emph{International Conference on Machine Learning}, pages 2826--2836. PMLR, 2021.

\bibitem[Even-Dar et~al.(2009)Even-Dar, Kakade, and Mansour]{even2009online}
E.~Even-Dar, S.~M. Kakade, and Y.~Mansour.
\newblock Online markov decision processes.
\newblock \emph{Mathematics of Operations Research}, 34\penalty0 (3):\penalty0 726--736, 2009.

\bibitem[Haarnoja et~al.(2018)Haarnoja, Zhou, Abbeel, and Levine]{haarnoja2018soft}
T.~Haarnoja, A.~Zhou, P.~Abbeel, and S.~Levine.
\newblock Soft actor-critic: Off-policy maximum entropy deep reinforcement learning with a stochastic actor.
\newblock In \emph{International Conference on Machine Learning}, pages 1861--1870. PMLR, 2018.

\bibitem[He et~al.(2022)He, Zhou, and Gu]{he2022near}
J.~He, D.~Zhou, and Q.~Gu.
\newblock Near-optimal policy optimization algorithms for learning adversarial linear mixture mdps.
\newblock In \emph{International Conference on Artificial Intelligence and Statistics}, pages 4259--4280. PMLR, 2022.

\bibitem[He et~al.(2023)He, Zhao, Zhou, and Gu]{he2023nearly}
J.~He, H.~Zhao, D.~Zhou, and Q.~Gu.
\newblock Nearly minimax optimal reinforcement learning for linear markov decision processes.
\newblock In \emph{International Conference on Machine Learning}, pages 12790--12822. PMLR, 2023.

\bibitem[Hu et~al.(2022)Hu, Chen, and Huang]{hu2022nearly}
P.~Hu, Y.~Chen, and L.~Huang.
\newblock Nearly minimax optimal reinforcement learning with linear function approximation.
\newblock In \emph{International Conference on Machine Learning}, pages 8971--9019. PMLR, 2022.

\bibitem[Jaksch et~al.(2010)Jaksch, Ortner, and Auer]{jaksch2010near}
T.~Jaksch, R.~Ortner, and P.~Auer.
\newblock Near-optimal regret bounds for reinforcement learning.
\newblock \emph{Journal of Machine Learning Research}, 11:\penalty0 1563--1600, 2010.

\bibitem[Jiang et~al.(2017)Jiang, Krishnamurthy, Agarwal, Langford, and Schapire]{jiang2017contextual}
N.~Jiang, A.~Krishnamurthy, A.~Agarwal, J.~Langford, and R.~E. Schapire.
\newblock Contextual decision processes with low bellman rank are pac-learnable.
\newblock In \emph{International Conference on Machine Learning}, pages 1704--1713. PMLR, 2017.

\bibitem[Jin et~al.(2018)Jin, Allen-Zhu, Bubeck, and Jordan]{jin2018q}
C.~Jin, Z.~Allen-Zhu, S.~Bubeck, and M.~I. Jordan.
\newblock Is q-learning provably efficient?
\newblock \emph{Advances in neural information processing systems}, 31, 2018.

\bibitem[Jin et~al.(2020{\natexlab{a}})Jin, Jin, Luo, Sra, and Yu]{jin2020learning}
C.~Jin, T.~Jin, H.~Luo, S.~Sra, and T.~Yu.
\newblock Learning adversarial markov decision processes with bandit feedback and unknown transition.
\newblock In \emph{International Conference on Machine Learning}, pages 4860--4869. PMLR, 2020{\natexlab{a}}.

\bibitem[Jin et~al.(2020{\natexlab{b}})Jin, Yang, Wang, and Jordan]{jin2020provably}
C.~Jin, Z.~Yang, Z.~Wang, and M.~I. Jordan.
\newblock Provably efficient reinforcement learning with linear function approximation.
\newblock In \emph{Conference on Learning Theory}, pages 2137--2143. PMLR, 2020{\natexlab{b}}.

\bibitem[Jin et~al.(2021)Jin, Liu, and Miryoosefi]{jin2021bellman}
C.~Jin, Q.~Liu, and S.~Miryoosefi.
\newblock Bellman eluder dimension: New rich classes of rl problems, and sample-efficient algorithms.
\newblock \emph{Advances in neural information processing systems}, 34:\penalty0 13406--13418, 2021.

\bibitem[Kong et~al.(2023)Kong, Zhang, Wang, and Li]{kong2023improved}
F.~Kong, X.~Zhang, B.~Wang, and S.~Li.
\newblock Improved regret bounds for linear adversarial mdps via linear optimization.
\newblock \emph{arXiv preprint arXiv:2302.06834}, 2023.

\bibitem[Lancewicki et~al.(2022)Lancewicki, Rosenberg, and Mansour]{lancewicki2020learning}
T.~Lancewicki, A.~Rosenberg, and Y.~Mansour.
\newblock Learning adversarial markov decision processes with delayed feedback.
\newblock In \emph{Proceedings of the AAAI Conference on Artificial Intelligence}, volume~36, pages 7281--7289, 2022.

\bibitem[Lancewicki et~al.(2023)Lancewicki, Rosenberg, and Sotnikov]{lancewicki2023learning}
T.~Lancewicki, A.~Rosenberg, and D.~Sotnikov.
\newblock Delay-adapted policy optimization and improved regret for adversarial {MDP} with delayed bandit feedback.
\newblock In A.~Krause, E.~Brunskill, K.~Cho, B.~Engelhardt, S.~Sabato, and J.~Scarlett, editors, \emph{International Conference on Machine Learning, {ICML} 2023, 23-29 July 2023, Honolulu, Hawaii, {USA}}, volume 202 of \emph{Proceedings of Machine Learning Research}, pages 18482--18534. {PMLR}, 2023.

\bibitem[Liu et~al.(2023)Liu, Wei, and Zimmert]{liu2023towards}
H.~Liu, C.-Y. Wei, and J.~Zimmert.
\newblock Towards optimal regret in adversarial linear mdps with bandit feedback.
\newblock \emph{arXiv preprint arXiv:2310.11550}, 2023.

\bibitem[Luo et~al.(2021)Luo, Wei, and Lee]{luo2021policy}
H.~Luo, C.-Y. Wei, and C.-W. Lee.
\newblock Policy optimization in adversarial mdps: Improved exploration via dilated bonuses.
\newblock \emph{Advances in Neural Information Processing Systems}, 34, 2021.

\bibitem[Mnih et~al.(2015)Mnih, Kavukcuoglu, Silver, Rusu, Veness, Bellemare, Graves, Riedmiller, Fidjeland, Ostrovski, et~al.]{mnih2015human}
V.~Mnih, K.~Kavukcuoglu, D.~Silver, A.~A. Rusu, J.~Veness, M.~G. Bellemare, A.~Graves, M.~Riedmiller, A.~K. Fidjeland, G.~Ostrovski, et~al.
\newblock Human-level control through deep reinforcement learning.
\newblock \emph{nature}, 518\penalty0 (7540):\penalty0 529--533, 2015.

\bibitem[Modi et~al.(2020)Modi, Jiang, Tewari, and Singh]{modi2020sample}
A.~Modi, N.~Jiang, A.~Tewari, and S.~Singh.
\newblock Sample complexity of reinforcement learning using linearly combined model ensembles.
\newblock In \emph{International Conference on Artificial Intelligence and Statistics}, pages 2010--2020. PMLR, 2020.

\bibitem[Munos(2005)]{munos2005error}
R.~Munos.
\newblock Error bounds for approximate value iteration.
\newblock In \emph{Proceedings of the National Conference on Artificial Intelligence}, volume~20, page 1006. Menlo Park, CA; Cambridge, MA; London; AAAI Press; MIT Press; 1999, 2005.

\bibitem[Neu and Olkhovskaya(2021)]{neu2021online}
G.~Neu and J.~Olkhovskaya.
\newblock Online learning in mdps with linear function approximation and bandit feedback.
\newblock \emph{Advances in Neural Information Processing Systems}, 34:\penalty0 10407--10417, 2021.

\bibitem[Neu et~al.(2010{\natexlab{a}})Neu, Gy{\"{o}}rgy, Szepesv{\'{a}}ri, and Antos]{neu2010on}
G.~Neu, A.~Gy{\"{o}}rgy, C.~Szepesv{\'{a}}ri, and A.~Antos.
\newblock Online markov decision processes under bandit feedback.
\newblock In J.~D. Lafferty, C.~K.~I. Williams, J.~Shawe{-}Taylor, R.~S. Zemel, and A.~Culotta, editors, \emph{Advances in Neural Information Processing Systems 23: 24th Annual Conference on Neural Information Processing Systems 2010. Proceedings of a meeting held 6-9 December 2010, Vancouver, British Columbia, Canada}, pages 1804--1812. Curran Associates, Inc., 2010{\natexlab{a}}.

\bibitem[Neu et~al.(2010{\natexlab{b}})Neu, Gy{\"o}rgy, Szepesv{\'a}ri, et~al.]{neu2010online}
G.~Neu, A.~Gy{\"o}rgy, C.~Szepesv{\'a}ri, et~al.
\newblock The online loop-free stochastic shortest-path problem.
\newblock In \emph{COLT}, volume 2010, pages 231--243. Citeseer, 2010{\natexlab{b}}.

\bibitem[Ouyang et~al.(2022)Ouyang, Wu, Jiang, Almeida, Wainwright, Mishkin, Zhang, Agarwal, Slama, Ray, et~al.]{ouyang2022training}
L.~Ouyang, J.~Wu, X.~Jiang, D.~Almeida, C.~Wainwright, P.~Mishkin, C.~Zhang, S.~Agarwal, K.~Slama, A.~Ray, et~al.
\newblock Training language models to follow instructions with human feedback.
\newblock \emph{Advances in Neural Information Processing Systems}, 35:\penalty0 27730--27744, 2022.

\bibitem[Rosenberg and Mansour(2019{\natexlab{a}})]{rosenberg2019bandit}
A.~Rosenberg and Y.~Mansour.
\newblock Online stochastic shortest path with bandit feedback and unknown transition function.
\newblock In \emph{Advances in Neural Information Processing Systems}, pages 2209--2218, 2019{\natexlab{a}}.

\bibitem[Rosenberg and Mansour(2019{\natexlab{b}})]{rosenberg2019online}
A.~Rosenberg and Y.~Mansour.
\newblock Online convex optimization in adversarial markov decision processes.
\newblock In \emph{International Conference on Machine Learning}, pages 5478--5486. PMLR, 2019{\natexlab{b}}.

\bibitem[Rosenberg et~al.(2020)Rosenberg, Cohen, Mansour, and Kaplan]{rosenberg2020near}
A.~Rosenberg, A.~Cohen, Y.~Mansour, and H.~Kaplan.
\newblock Near-optimal regret bounds for stochastic shortest path.
\newblock In \emph{International Conference on Machine Learning}, pages 8210--8219. PMLR, 2020.

\bibitem[Schulman et~al.(2015)Schulman, Levine, Abbeel, Jordan, and Moritz]{schulman2015trust}
J.~Schulman, S.~Levine, P.~Abbeel, M.~Jordan, and P.~Moritz.
\newblock Trust region policy optimization.
\newblock In \emph{International conference on machine learning}, pages 1889--1897. PMLR, 2015.

\bibitem[Schulman et~al.(2017)Schulman, Wolski, Dhariwal, Radford, and Klimov]{schulman2017proximal}
J.~Schulman, F.~Wolski, P.~Dhariwal, A.~Radford, and O.~Klimov.
\newblock Proximal policy optimization algorithms.
\newblock \emph{arXiv preprint arXiv:1707.06347}, 2017.

\bibitem[Shani et~al.(2020)Shani, Efroni, Rosenberg, and Mannor]{shani2020optimistic}
L.~Shani, Y.~Efroni, A.~Rosenberg, and S.~Mannor.
\newblock Optimistic policy optimization with bandit feedback.
\newblock In \emph{International Conference on Machine Learning}, pages 8604--8613. PMLR, 2020.

\bibitem[Sherman et~al.(2023{\natexlab{a}})Sherman, Cohen, Koren, and Mansour]{sherman2023rate}
U.~Sherman, A.~Cohen, T.~Koren, and Y.~Mansour.
\newblock Rate-optimal policy optimization for linear markov decision processes.
\newblock \emph{arXiv preprint arXiv:2308.14642}, 2023{\natexlab{a}}.

\bibitem[Sherman et~al.(2023{\natexlab{b}})Sherman, Koren, and Mansour]{sherman2023improved}
U.~Sherman, T.~Koren, and Y.~Mansour.
\newblock Improved regret for efficient online reinforcement learning with linear function approximation.
\newblock In \emph{International Conference on Machine Learning}, pages 31117--31150. PMLR, 2023{\natexlab{b}}.

\bibitem[Stiennon et~al.(2020)Stiennon, Ouyang, Wu, Ziegler, Lowe, Voss, Radford, Amodei, and Christiano]{stiennon2020learning}
N.~Stiennon, L.~Ouyang, J.~Wu, D.~Ziegler, R.~Lowe, C.~Voss, A.~Radford, D.~Amodei, and P.~F. Christiano.
\newblock Learning to summarize with human feedback.
\newblock \emph{Advances in Neural Information Processing Systems}, 33:\penalty0 3008--3021, 2020.

\bibitem[Wagenmaker et~al.(2022{\natexlab{a}})Wagenmaker, Chen, Simchowitz, Du, and Jamieson]{wagenmaker2022first}
A.~J. Wagenmaker, Y.~Chen, M.~Simchowitz, S.~Du, and K.~Jamieson.
\newblock First-order regret in reinforcement learning with linear function approximation: A robust estimation approach.
\newblock In \emph{International Conference on Machine Learning}, pages 22384--22429. PMLR, 2022{\natexlab{a}}.

\bibitem[Wagenmaker et~al.(2022{\natexlab{b}})Wagenmaker, Chen, Simchowitz, Du, and Jamieson]{wagenmaker2022reward}
A.~J. Wagenmaker, Y.~Chen, M.~Simchowitz, S.~Du, and K.~Jamieson.
\newblock Reward-free rl is no harder than reward-aware rl in linear markov decision processes.
\newblock In \emph{International Conference on Machine Learning}, pages 22430--22456. PMLR, 2022{\natexlab{b}}.

\bibitem[Wu et~al.(2022)Wu, Yang, Zhong, Wang, Du, and Jiao]{wu2022nearly}
T.~Wu, Y.~Yang, H.~Zhong, L.~Wang, S.~Du, and J.~Jiao.
\newblock Nearly optimal policy optimization with stable at any time guarantee.
\newblock In \emph{International Conference on Machine Learning}, pages 24243--24265. PMLR, 2022.

\bibitem[Zanette and Brunskill(2019)]{zanette2019tighter}
A.~Zanette and E.~Brunskill.
\newblock Tighter problem-dependent regret bounds in reinforcement learning without domain knowledge using value function bounds.
\newblock In \emph{International Conference on Machine Learning}, pages 7304--7312. PMLR, 2019.

\bibitem[Zanette et~al.(2020)Zanette, Brandfonbrener, Brunskill, Pirotta, and Lazaric]{zanette2020frequentist}
A.~Zanette, D.~Brandfonbrener, E.~Brunskill, M.~Pirotta, and A.~Lazaric.
\newblock Frequentist regret bounds for randomized least-squares value iteration.
\newblock In \emph{International Conference on Artificial Intelligence and Statistics}, pages 1954--1964. PMLR, 2020.

\bibitem[Zhang et~al.(2020)Zhang, Zhou, and Ji]{zhang2020almost}
Z.~Zhang, Y.~Zhou, and X.~Ji.
\newblock Almost optimal model-free reinforcement learningvia reference-advantage decomposition.
\newblock \emph{Advances in Neural Information Processing Systems}, 33:\penalty0 15198--15207, 2020.

\bibitem[Zhang et~al.(2021)Zhang, Yang, Ji, and Du]{zhang2021improved}
Z.~Zhang, J.~Yang, X.~Ji, and S.~S. Du.
\newblock Improved variance-aware confidence sets for linear bandits and linear mixture mdp.
\newblock \emph{Advances in Neural Information Processing Systems}, 34:\penalty0 4342--4355, 2021.

\bibitem[Zhang et~al.(2024)Zhang, Lee, Chen, and Du]{zhang2024horizon}
Z.~Zhang, J.~D. Lee, Y.~Chen, and S.~S. Du.
\newblock Horizon-free regret for linear markov decision processes.
\newblock \emph{arXiv preprint arXiv:2403.10738}, 2024.

\bibitem[Zhong and Zhang(2023)]{zhong2023theoretical}
H.~Zhong and T.~Zhang.
\newblock A theoretical analysis of optimistic proximal policy optimization in linear markov decision processes.
\newblock \emph{Advances in Neural Information Processing Systems}, 36, 2023.

\bibitem[Zhou and Gu(2022)]{zhou2022computationally}
D.~Zhou and Q.~Gu.
\newblock Computationally efficient horizon-free reinforcement learning for linear mixture mdps.
\newblock \emph{Advances in neural information processing systems}, 35:\penalty0 36337--36349, 2022.

\bibitem[Zhou et~al.(2021{\natexlab{a}})Zhou, Gu, and Szepesvari]{zhou2021nearly}
D.~Zhou, Q.~Gu, and C.~Szepesvari.
\newblock Nearly minimax optimal reinforcement learning for linear mixture markov decision processes.
\newblock In \emph{Conference on Learning Theory}, pages 4532--4576. PMLR, 2021{\natexlab{a}}.

\bibitem[Zhou et~al.(2021{\natexlab{b}})Zhou, He, and Gu]{zhou2021provably}
D.~Zhou, J.~He, and Q.~Gu.
\newblock Provably efficient reinforcement learning for discounted mdps with feature mapping.
\newblock In \emph{International Conference on Machine Learning}, pages 12793--12802. PMLR, 2021{\natexlab{b}}.

\bibitem[Zimin and Neu(2013)]{zimin2013online}
A.~Zimin and G.~Neu.
\newblock Online learning in episodic markovian decision processes by relative entropy policy search.
\newblock In \emph{Advances in Neural Information Processing Systems 26: 27th Annual Conference on Neural Information Processing Systems 2013. Proceedings of a meeting held December 5-8, 2013, Lake Tahoe, Nevada, United States}, pages 1583--1591, 2013.

\end{thebibliography}
\bibliographystyle{abbrvnat}


\newpage
\appendix

\section{Analysis}
\label{appendix-sec:analysis}

We begin by defining a so-called ``good event'', followed by optimism, cost of optimism, and Policy Optimization cost. We conclude with the proof of \cref{thm:regret-bound-PO-linear-regular-bonus}.

\paragraph{Good event.}
We define the following good event $\Egood = \bigcap_{i=1}^3 E_i$, over which the regret is deterministically bounded:
\begin{flalign}
    \nonumber
    \tag{\cref{eq:goodRewardPOLinearRegularBonus}}
     &
     E_1 = \brk[c]*{\forall k \in [K], h \in [H] : \norm{\thetaKH - \hatThetaKH}_{\covKH} \le \betaR};
     &
     \\
     \nonumber
     \tag{\cref{eq:goodTransitionPOLinearRegularBonus}}
     &
     E_2 = \brk[c]*{k \in [K] , h \in [H] : \norm{(\psiH - \psiKH) \hat{V}_{h+1}^{k}}_{\covKH} \le \betaP, \norm{\hat{Q}_{h+1}^{k}}_\infty \le \betaQ};
     &
    \\
    \label{eq:goodBernsteinPOLinearRegularBonus}
    &
    E_3 
    =
    \brk[c]*{
    \sum_{k \in [K]} \EE[P, \piK] \brk[s]*{Y_k} \le \sum_{k \in [K]} 2 Y_k 
    +
    4 H (3(\betaR + \betaP) + 4 \betaQ \betaWarmup^2)
    \log \frac{6}{\delta}
    }.
    &
\end{flalign}
where
$
    Y_k
    =
        \sum_{h \in [H]} 
        3(\betaR + \betaP)
        \norm{\phi(x_h,a_h)}_{(\covKH)^{-1}}
        +
        4 \betaQ \betaWarmup^2 \norm{\phi(x_h,a_h)}_{(\covKH)^{-1}}^2
    .
$

\begin{lemma}[Good event]
\label{lemma:good-event-PO-linear-regular-bonus}
    Consider the parameter setting of \cref{thm:regret-bound-PO-linear-regular-bonus}.
    If $\etaO \le 1, \betaWarmup^2 \le K / (32 H d)$ then $\Pr[\Egood] \ge 1 - \delta.$
\end{lemma}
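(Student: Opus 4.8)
The plan is to prove each of the three constituent events $E_1, E_2, E_3$ to hold with probability at least $1-\delta/3$ and combine them by a union bound (the factor $6$ in \cref{eq:goodBernsteinPOLinearRegularBonus} absorbing the internal two-sided split of the Bernstein step). Event $E_1$ is immediate: under full-information feedback $\hatThetaKH = \thetaKH$ by \cref{eq:LS-estimate-theta}, so $E_1$ holds deterministically with $\betaR = 0$; under bandit feedback it is the standard self-normalized tail bound for the regularized least-squares estimator \citep{abbasi2011improved}, applied to \cref{eq:LS-estimate-theta} and union-bounded over the $KH$ pairs $(k,h)$, which yields $\betaR = O(\sqrt{d\log(KH/\delta)})$.

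The heart of the argument is $E_2$, and this is where the contraction replaces clipping. I would first establish the boundedness clause $\norm{\hat{Q}_{h+1}^{k}}_\infty \le \betaQ$ (with $\betaQ = O(H)$) by backward induction on $h$. Expanding $\hat{Q}_h^{k}$ through \cref{eq:value-iteration-Q}, both the backup term $\phiBarH^{\kEpoch}(x,a)\tran(\hatThetaKH + \psiKH\hat{V}_{h+1}^{k})$ and the bonus $\betaB\norm{\phiBarH^{\kEpoch}(x,a)}_{(\hatCovKH)^{-1}}$ carry the contracted features, so the relevant quantity is $\rho_h^{\kEpoch}(x,a)\norm{\phi(x,a)}_{(\hatCovKH)^{-1}}$; since the contraction coefficient $\sig(-\betaWarmup\norm{\phi(x,a)}_{(\hatCovKH)^{-1}} + \log K)$ decays faster than its argument grows, this product stays bounded, keeping $\hat{Q}_h^{k}$ within $[-\betaQ,\betaQ]$ without ever invoking the $[0,H]$ clipping that would inflate the policy-class complexity. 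The assumption $\betaWarmup^2 \le K/(32Hd)$ enters here to close the induction and keep the cumulative contraction effect (the terms that reappear in \cref{lemma:cost-of-optimism-PO-linear-regular-bonus}) of lower order, and $\etaO \le 1$ keeps the OMD iterates, hence the induced value functions $\hat{V}_{h+1}^{k}$, inside a bounded class. With boundedness in hand, the estimation clause $\norm{(\psiH - \psiKH)\hat{V}_{h+1}^{k}}_{\covKH} \le \betaP$ follows from a uniform self-normalized bound: because $\hat{V}_{h+1}^{k}$ depends on the data, I would cover the class of value functions it can realize --- parameterized by a bounded weight vector together with the bonus and contraction scalars, as in \cite{jin2020provably,sherman2023rate} --- apply the self-normalized inequality at each net point, and pay a $\log$-covering-number factor, giving $\betaP = O(Hd\sqrt{\log(KH/\delta)})$.

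Finally, $E_3$ is a martingale concentration statement. Conditioned on $E_2$, each $Y_k$ is a nonnegative random variable whose realized value and conditional mean $\EE[P,\piK][Y_k]$ coincide with the dominant part of the per-episode bound in \cref{lemma:cost-of-optimism-PO-linear-regular-bonus}; since $\covKH \succeq I$ forces $\norm{\phi}_{(\covKH)^{-1}} \le 1$, the range of $Y_k$ is $O(H(3(\betaR+\betaP) + 4\betaQ\betaWarmup^2))$. Then \cref{eq:goodBernsteinPOLinearRegularBonus} is exactly the conclusion of a multiplicative Freedman/Bernstein inequality applied to the bounded martingale differences $\EE[P,\piK][Y_k] - Y_k$, namely that with probability at least $1-\delta/3$ one has $\sum_{k} \EE[P,\piK][Y_k] \le 2\sum_{k} Y_k + (\text{range})\log(6/\delta)$.

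I expect the $E_2$ step to be the main obstacle, and within it the simultaneous control, uniform over the data-dependent value-function class, of both the exploration bonus and the least-squares backup by the single sigmoid contraction. Verifying that this contraction makes the covering number polynomial and essentially independent of the episode index $k$ is precisely the property that ordinary clipping fails to deliver and that the whole construction is designed to recover; the $E_1$ and $E_3$ steps are then routine applications of self-normalized and Freedman-type bounds.
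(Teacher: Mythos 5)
Your treatment of $E_1$ and $E_3$ matches the paper's proof: $E_1$ via \cref{lemma:reward-error} (trivial under full information), and $E_3$ via the multiplicative Bernstein bound (\cref{lemma:multiplicative-concentration}). One small correction on $E_3$: no conditioning on $E_2$ is needed or wanted there — the range bound on $Y_k$ is deterministic because $\norm{\phi_h^k}_{(\covKH)^{-1}} \le 1$ always, and conditioning on a non-adapted event would disturb the filtration structure that the martingale inequality requires. The genuine gap is in your two-phase plan for $E_2$: you propose to \emph{first} establish $\norm{\hat{Q}_h^k}_\infty \le \betaQ$ by backward induction using only the contraction, and \emph{then}, ``with boundedness in hand,'' prove the estimation clause by a covering argument. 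This ordering is circular. The contraction only yields $\norm{\phiBarH^{\kEpoch}(x,a)}_{(\hatCovKH)^{-1}} = \rho_h^{\kEpoch}(x,a)\norm{\phi(x,a)}_{(\hatCovKH)^{-1}} \le 2\log K/\betaWarmup$ (\cref{lemma:logistic-to-linear}); this controls the bonus term and any term already paired with a $\covKH$-norm estimation error. It does \emph{not} control the backup term $\phiBarH^{\kEpoch}(x,a)\tran(\hatThetaKH + \psiKH\hat{V}^k_{h+1})$ by itself: deterministically, $\norm{\hatThetaKH + \psiKH\hat{V}^k_{h+1}}_{\hatCovKH}$ can only be bounded by a quantity polynomial in $K$ (of order $\betaQ\sqrt{Kd\log K}$ via the elliptical potential), so Cauchy--Schwarz against $\rho\norm{\phi}_{(\hatCovKH)^{-1}}$ gives a bound growing like $\sqrt{K}$, not $O(H)$. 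To get $O(H)$ boundedness one must decompose the weight into the true backup $\thetaH + \psiH\hat{V}^k_{h+1}$ (bounded by $1 + \norm{\hat{V}^k_{h+1}}_\infty$) plus estimation errors, and bound the latter by $(\betaR + \betaP)\norm{\phiBarH^{\kEpoch}(x,a)}_{(\hatCovKH)^{-1}}$ — that is, boundedness at level $h$ already \emph{consumes} the estimation clause for the data-dependent $\hat{V}^k_{h+1}$. So boundedness and estimation cannot be proved in two sequential phases; they are intertwined.

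The paper breaks this circularity with a proxy good event $\EgoodBar$ (\cref{lemma:good-proxy-event-PO-regular-bonus}): the event $\bar{E}_2$ of \cref{eq:goodTransitionProxPOLinearRegularBonus} asserts the self-normalized bound \emph{uniformly over a fixed, data-independent parametric class} $\widehat{\V}(C_\beta, C_w, C_Q)$ (weight vector, bonus scalar, covariance matrix, and contraction scalar), proved unconditionally via the covering bound of \cref{lemma:regular-bonus-function-class-covering-number}. This is also where the hypothesis $\betaWarmup^2 \le K/(32Hd)$ is actually used — to keep the log-covering number of order $d^2 \log K$ so that $\betaP$ has the stated size — not to ``close the induction'' or control the cost-of-optimism terms, as you suggest. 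Then a separate argument (\cref{lemma:value-in-class-regular-bonus} and the proof of the present lemma) runs a double induction: forward over episodes within an epoch to show $\pi_h^k$ lies in the softmax policy class (this is where $\etaO \le 1$ enters, to bound the accumulated OMD weights by $2\betaQ K^2$), and backward over $h$ to show $\hat{Q}_h^k, \hat{V}_h^k$ lie in the class \emph{and} are bounded by $\betaQh$, using $E_1$ and $\bar{E}_2$ jointly at each level. Membership then lets $\bar{E}_2$ be instantiated at the algorithm's own iterates, which is exactly $E_2$. Your proposal contains all the ingredients — the contraction-parameterized cover, the sigmoid-to-linear bound, the parameter conditions — but without the proxy-event-first, induction-second structure, the argument as written does not go through.
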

Proof in \cref{sec:good-event-proofs-PO-linear-regular-bonus}.

\paragraph{Policy online mirror descent.}
We use standard online mirror descent arguments to bound the local regret in each state.
\begin{lemma}[OMD]
\label{lemma:omd-term-PO-linear}
    Suppose that the good event $\Egood$ holds (\cref{eq:goodRewardPOLinearRegularBonus,eq:goodTransitionPOLinearRegularBonus,eq:goodBernsteinPOLinearRegularBonus}) and set $\etaO \le 1/\betaQ$, then 
    \begin{align*}
        \sum_{k \in \setEpochsE} \sum_{a \in \A} \hat{Q}_h^{k}(x,a) (\pi^\star_h(a \mid x) - \pi^{k}_h(a \mid x))
        &
        \le
        \frac{\log \abs{\A}}{\etaO} + \etaO \sum_{k \in \setEpochsE} \betaQ^2
        \quad
        ,
        \forall e \in \setEpochs, h \in [H], x \in \X
        .
    \end{align*}
\end{lemma}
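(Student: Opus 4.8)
The plan is to read the update \cref{eq:policy-omd-update} as entropic online mirror descent (exponential weights) run independently in every fixed $(h,x)$, with the estimated action-values $\hat{Q}^k_h(x,\cdot)$ as the per-round loss vectors on the simplex $\Delta(\A)$ and $\pi^\star_h(\cdot\mid x)$ as the fixed comparator; I would fix $h$ and $x$ throughout and suppress them. Two ingredients are handed to us: the reset to the uniform policy at the start of each epoch (\cref{line:repo-epoch-condition}), which gives $\mathrm{KL}(\pi^\star_h(\cdot\mid x)\,\|\,\pi^{\kEpoch}_h(\cdot\mid x))=\mathrm{KL}(\pi^\star\,\|\,\mathrm{unif})\le\log\abs{\A}$, and the good event \cref{eq:goodTransitionPOLinearRegularBonus}, which gives $\norm{\hat{Q}^k_h}_\infty\le\betaQ$; together with the assumption $\etaO\le1/\betaQ$ this makes the step sizes compatible with a quadratic expansion of the exponential.

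The engine is the standard one-step drop of the potential $D_k:=\mathrm{KL}(\pi^\star\,\|\,\pi^k)$. Expanding the update and writing $Z_k=\sum_a\pi^k(a)\exp(-\etaO\hat{Q}^k(a))$ gives the exact identity $D_k-D_{k+1}=-\etaO\langle\pi^\star,\hat{Q}^k\rangle-\log Z_k$. I would then control $\log Z_k$ through $\exp(-z)\le 1-z+z^2$ (valid for $\abs{z}\le1$, which holds since $\abs{\etaO\hat{Q}^k(a)}\le\etaO\betaQ\le1$) followed by $\log(1+u)\le u$, yielding $\log Z_k\le-\etaO\langle\pi^k,\hat{Q}^k\rangle+\etaO^2\langle\pi^k,(\hat{Q}^k)^2\rangle$. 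Substituting and using $\langle\pi^k,(\hat{Q}^k)^2\rangle\le\betaQ^2$ gives the per-round inequality $\etaO\langle\pi^k-\pi^\star,\hat{Q}^k\rangle\le D_k-D_{k+1}+\etaO^2\betaQ^2$. Summing over $k\in\setEpochsE$ telescopes the potential to $D_{\kEpoch}-D_{\mathrm{end}}\le D_{\kEpoch}\le\log\abs{\A}$ (here I discard $-D_{\mathrm{end}}\le0$), and dividing by $\etaO$ produces precisely $\frac{\log\abs{\A}}{\etaO}+\etaO\sum_{k\in\setEpochsE}\betaQ^2$.

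The step I expect to be the crux — and the one that must be stated carefully — is the orientation of the comparator term. The telescoping works only because we discard $-D_{\mathrm{end}}\le0$, and this is legitimate exactly for the learner-minus-comparator term $\langle\hat{Q}^k,\pi^k-\pi^\star\rangle$; the endpoint divergence $D_{\mathrm{end}}=\mathrm{KL}(\pi^\star\,\|\,\pi^{\mathrm{end}})$ is unbounded, so the reverse orientation cannot be telescoped in the same way. Indeed, for the descent update $\exp(-\etaO\hat{Q})$ the reverse inequality fails with this right-hand side: taking $\abs{\A}=2$, $\hat{Q}^k\equiv(\betaQ,0)$, and $\pi^\star$ the point mass on the first action gives $\sum_k\langle\hat{Q}^k,\pi^\star-\pi^k\rangle\approx\betaQ\abs{\setEpochsE}$, which for small $\etaO$ and large $\abs{\setEpochsE}$ exceeds $\frac{\log2}{\etaO}+\etaO\abs{\setEpochsE}\betaQ^2$. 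I would therefore prove the bound in the orientation $\hat{Q}^k_h(x,a)\brk{\pi^k_h(a\mid x)-\pi^\star_h(a\mid x)}$, which is also the orientation that enters term $(ii)$ of the regret decomposition in \cref{sec:regret-bound-main}; the displayed sign $\pi^\star_h-\pi^k_h$ should be read accordingly.
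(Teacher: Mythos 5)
Your proof is correct and, at bottom, is the same argument as the paper's: the paper simply invokes \cref{lem:omd-regret} (a black-box second-order bound for exponential weights, quoted from Sherman et al.) together with the epoch reset to the uniform policy and the bound $\norm{\smash{\hat{Q}_h^k}}_\infty \le \betaQ$ from the good event, whereas you unroll that black box and re-derive it via the KL potential $D_k = \mathrm{KL}(\pi^\star \| \pi^k)$, the exact identity $D_k - D_{k+1} = -\etaO\langle \pi^\star, \hat{Q}^k\rangle - \log Z_k$, and $e^{-z} \le 1 - z + z^2$ (which in fact holds for all $z \ge -1$, so $\etaO \hat{Q}^k_h(x,a) \ge -\etaO\betaQ \ge -1$ is all you need). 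Your closing remark about orientation is not pedantry but a genuine catch: the update \cref{eq:policy-omd-update} is descent on $\hat{Q}$, so the telescoping argument bounds $\sum_{k \in \setEpochsE}\langle \hat{Q}^k_h(x,\cdot), \pi^k_h(\cdot\mid x) - \pi^\star_h(\cdot\mid x)\rangle$, which is exactly the orientation appearing in term $(ii)$ of the decomposition in \cref{sec:regret-bound-main} and used in the proof of \cref{thm:regret-bound-PO-linear-regular-bonus}; the statement of \cref{lemma:omd-term-PO-linear} as printed has the comparator and learner swapped, and your two-action counterexample correctly shows that the printed orientation cannot be bounded by this right-hand side. The paper's own one-line proof carries the same consistent sign slip: it instantiates \cref{lem:omd-regret} with $y_t(a) = -\hat{Q}^k_h(x,a)$, which reproduces the printed (swapped) inequality but fails the hypothesis of \cref{lem:omd-regret}, since the algorithm's update is $\pi^{k+1} \propto \pi^k \exp(-\etaO \hat{Q}^k)$ and not $\pi^k \exp(+\etaO \hat{Q}^k)$; the correct instantiation is $y_t(a) = +\hat{Q}^k_h(x,a)$ (for which $\etaO y_t(a) \ge -1$ holds by $\etaO \le 1/\betaQ$), yielding precisely the orientation you prove. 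So your version is the one that should be read into the lemma, and it is what the downstream regret bound actually consumes.
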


\begin{proof}
    Notice that the policy $\pi^{k}$ is reset at the beginning of every epoch.
    Then, the lemma follows directly by \cref{lem:omd-regret} with $y_t(a) = - \hat{Q}_h^{k}(x,a) , x_t(a)  = \pi^{k}_h(a \mid x)$ and noting that $| \hat{Q}_h^{k}(x,a) | \le \betaQ$ by \cref{eq:goodTransitionPOLinearRegularBonus}.
\end{proof}

\paragraph{Epoch schedule.}
The algorithm operates in epochs.
At the beginning of each epoch, the policy is reset to be uniformly random.
We denote the total number of epochs by $\numEpochs$, the first episode within epoch $e$ by $\kEpoch$, and the set of episodes within epoch $e$ by $\setEpochsE$.
The following lemma bounds the number of epochs.

\begin{lemma}
\label{lem:num-epochs}
    The number of epochs $E$ is bounded by $(3/2) d H \log (2K)$.
\end{lemma}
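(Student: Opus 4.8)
The plan is to exploit the epoch-change condition in \cref{line:repo-epoch-condition}: a new epoch starts (after the first) precisely when $\det(\covKH) \ge 2\det(\hatCovKH)$ for some $h \in [H]$, i.e.\ the determinant of at least one of the covariance matrices $\covKH$ has at least doubled since the start of the current epoch. The key observation is that each epoch transition is ``charged'' to one horizon $h$ whose determinant doubled, so it suffices to bound, for each fixed $h$, how many times $\det(\covKH)$ can double over the course of $K$ episodes, and then sum over $h \in [H]$.

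First I would fix a horizon $h$ and track the quantity $\det(\covKH)$ as a function of $k$. Since $\covKH[k+1] = \covKH[k] + \phi_h^k (\phi_h^k)\tran$ (the update in \cref{eq:covKH}), the matrix $\covKH$ is monotonically increasing in the positive-semidefinite order, hence its determinant is nondecreasing in $k$. At initialization $\covKH[1] = I$, so $\det(\covKH[1]) = 1$. For the upper bound, I would use the normalization $\norm{\phi(x,a)} \le 1$, which gives $\covKH[K+1] = I + \sum_{\tau \in [K]} \phi_h^\tau (\phi_h^\tau)\tran \preceq (1+K) I$, and therefore $\det(\covKH[K+1]) \le (1+K)^d$ by the AM--GM bound on eigenvalues (the trace is at most $d(1+K)$, so the product of the $d$ eigenvalues is at most $(1+K)^d$).

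Next I would count doublings. Each time an epoch boundary is attributed to horizon $h$, the determinant $\det(\covKH)$ has grown by at least a factor of $2$ relative to its value at the previous reset for that horizon; since the determinant only increases and starts at $1$ and ends at most $(1+K)^d$, the number of such doublings for a fixed $h$ is at most $\log_2\brk{(1+K)^d} = d \log_2(1+K)$. Summing over all $H$ horizons bounds the total number of epoch changes, and hence $E$, by roughly $H d \log_2(1+K)$. A small amount of care converting $\log_2$ to the natural logarithm and bounding $\log(1+K) \le \log(2K)$, together with the constant $1/\log 2 \le 3/2$, yields the stated bound $E \le (3/2) d H \log(2K)$.

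The main obstacle, though it is more of a bookkeeping subtlety than a deep difficulty, is making the attribution argument clean: at a single epoch boundary several horizons may simultaneously satisfy the doubling condition, so I should attribute the boundary to (say) one witnessing horizon and argue that the per-horizon doubling budget is never exceeded. One must verify that $\det(\hatCovKH)$ — the determinant frozen at the last reset — is exactly the baseline against which the factor-$2$ growth is measured, so that distinct epoch transitions charged to the same $h$ correspond to genuinely distinct doublings of a monotone quantity; this is where the monotonicity of $\det(\covKH)$ in $k$ is essential, and it guarantees the doublings for a fixed $h$ cannot be double-counted.
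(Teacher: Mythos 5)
Your proposal is correct and takes essentially the same route as the paper: charge each epoch change to a horizon $h$ whose covariance determinant has doubled, use monotonicity of $\det(\Lambda_h^k)$ together with $\det(\Lambda_h^K) \le \norm{\Lambda_h^K}^d$ to cap the number of doublings per horizon at roughly $d\log_2 K$, and sum over $h \in [H]$. One bookkeeping note for the constant: the matrix in the last possible epoch check, $\Lambda_h^K = I + \sum_{\tau \in [K-1]}\phi_h^\tau(\phi_h^\tau)\tran$, contains only $K-1$ outer products, giving $\det(\Lambda_h^K) \le K^d$ rather than your $(1+K)^d$, and the ``$+1$'' for the initial epoch is best absorbed into each per-horizon count (as the paper does via $\abs{\mathcal{T}_h} \le 1 + d\log_2 K \le (3/2)d\log(2K)$, valid since $d \ge 1$) --- with your global ``$+1$'' and the looser determinant bound, the conversion $1/\log 2 \le 3/2$ is too tight to absorb the additive constant when $dH$ is small.
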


\begin{proof}
    Let $\mathcal{T}_h = \brk[c]{e_h^1, e_h^2, \ldots}$ be the epochs where the condition $\det(\covKH) \ge 2 \det(\hatCovKH)$ was triggered in \cref{line:repo-epoch-condition} of \cref{alg:r-opo-for-linear-mdp-regular-bonus}. Then we have that
    \begin{align*}
        \det(\Lambda_h^{\kEpoch})
        \ge
        \begin{cases}
            2 \det(\Lambda_h^{\kPrevEpoch})&, e \in \mathcal{T}_h
            \\
            \det(\Lambda_h^{\kPrevEpoch})&, \text{otherwise}
            .
        \end{cases}
    \end{align*}
    Unrolling this relation, we get that
    \begin{align*}
    \det(\Lambda_h^K)
    \ge
    2^{\abs{\mathcal{T}_h}-1} \det{I}
    =
    2^{\abs{\mathcal{T}_h}-1}
    ,
    \end{align*}
    and changing sides, and taking the logarithm we get that
    \begin{align*}
    \abs{\mathcal{T}_h}
    &
    \le
    1
    +
    \log_2 \det\brk*{\Lambda_h^K}
    \\
    \tag{$\det(A) \le \norm{A}^d$}
    &
    \le
    1
    +
    d \log_2 \norm{\Lambda_h^K}
    \\
    \tag{triangle inequality}
    &
    \le
    1
    +
    d \log_2 \brk*{1 + \sum_{k=1}^{K-1} \norm{\phi_h^k}^2}
    \\
    \tag{$\norm{\phi_h^k} \le 1$}
    &
    \le
    1
    +
    d \log_2 K
    \\
    &
    \le
    (3/2) d \log 2 K
    .
\end{align*}
We conclude that
\begin{align*}
    E
    =
    \abs{
    \brk*{
    \cup_{h \in [H]} {\mathcal{T}_h}
    }}
    \le
    \sum_{h \in [H]} \abs{\mathcal{T}_h}
    \le
    (3/2) d H \log (2K)
    .
    &
    \qedhere
\end{align*}
\end{proof}

\paragraph{Regret bound.}
\begin{theorem}
\label{thm:regret-bound-PO-linear-regular-bonus}
    Suppose that we run \cref{alg:r-opo-for-linear-mdp-regular-bonus} with parameters
    \begin{gather*}
        \etaO
        =
        \sqrt{\frac{3 d H \log (2K) \log \abs{\A}}{K \betaQ^2}}
        ,
        \betaB 
        =
        \betaR+\betaP
        ,
        \betaWarmup 
        =
        4(\betaR+\betaP)\log K
        ,
    \end{gather*}
    where
    $
        \betaR
        =
        2 \sqrt{2d \log (6KH/\delta)}
        ,
        \betaP
        =
        28 H d \sqrt{
        \log (10 K^5 H / \delta)
        }
        ,
        \betaQ
        =
        2 H
        .
    $
    Then with probability at least $1 - \delta$ we incur regret at most
    \begin{align*}
        \regret
        &
        \le
        264 \sqrt{K d^3 H^4 \log(2K) \log (10 K^5 H / \delta)}
        +
        8 \sqrt{K d H^5 \log (2K) \log \abs{\A}}
        \\
        &
        \hspace{4em}
        +
        64 H^2 d \max\brk[c]{\betaWarmup^2, \log \abs{\A}} \log \frac{12 K}{\delta}
        \\
        &
        =
        O(
        \sqrt{K d^3 H^4 \log(K) \log (K H / \delta)}
        +
        \sqrt{K d H^5 \log (K) \log \abs{\A}}
        )
        .
    \end{align*}
\end{theorem}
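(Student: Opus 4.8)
The plan is to condition on the good event $\Egood$, which holds with probability at least $1-\delta$ by \cref{lemma:good-event-PO-linear-regular-bonus}, and then bound each of the three terms $(i),(ii),(iii)$ of the regret decomposition of \cref{sec:regret-bound-main} deterministically on $\Egood$. Term $(iii)$ requires no further work: \cref{lemma:optimism-PO-linear-regular-bonus} gives the pointwise inequality $\hat{Q}^{k}_h(x,a) - \phiBarH^{\kEpoch}(x,a)\tran(\thetaKH + \psiH\hat{V}^{k}_{h+1}) \le 0$ for all $h,k,x,a$, and since the outer operator $\EE[\pBarKe,\piOpt]$ is an expectation of this non-positive quantity, it preserves the sign, so $(iii)\le 0$.

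For term $(ii)$ I would apply the per-state OMD bound of \cref{lemma:omd-term-PO-linear} inside each expectation $\EE[\pBarKe,\piOpt]$. Because the bound $\tfrac{\log\abs{\A}}{\etaO} + \etaO\sum_{k\in\setEpochsE}\betaQ^2$ is state-independent, taking the expectation is trivial, and summing over $h\in[H]$ and $e\in\setEpochs$ while using $\sum_{e}\abs{\setEpochsE}=K$ yields $(ii)\le \tfrac{EH\log\abs{\A}}{\etaO} + \etaO\betaQ^2 HK$. Bounding the number of epochs by $E\le(3/2)dH\log(2K)$ via \cref{lem:num-epochs} and substituting the stated choice of $\etaO$ balances the two terms and produces the $\sqrt{KdH^5\log(2K)\log\abs{\A}}$ contribution (after plugging in $\betaQ=2H$).

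Term $(i)$ is the crux. \cref{lemma:cost-of-optimism-PO-linear-regular-bonus} bounds it by a sum over $k$ of \emph{expected} elliptical quantities under $\EE[P,\piK]$: a linear part $3(\betaR+\betaP)\sum_h\norm{\phi(x_h,a_h)}_{(\covKH)^{-1}}$, a quadratic part proportional to $H\betaWarmup^2\sum_h\norm{\phi(x_h,a_h)}^2_{(\covKH)^{-1}}$, and an $O(H^2K^{-1})$ remainder. The obstacle is that the elliptical potential lemma controls \emph{realized} trajectory sums rather than expectations, and, as emphasized after \cref{lemma:cost-of-contraction}, the underlying contraction cost is only available along the played trajectory and not uniformly over policies. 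I would resolve this precisely through the Freedman-type concentration embedded in the good event \cref{eq:goodBernsteinPOLinearRegularBonus}, which, up to constants, states $\sum_{k}\EE[P,\piK]\brk[s]{Y_k}\le 2\sum_{k}Y_k + 4H(3(\betaR+\betaP)+4\betaQ\betaWarmup^2)\log\tfrac{6}{\delta}$, where $Y_k$ is exactly the realized per-episode cost matching (up to constants) the summand of \cref{lemma:cost-of-optimism-PO-linear-regular-bonus}. This converts the expected sum into a realized one plus a $\mathrm{poly}(H,d)\log(1/\delta)$ additive term.

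Finally, with the expectations replaced by realized sums, I would invoke the elliptical potential lemma (\cref{lemma:elliptical-potential}): directly $\sum_{k\in[K]}\norm{\phi_h^k}^2_{(\covKH)^{-1}}\lesssim d\log K$ for the quadratic term, and $\sum_{k\in[K]}\norm{\phi_h^k}_{(\covKH)^{-1}}\le\sqrt{K\sum_{k}\norm{\phi_h^k}^2_{(\covKH)^{-1}}}\lesssim\sqrt{Kd\log K}$ by Cauchy--Schwarz for the linear term, each gaining a factor $H$ after summing over horizons. The linear term thus contributes $(\betaR+\betaP)H\sqrt{Kd\log K}$, which with $\betaR+\betaP=O(Hd\sqrt{\log(KH/\delta)})$ gives the leading order $\sqrt{Kd^3H^4}$; the quadratic term, the $O(H^2)$ remainder, and the additive term from \cref{eq:goodBernsteinPOLinearRegularBonus} are all $\mathrm{poly}(H,d)\log K$ and hence lower order. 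Summing $(i)+(ii)+(iii)$ and substituting the stated values of $\betaR,\betaP,\betaQ,\betaWarmup,\etaO$ yields the claimed bound, the main difficulty throughout being the expectation-to-realization step handled by \cref{eq:goodBernsteinPOLinearRegularBonus}.
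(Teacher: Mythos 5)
Your proposal follows essentially the same route as the paper's proof: condition on $\Egood$, use the contracted-MDP decomposition into $(i),(ii),(iii)$, get $(iii)\le 0$ from \cref{lemma:optimism-PO-linear-regular-bonus}, bound $(ii)$ via \cref{lemma:omd-term-PO-linear} and \cref{lem:num-epochs}, and — the key step, which you identify correctly — use the Bernstein-type event \cref{eq:goodBernsteinPOLinearRegularBonus} to convert the expected elliptical sums of \cref{lemma:cost-of-optimism-PO-linear-regular-bonus} into realized sums before invoking \cref{lemma:elliptical-potential}.

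There is, however, one genuine omission. You invoke \cref{lemma:good-event-PO-linear-regular-bonus} unconditionally, but that lemma carries the hypotheses $\etaO \le 1$ and $\betaWarmup^2 \le K/(32 H d)$, and \cref{lemma:omd-term-PO-linear} additionally requires $\etaO \le 1/\betaQ$. Under the stated parameter choices, $\betaWarmup^2$ scales like $H^2 d^2 \log^2(K)\log(KH/\delta)$ and $\etaO$ scales like $\sqrt{dH\log(K)\log\abs{\A}/(K\betaQ^2)}$, so both preconditions fail when $K$ is small relative to $\mathrm{poly}(d,H)$; in that regime your chain of lemmas simply cannot be applied, and the theorem is claimed for all $K$. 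The paper disposes of this first: if $\betaWarmup^2 > K/(32Hd)$ or $\etaO \ge 1/\betaQ$, then trivially
\begin{align*}
    \regret \le KH \le 32 H^2 d \max\brk[c]{\betaWarmup^2, \log \abs{\A}} \log(2K),
\end{align*}
which is absorbed by the third additive term $64 H^2 d \max\brk[c]{\betaWarmup^2,\log\abs{\A}}\log(12K/\delta)$ in the statement (that term also absorbs the quadratic-potential and Bernstein remainders you do account for). The fix is exactly this two-line case split, so the gap is minor, but without it the proof does not establish the bound as stated.
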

\begin{proof}
    First, if $\betaWarmup^2 > K / (32 H d)$ or $\eta \ge 1 / \betaQ$ then
    \begin{align*}
        \regret
        \le
        KH
        \le
        32 H^2 d \max\brk[c]{\betaWarmup^2, \log \abs{\A}} \log(2K)
        ,
    \end{align*}
    and the proof is concluded. Otherwise, if $\betaWarmup^2 \le K / (32 H d)$ then
    suppose that the good event $\Egood$ holds (\cref{eq:goodRewardPOLinearRegularBonus,eq:goodTransitionPOLinearRegularBonus,eq:goodBernsteinPOLinearRegularBonus}). By \cref{lemma:good-event-PO-linear-regular-bonus}, this holds with probability at least $1-\delta$.
    For any epoch $e \in [E]$, let $\setEpochsE$ be the set of episodes that it contains, and let $\bar{V}^{k,\pi}_1(x_1; \rho^{\kEpoch})$ denote the value of its contracted MDP as defined in \cref{sec:contracted-mdp} and \cref{eq:contracted-features} of \cref{alg:r-opo-for-linear-mdp-regular-bonus}.
We bound the regret as
\begin{align*}
    \regret
    & =
    \sum_{k \in [K]} V^{k,\pi^k}_1(x_1) - V^{k,\piOpt}_1(x_1)
    \\
    \tag{\cref{lemma:contracted-mdp-is-optimistic}}
    & \le
    \sum_{e \in \setEpochs}\sum_{k \in \setEpochsE} V^{k,\pi^k}_1(x_1) - \bar{V}^{k,\piOpt}_1(x_1; \rho^{\kEpoch})
    \\
    & 
    =
    \sum_{k \in [K]} V^{k,\pi^k}_1(x_1) - \hat{V}^{k}_1(x_1) 
    +
    \sum_{e \in \setEpochs} \sum_{k \in \setEpochsE} 
    \hat{V}^{k}_1(x_1) - \bar{V}^{k,\piOpt}_1(x_1; \rho^{\kEpoch})
    \\
    &
    =
    \underbrace{
    \sum_{k \in [K]} V^{k,\pi^k}_1(x_1) - \hat{V}^{k}_1(x_1)
    }_{(i) - \text{Bias / Cost of optimism}}
    \\
    &
    \quad
    +
    \underbrace{\sum_{e \in \setEpochs} \sum_{h \in [H]}  \EE[\pBarKe, \piOpt]\brk[s]*{ \sum_{k \in \setEpochsE} \sum_{a \in \A} \hat{Q}^{k}_h(x_h,a) \brk{ \pi^{k}_h(a \mid x_h) - \pi^\star_h(a \mid x_h) } } }_{(ii) - \text{OMD regret}}
    \\
    & \quad +
    \underbrace{\sum_{e \in \setEpochs} \sum_{k \in \setEpochsE} \sum_{h \in [H]} \EE[\pBarKe, \piOpt]\brk[s]*{ 
    \hat{Q}^{k}_h(x_h,a_h)  - \phiBarH^{\kEpoch}(x_h,a_h)\tran \brk{ \thetaKH + \psiH \hat{V}^{k}_{h+1} } } }_{(iii) - \text{Optimism}}
    ,
\end{align*}
where the last relation is by the extended value difference lemma (see \cite{shani2020optimistic} and \cref{lemma:extended-value-difference} in \cref{sec:technical}).
    For term $(i)$, we use \cref{lemma:cost-of-optimism-PO-linear-regular-bonus} as follows.
    \begin{align*}
        (i)
        & 
        \le
        \sum_{k \in [K]}
        \EE[P, \piK]\brk[s]*{
        \sum_{h \in [H]} 
        3(\betaR + \betaP)
        \norm{\phi(x_h,a_h)}_{(\covKH)^{-1}}
        +
        8 \betaQ \betaWarmup^2 \norm{\phi(x_h,a_h)}_{(\covKH)^{-1}}^2
        }
        +
        8 H \betaQ
        \\
        \tag{\cref{eq:goodBernsteinPOLinearRegularBonus}, $\betaWarmup \ge 120 (\betaR + \betaP)$}
        &
        \le 
        \sum_{k \in [K]}\brk[s]*{
        \sum_{h \in [H]} 
        6(\betaR + \betaP)
        \norm{\phi(x_h,a_h)}_{(\covKH)^{-1}}
        +
        16 \betaQ \betaWarmup^2 \norm{\phi(x_h,a_h)}_{(\covKH)^{-1}}^2
        }
        +
        20 H \betaQ \betaWarmup^2
        \log \frac{6}{\delta}
        \\
        \tag{\cref{lemma:elliptical-potential}}
        &
        \le 
        6(\betaR + \betaP) H \sqrt{2 K d \log(2K)}
        +
        32 \betaQ \betaWarmup^2 H d \log(2K)
        +
        20 H \betaQ \betaWarmup^2
        \log \frac{6}{\delta}
        \\
        &
        \le
        6(\betaR + \betaP) H \sqrt{2 K d \log(2K)}
        +
        32 H d \betaQ \betaWarmup^2 \log \frac{12 K}{\delta}
        .
    \end{align*}
   By \cref{lemma:omd-term-PO-linear,lem:num-epochs} (with our choice of $\etaO$) we have
    \begin{align*}
        (ii)
        \le
        \sum_{h \in [H]} \sum_{e \in \setEpochs} \EE[\pBarKe, \piOpt]\brk[s]*{ \frac{\log A}{\etaO} + \etaO \sum_{k \in \setEpochsE} \betaQ^2 }
        \le
        4 H \betaQ \sqrt{K d H \log (2K) \log \abs{\A}}
        .
    \end{align*}
    By \cref{lemma:optimism-PO-linear-regular-bonus} $(iii) \le 0$.
    Putting all bounds together, we get that
    \begin{align*}
        \regret
        &
        \le
        6(\betaR + \betaP) H \sqrt{2 K d \log(2K)}
        +
        32 H d \betaQ \betaWarmup^2 \log \frac{12 K}{\delta}
        +
        4 H \betaQ \sqrt{K d H \log (2K) \log \abs{\A}}
        \\
        &
        \le
        264 \sqrt{K d^3 H^4 \log(2K) \log (10 K^5 H / \delta)}
        +
        8 \sqrt{K d H^5 \log (2K) \log \abs{\A}}
        +
        64 H^2 d \betaWarmup^2 \log \frac{12 K}{\delta}
        \\
        &
        =
        O(
        \sqrt{K d^3 H^4 \log(K) \log (K H / \delta)}
        +
        \sqrt{K d H^5 \log (K) \log \abs{\A}}
        )
        .
        \qedhere
    \end{align*}
\end{proof}

\subsection{Proofs of good event}
\label{sec:good-event-proofs-PO-linear-regular-bonus}

We begin by defining function classes and properties necessary for the uniform convergence arguments over the value functions. We then proceed to define a proxy good event, whose high probability occurrence is straightforward to prove. We then show that the proxy event implies the desired good event.

\paragraph{Value and policy classes.}
We define the following class of restricted Q-functions:
\begin{align*}
    \widehat{\Q}(C_\beta, C_w, C_Q)
    =
    \brk[c]*{ \hat{Q}(\cdot,\cdot ; \beta, w, \Lambda,\mathcal{Z}) \mid \beta \in [0, C_\beta], \norm{w} \le C_w, (2K)^{-1}I \preceq \Lambda \preceq I, \norm{ \hat{Q}(\cdot,\cdot ; w, \Lambda,\mathcal{Z}) }_\infty \le C_Q }
    ,
\end{align*}
where $\hat{Q}(x,a ; \beta, w, \Lambda) = \brk[s]{w\tran \phi(x,a) - \beta \norm{\phi(x,a)}_{\Lambda}} \cdot \sig\brk{-\betaWarmup \norm{\phi(x,a)}_{\Lambda} + \log K}$.
Next, we define the following class of soft-max policies:
\begin{align*}
    \Pi (C_\beta, C_w)
    =
    \brk[c]*{ \pi(\cdot \mid \cdot; {\hat{Q}}) \mid \hat{Q} \in \widehat{\Q}(C_\beta, C_w, \infty)}
    ,
\end{align*}
where $\pi(a \mid x; {\hat{Q}}) = \frac{\exp \brk{\hat{Q}(x,a)}}{\sum_{a' \in \A} \exp \brk{\hat{Q}(x,a')}}$.
Finally, we define the following class of restricted value functions:
\begin{align}
\label{eq:value-function-class-def-regular-bonus}
    \widehat{\V}(C_\beta, C_w, C_Q)
    =
    \brk[c]*{ \hat{V}(\cdot ; \pi,\hat{Q}) \mid \pi \in \Pi(C_\beta K, C_w K, C_Q) , \hat{Q} \in \widehat{\Q}(C_\beta, C_w, C_Q)}
    ,
\end{align}
where $\hat{V}(x ; \pi,\hat{Q}) = \sum_{a \in \A} \pi(a \mid x) \hat{Q}(x,a)$.
The following lemma provides the bound on the covering number of the value function class defined above.
\begin{lemma}
\label{lemma:regular-bonus-function-class-covering-number}
    For any $\epsilon, C_w > 0, C_\beta, C_Q \ge 1$, we have 
    \begin{align*}
        \log \mathcal{N}_\epsilon \brk*{ \widehat{\V}(C_\beta, C_w, C_Q) }
        \le
        6 d^2 \log (1 + 4 (\sqrt{192K^3} C_Q C_\beta \betaWarmup)( K C_\beta +  K C_w +  \sqrt{d}) / \epsilon)
        ,
    \end{align*}
    where $\mathcal{N}_\epsilon$ is the covering number of a class in supremum distance.
\end{lemma}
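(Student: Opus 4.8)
The plan is to bound the covering number of $\widehat{\V}(C_\beta, C_w, C_Q)$ by building it up compositionally from the covering numbers of its constituent parameter spaces. The value functions are parameterized by a softmax policy $\pi \in \Pi(C_\beta K, C_w K, C_Q)$ and a Q-function $\hat{Q} \in \widehat{\Q}(C_\beta, C_w, C_Q)$, and each of these is in turn parameterized by a scalar $\beta$, a vector $w$, and a covariance matrix $\Lambda$ (the parameter $\betaWarmup$ and the feature map $\phi$ are fixed). Since each value function is ultimately a deterministic function of the underlying parameters $(\beta, w, \Lambda)$ — two sets of them, for the policy and the Q-function — the first step is to establish that the map from parameters to value functions is Lipschitz in an appropriate sense, so that an $\epsilon$-net over the parameter space induces an $(\epsilon')$-net over $\widehat{\V}$ in the supremum distance.

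The key steps, in order, are as follows. First I would fix a target supremum accuracy $\epsilon$ and work backwards to determine how finely each parameter must be discretized. I would introduce a grid on $\beta \in [0, C_\beta]$, an $\epsilon$-net in Euclidean norm on the ball $\{\norm{w} \le C_w\}$ (whose log-cardinality is $O(d \log(1 + C_w/\epsilon))$ by the standard volumetric bound), and an $\epsilon$-net on the set of matrices $(2K)^{-1} I \preceq \Lambda \preceq I$ (whose log-cardinality is $O(d^2 \log(1 + \ldots/\epsilon))$, since $\Lambda$ lives in a $d^2$-dimensional space and the spectral constraints bound its entries). The second step is the Lipschitz analysis: I would show that perturbing $(\beta, w, \Lambda)$ by $\epsilon$ perturbs $\hat{Q}(x,a; \beta, w, \Lambda)$ uniformly over $(x,a)$ by at most $\mathrm{poly}(K, C_Q, C_\beta, \betaWarmup, C_w, \sqrt{d}) \cdot \epsilon$. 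Here the dominant subtlety is the dependence of $\hat{Q}$ on $\Lambda$ through the Mahalanobis norm $\norm{\phi(x,a)}_\Lambda$ and through $\sig(-\betaWarmup\norm{\phi(x,a)}_\Lambda + \log K)$; since $\Lambda \succeq (2K)^{-1} I$, the norm and its inverse are controlled by factors polynomial in $K$, which is exactly where the $\sqrt{K^3}$ and $\betaWarmup$ factors in the stated bound originate. The third step is to compose: a value function in $\widehat{\V}$ is $\sum_a \pi(a \mid x) \hat{Q}(x,a)$, and both $\pi$ (being a softmax of a Q-function with parameters scaled by $K$) and $\hat{Q}$ depend on at most two parameter triples, so the total log-covering number is the sum of the individual log-covering numbers — giving the $6 d^2$ coefficient (three copies of the $d^2$-dimensional $\Lambda$-nets, modulo constants) and the argument $\mathrm{poly}(K)(K C_\beta + K C_w + \sqrt{d})/\epsilon$ inside the logarithm. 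The softmax itself is $1$-Lipschitz in its logits, so it contributes no extra dimension, only a constant factor in the Lipschitz constant.

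The main obstacle I anticipate is the Lipschitz continuity in the matrix argument $\Lambda$. Unlike $w$ and $\beta$, which enter linearly or through a fixed bonus scale, $\Lambda$ enters nonlinearly through $\norm{\phi(x,a)}_\Lambda = \sqrt{\phi(x,a)\tran \Lambda \phi(x,a)}$ and, more delicately, through the inverse $\Lambda^{-1}$ if the Q-function uses $(\hatCovKH)^{-1}$; the lower spectral bound $\Lambda \succeq (2K)^{-1} I$ is essential to keep $\norm{\Lambda^{-1}}$ bounded (by $2K$) and hence the perturbation controlled, but it forces the polynomial-in-$K$ blow-up that appears in the final bound. I would need to carefully track that a matrix perturbation $\norm{\Lambda - \Lambda'} \le \epsilon$ induces a controlled perturbation in both the square-root (Mahalanobis) term and the sigmoid term, using that $\sig$ is $\tfrac14$-Lipschitz and that the square-root is Lipschitz away from zero. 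The counting of parameters and the compositional bookkeeping that yields $6d^2$ is routine once the Lipschitz estimate is in hand; the estimate itself is where the real work lies, and I would delegate the cleanest statement of it to an auxiliary lemma before assembling the final covering-number bound.
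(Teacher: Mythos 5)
Your proposal follows essentially the same route as the paper: parameterize $\widehat{\Q}$, $\Pi$, and $\widehat{\V}$ by $(\beta, w, \Lambda)$ triples, prove a $\mathrm{poly}(K)\,C_Q C_\beta \betaWarmup$-Lipschitz estimate in those parameters (with the crucial use of $\Lambda \succeq (2K)^{-1}I$ to control the Mahalanobis/sigmoid terms, and $\norm{\hat Q}_\infty \le C_Q$ to absorb the policy perturbation), and then push a net on the parameter space through the Lipschitz map to get the $\approx 2(1+d+d^2) \le 6d^2$ count. Whether one covers the blocks separately and sums log-cardinalities, as you do, or vectorizes all parameters into one ball and applies a single covering lemma, as the paper does, is an immaterial bookkeeping difference.
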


\begin{proof}
    We begin by showing that the class of $Q$ function is Lipschitz in its parameters. For ease of notation, denote $y = \phi(x,a)$. Then
    \begin{align*}
        \norm{\nabla_\beta Q(x,a; \beta, w,\Lambda)}
        =
        \norm{y}_{\Lambda}
        \cdot \sig\brk{-\betaWarmup \norm{y}_{\Lambda} + \log K}
        \le
        1
        \tag{$\sig(\cdot) \in [0,1], \norm{y} \le 1, \Lambda \preceq I$}
    \end{align*}
    \begin{align*}
        \norm{\nabla_\theta \hat{Q}(x,a;\beta,w, \Lambda)}
        =
        \norm{y \cdot \sig\brk{-\betaWarmup \norm{y}_{\Lambda} + \log K}}
        \le
        1
        \tag{$\sig(\cdot) \in [0,1], \norm{y} \le 1$}
    \end{align*}
    \begin{align*}
        | Q(x,a;& \beta, w, \Lambda) - Q(x,a; \beta, w, \Lambda') |
        \\
        &
        \le
        \beta \abs{\norm{y}_\Lambda - \norm{y}_{\Lambda'}} 
        \cdot
        \sig\brk{-\betaWarmup \norm{y}_{\Lambda} + \log K}
        \\
        &
        \quad
        +
        \beta\norm{y}_{\Lambda'}
        \abs{
        \sig\brk{-\betaWarmup \norm{y}_{\Lambda} + \log K} - \sig\brk{-\betaWarmup \norm{y}_{\Lambda'} + \log K}
        }
        \\
        \tag{$\norm{\cdot}, \sig(\cdot)$ 1-Lipschitz, $\sig \in [0,1]$}
        &
        \le
        \beta \norm{(\Lambda^{1/2} - (\Lambda')^{1/2})y}
        +
        \beta \betaWarmup \norm{y}_{\Lambda'} \norm{(\Lambda^{1/2} - (\Lambda')^{1/2})y}
        \\
        \tag{$\norm{y} \le 1, \Lambda \preceq I, \betaWarmup \ge 1$}
        &
        \le
        2 \beta \betaWarmup \norm{\Lambda^{1/2} - (\Lambda')^{1/2}}
        \\
        \tag{\cref{lemma:norm-of-sqrt-is-lipschitz}, $\Lambda,\Lambda' \succeq (2K)^{-1}I$}
        &
        \le
        \sqrt{2K} \beta \betaWarmup \norm{\Lambda - \Lambda'}
        \\
        \tag{$\norm{\cdot} \le \norm{\cdot}_F$}
        &
        \le
        \sqrt{2K} \beta \betaWarmup \norm{\Lambda - \Lambda'}_F
        .
    \end{align*}
    We thus have that for any such $y$
    \begin{align*}
        | Q(x,a; \beta, w, &\Lambda) - Q(x,a; \beta', w', \Lambda') |
        \\
        &
        \le
        \abs{ Q(x,a; \beta, w, \Lambda) - Q(x,a; \beta', w, \Lambda)}
        +
        \abs{ Q(x,a; \beta', w, \Lambda) - Q(x,a; \beta', w', \Lambda)}
        \\
        &
        +
        \abs{ Q(x,a; \beta', w', \Lambda) - Q(x,a; \beta', w', \Lambda')}
        \\
        &
        \le
        \abs{\beta - \beta'} 
        +
        \norm{w - w'}
        +
        \sqrt{2K} \beta \betaWarmup \norm{\Lambda - \Lambda'}_F
        \\
        &
        \le
        \sqrt{3(\norm{w - w'}^2 + \abs{\beta - \beta'}^2 + (\sqrt{2K} \beta \betaWarmup)^2 \norm{\Lambda - \Lambda'}_F^2)}
        \\
        &
        \le
        \max\brk[c]{3,\sqrt{6K} \beta \betaWarmup}
        \sqrt{(\norm{w - w'}^2 + \abs{\beta - \beta'}^2 + \norm{\Lambda - \Lambda'}_F^2)}
        \\
        &
        =
        \max\brk[c]{3,\sqrt{6K} \beta \betaWarmup}
        \norm{(\beta, w, \Lambda) - (\beta', w', \Lambda')}
        ,
    \end{align*}
    where $(\beta,w,\Lambda)$ is a vectorization of the parameters. Assuming that $C_\beta \ge 1$, we conclude that $\widehat{\Q}(C_\beta, C_w, C_Q)$ is $\sqrt{6K} C_\beta \betaWarmup-$Lipschitz in supremum norm, i.e.,
    \begin{align*}
        \norm{\hat{Q}(\cdot, \cdot; \beta, w, \Lambda) - \hat{Q}'(\cdot, \cdot; \beta', w', \Lambda')}_\infty
        \le
        \sqrt{6K} C_\beta \betaWarmup
        \norm{(\beta, w, \Lambda) - (\beta', w', \Lambda')}
        .
    \end{align*}
    Next, notice that our policy class $\Pi(C_\beta K, C_w K)$ is a soft-max over the Q functions thus fitting Lemma 12 of \cite{sherman2023rate}. We conclude that the policy class is $\sqrt{24K^3} C_\beta \betaWarmup-$Lipschitz, in $\ell_1-$norm, i.e.,
    \begin{align*}
        \norm{
        \pi(\cdot \mid x; \beta, w, \Lambda)
        -
        \pi(\cdot \mid x; \beta', w', \Lambda')
        }_1
        \le
        \sqrt{24K^3} C_\beta \betaWarmup
        \norm{(\beta, w, \Lambda) - (\beta', w', \Lambda')}
        .
    \end{align*}
    Now, let $V, V' \in \widehat{\V}(C_\beta, C_w, C_Q)$ and $\theta=(\beta_1, w_1, \Lambda_1, \beta_2, w_2, \Lambda_2), \theta'=(\beta_1', w_1', \Lambda_1', \beta_2', w_2', \Lambda_2) \in \RR[2(1+d+d^2)]$ be their respective parameters. We have that for all $x \in \X$
    \begin{align*}
        \abs{V(x; \pi, \hat{Q}) - V(x; \pi', \hat{Q}')}
        \le
        \underbrace{
        \abs{V(x; \pi, \hat{Q}) - V(x; \pi, \hat{Q}')}
        }_{(i)}
        +
        \underbrace{
        \abs{V(x; \pi, \hat{Q}') - V(x; \pi', \hat{Q}')}
        }_{(ii)}
        .
    \end{align*}
    For the first term
    \begin{align*}
        (i)
        &
        =
        \abs*{\sum_{a \in \A} \pi(a\mid x) (\hat{Q}(x,a; \beta_2, w_2, \Lambda_2) - \hat{Q}(x,a; \beta_2', w_2', \Lambda_2'))}
        \\
        \tag{triangle inequality}
        &
        \le
        \sum_{a \in \A} \pi(a\mid x) \abs*{\hat{Q}(x,a; \beta_2, w_2, \Lambda_2) - \hat{Q}(x,a; \beta_2', w_2', \Lambda_2')}
        \\
        \tag{$\hat{Q}$ is $\sqrt{6K} C_\beta \betaWarmup$-Lipschitz, Cauchy-Schwarz}
        &
        \le
        \sqrt{6K} C_\beta \betaWarmup
        \norm{(\beta_2, w_2, \Lambda_2) - (\beta_2', w_2', \Lambda_2')}
        .
    \end{align*}
    For the second term
    \begin{align*}
        (ii)
        &
        =
        \abs*{\sum_{a \in \A} \hat{Q}'(x,a) (\pi(a \mid x) - \pi'(a \mid x))}
        \le
        C_Q \norm{\pi(\cdot \mid x) - \pi(\cdot \mid x)}_1
        \\
        &
        \le
        \sqrt{96K^3} C_Q C_\beta \betaWarmup
        \norm{(\beta_1, w_1, \Lambda_1) - (\beta_1', w_1', \Lambda_1')}
        ,
    \end{align*}
    where the first transition used that $\norm{Q}_\infty \le C_Q$ for all $Q \in \widehat{\Q}(C_\beta, C_w, C_Q)$ and the second used the Lipschitz property of the policy class shown above.
    Combining the terms and assuming that $C_Q \ge 1$ we get that
    \begin{align*}
        \abs{V(x; \pi, \hat{Q}) - V(x; \pi', \hat{Q}')}
        &
        \le
        \sqrt{96K^3} C_Q C_\beta \betaWarmup
        \norm{(\beta_1, w_1, \Lambda_1) - (\beta_1', w_1', \Lambda_1')}
        \\
        &
        \qquad
        +
        \sqrt{96K^3} C_Q C_\beta \betaWarmup
        \norm{(\beta_2, w_2, \Lambda_2) - (\beta_2', w_2', \Lambda_2')}
        \\
        &
        \le
        \sqrt{192K^3} C_Q C_\beta \betaWarmup
        \norm{\theta - \theta'}
        ,
    \end{align*}
    implying that $\widehat{\V}(C_\beta, C_w, C_Q)$ is $\sqrt{192K^3} C_Q C_\beta \betaWarmup-$Lipschitz in supremum norm.
    Finally, notice that
    \begin{align*}
        \norm{\theta} \le \abs{\beta_1} + \abs{\beta_2} + \norm{w_1} + \norm{w_2} + \norm{\Lambda_1}_F + \norm{\Lambda_2}_F
        \le
        2 K C_\beta + 2 K C_w + 2 \sqrt{d}
        ,
    \end{align*}
    and applying \cref{lemma:lipschitz-cover} concludes the proof.
\end{proof}

\paragraph{Proxy good event.}
We define a proxy good event $\EgoodBar = E_1 \cap \bar{E}_2 \cap E_3$ where
\begin{flalign}
    &
    \label{eq:goodTransitionProxPOLinearRegularBonus}
    \bar{E}_2 
    =
    \brk[c]*{k \in [K], h \in [H], V \in \widehat{\V}(\betaR+\betaP, 2 \betaQ K, \betaQh[h+1]) : \norm{(\psiH - \psiKH) V}_{\covKH} \le \betaP}
    ,
    &
\end{flalign}
where $\betaQh = 2(H+1-h), h \in [H+1]$.
Then we have the following result.
\begin{lemma}[Proxy good event]
\label{lemma:good-proxy-event-PO-regular-bonus}
    Consider the parameter setting of \cref{lemma:good-event-PO-linear-regular-bonus}. Then $\Pr[\EgoodBar] \ge 1 - \delta.$
\end{lemma}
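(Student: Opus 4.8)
The plan is to bound the failure probability of each of the three events comprising $\EgoodBar = E_1 \cap \bar{E}_2 \cap E_3$ separately and to conclude by a union bound, allocating failure probability $\delta/3$ to each. The three events are governed by different mechanisms: $E_1$ by a self-normalized tail bound for the loss estimator, $\bar{E}_2$ by a covering-based \emph{uniform} version of the same bound applied to the transition backup estimator, and $E_3$ by a Freedman/Bernstein-type martingale inequality.

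For $E_1$, in the full-information case $\hatThetaKH = \thetaKH$ by \cref{eq:LS-estimate-theta}, so $E_1$ holds trivially with $\betaR = 0$. In the bandit case the residuals $\ell_h^\tau(x_h^\tau, a_h^\tau) - (\phi_h^\tau)\tran\thetaKH$ are bounded, hence conditionally subgaussian, martingale differences, so the self-normalized bound of \citet{abbasi2011improved} applied to the ridge estimator gives $\norm{\thetaKH - \hatThetaKH}_{\covKH} \le \betaR$ uniformly in $k$; union bounding over $h \in [H]$ gives $\Pr[E_1^c] \le \delta/3$ with $\betaR = O(\sqrt{d\log(KH/\delta)})$.

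The crux is $\bar{E}_2$, where the difficulty is that the value functions $V$ are data-dependent, so a self-normalized bound cannot be applied directly. I would first fix $h$ and a \emph{deterministic} $V$, and use the identity $(\psiH - \psiKH)V = (\covKH)^{-1}\psiH V - (\covKH)^{-1}\sum_{\tau \in [k-1]}\phi_h^\tau \eta_\tau^V$, where $\eta_\tau^V = V(x_{h+1}^\tau) - (\phi_h^\tau)\tran\psiH V$ is a martingale difference bounded by $2\norm{V}_\infty$. Taking the $\covKH$-norm, the first (``bias'') term is at most $\sqrt{d}\,\norm{V}_\infty$ using $\covKH \succeq I$ and the normalization $\norm{\sum_{x'}\abs{\psiH(x')}} \le \sqrt{d}$, while the second is a self-normalized term bounded via \citet{abbasi2011improved}. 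To remove the data-dependence of $V$, I would take an $\epsilon$-net of the value class $\widehat{\V}(\betaR+\betaP, 2\betaQ K, \betaQh[h+1])$, apply the fixed-$V$ bound at confidence level $\delta/(3H\mathcal{N}_\epsilon)$ to every net element, and control the discretization error by choosing $\epsilon$ polynomially small in $K$ so that value functions that are close in $\norm{\cdot}_\infty$ produce negligibly different backups. The covering number $\log\mathcal{N}_\epsilon = O(d^2\log(\cdot))$ from \cref{lemma:regular-bonus-function-class-covering-number} enters the confidence term of the self-normalized bound, so that (using $\norm{V}_\infty \le \betaQh[h+1] \le 2H$) one obtains $\norm{(\psiH - \psiKH)V}_{\covKH} = O(H\sqrt{d\log(\cdot) + \log\mathcal{N}_\epsilon}) = O(Hd\sqrt{\log(KH/\delta)}) = \betaP$; a union bound over $h$ then gives $\Pr[\bar{E}_2^c] \le \delta/3$.

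Finally, for $E_3$ I would note that $Y_k \ge 0$ and, since $\covKH \succeq I$ and $\norm{\phi} \le 1$ imply $\norm{\phi}_{(\covKH)^{-1}} \le 1$, that $Y_k \le H\brk{3(\betaR+\betaP) + 4\betaQ\betaWarmup^2} =: b$. The increments $\EE[P,\piK]\brk[s]{Y_k} - Y_k$ then form a bounded martingale difference sequence, and a multiplicative (Freedman/Bernstein) inequality for nonnegative bounded summands yields $\sum_k \EE[P,\piK]\brk[s]{Y_k} \le 2\sum_k Y_k + 4b\log(6/\delta)$ with probability at least $1 - \delta/3$. A union bound over the three events then gives $\Pr[\EgoodBar] \ge 1 - \delta$. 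The main obstacle is the uniform control of $\bar{E}_2$: obtaining the stated $\betaP$ requires both the careful covering-number estimate of \cref{lemma:regular-bonus-function-class-covering-number} and the observation that the bias term contributes only $O(\sqrt{d})$ rather than scaling with the (possibly large) bonuses or $\betaWarmup$.
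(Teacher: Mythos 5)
Your proposal is correct and takes essentially the same approach as the paper: a union bound allocating $\delta/3$ to each event, where $E_1$ follows from the self-normalized least-squares bound for the reward estimate, $\bar{E}_2$ from combining the covering number of the value class (\cref{lemma:regular-bonus-function-class-covering-number}) with a uniform self-normalized bound over an $\epsilon$-net, and $E_3$ from a multiplicative Bernstein-type martingale inequality applied to the normalized nonnegative summands $Y_k$. Your explicit bias-plus-martingale-difference decomposition of $(\psiH - \psiKH)V$ is precisely the content of \cref{lemma:dynamics-error-set-v}, which the paper invokes as a black box.
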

\begin{proof}
    %
    First, by \cref{lemma:reward-error} and our choice of parameters, $E_1$ (\cref{eq:goodRewardPOLinearRegularBonus}) holds with probability at least $1 - \delta/3$.
    %
    Next, applying \cref{lemma:dynamics-error-set-v,lemma:regular-bonus-function-class-covering-number}, we get that with probability at least $1-\delta / 3$ simultaneously for all $k \in [K], h \in [H], V \in \widehat{\V}(\betaR+\betaP, 2 \betaQ K, \betaQh[h+1])$
    \begin{align*}
        &
        \norm{(\psiH - \psiKH)V}_{\covKH}
        \\
        &
        \le
        4 \betaQh[h+1] \sqrt{
        d \log (2K) + 2\log (6H /\delta)
        +
        12 d^2 \log (1 + 8K (\sqrt{192K^3} C_\beta \betaWarmup)( K C_\beta +  K C_w +  1))
        }
        \\
        &
        \le
        4 \betaQ \sqrt{
        d \log (2K) + 2\log (6H /\delta)
        +
        12 d^2 \log (1 + 2K (\sqrt{192K^3} K / (32 H d))( \frac{1}{4} K \sqrt{K / (32 H d)} +  2 \betaQ K^2  +  1))
        }
        \\
        &
        \le
        4 \betaQ \sqrt{
        d \log (2K) + 2\log (6H /\delta)
        +
        12 d^2 \log (7 K^{9/2})
        }
        \\
        &
        \le
        4 \betaQ d \sqrt{
        12 \log (10 K^5 H / \delta)
        }
        \\
        &
        \le
        28 H d \sqrt{
        \log (10 K^5 H / \delta)
        }
        \\
        &
        =
        \betaP
        ,
    \end{align*}
    implying $\bar{E}_2$ (\cref{eq:goodTransitionProxPOLinearRegularBonus}).
    %
    %
    Finally, notice that 
    $
    \norm{\phi_h^k}_{(\covKH)^{-1}} \le 1
    ,
    $
    thus 
    $
    0
    \le
    Y_k
    \le
    H (3(\betaR + \betaP) + 4 \betaQ \betaWarmup^2)
    .
    $
    Using \cref{lemma:multiplicative-concentration}, a Bernstein-type inequality for martingales, we conclude that  $E_3$ (\cref{eq:goodBernsteinPOLinearRegularBonus}) holds with probability at least $1-\delta/3$.
\end{proof}

\paragraph{The good event.}
The following results show that the proxy good event implies the good event.

\begin{lemma}
\label{lemma:value-in-class-regular-bonus}
    Suppose that $\EgoodBar$ holds. If $\pi^{k}_h \in \Pi(K(\betaR+\betaP), 2 \betaQ K^2)$ for all $h \in [H]$ then $\hat{Q}^{k}_h \in \widehat{\Q}(\betaR + \betaP, 2 \betaQ K, \betaQh), \hat{V}^{k}_h \in \widehat{\V}(\betaR + \betaP, 2 \betaQ K, \betaQh)$ for all $h \in [H+1]$.
\end{lemma}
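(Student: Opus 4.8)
The plan is a backward induction on $h$ from $H+1$ down to $1$, built around the observation that the algorithm's estimate $\hat{Q}^{k}_h$ in \cref{eq:value-iteration-Q} is \emph{exactly} an instance of the template defining $\widehat{\Q}$. Since $\phiBarH^{\kEpoch}(x,a) = \phi(x,a)\cdot\sig(-\betaWarmup\norm{\phi(x,a)}_{(\hatCovKH)^{-1}} + \log K)$, the common sigmoid factor $\rho := \sig(-\betaWarmup\norm{\phi(x,a)}_{(\hatCovKH)^{-1}} + \log K)\in[0,1]$ pulls out of both summands, so $\hat{Q}^{k}_h(x,a) = \hat{Q}(x,a; \beta, w, \Lambda)$ with $\beta = \betaB$, $w = \hatThetaKH + \psiKH\hat{V}^{k}_{h+1}$, and $\Lambda = (\hatCovKH)^{-1}$. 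Granting $\hat{Q}^{k}_h \in \widehat{\Q}(\betaR+\betaP, 2\betaQ K, \betaQh)$, membership $\hat{V}^{k}_h \in \widehat{\V}(\betaR+\betaP, 2\betaQ K, \betaQh)$ is immediate from $\hat{V}^{k}_h(x) = \sum_{a}\pi^{k}_h(a\mid x)\hat{Q}^{k}_h(x,a)$ and the hypothesis $\pi^{k}_h \in \Pi(K(\betaR+\betaP), 2\betaQ K^2)$. The base case $h=H+1$ is trivial, as $\hat{V}^{k}_{H+1}\equiv 0$ and $\betaQh[H+1]=0$.

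For the inductive step I would first clear the three parametric constraints. The bound $\beta = \betaB = \betaR+\betaP$ meets $C_\beta$ with equality; the spectral condition $(2K)^{-1}I\preceq(\hatCovKH)^{-1}\preceq I$ follows from $I\preceq\hatCovKH = I + \sum_\tau \phi_h^\tau(\phi_h^\tau)\tran \preceq KI$; and $\norm{w}\le 2\betaQ K$ from $\norm{\hatThetaKH}\le K$ (bandit: $(\covKH)^{-1}\preceq I$ and losses in $[0,1]$; full-information: $\norm{\thetaKH}\le\sqrt d$) together with $\norm{\psiKH\hat{V}^{k}_{h+1}}\le K\betaQ$, the latter using $\norm{(\covKH)^{-1}\phi_h^\tau}\le 1$ and the inductive sup-norm bound $\norm{\hat{V}^{k}_{h+1}}_\infty\le\betaQh[h+1]\le\betaQ$.

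The crux is $\norm{\hat{Q}^{k}_h}_\infty\le\betaQh = 2(H+1-h)$. I would decompose $\phi(x,a)\tran w = \ell^k_h(x,a) + \sum_{x'}P_h(x'\mid x,a)\hat{V}^{k}_{h+1}(x') + E$ by swapping the estimates $\hatThetaKH,\psiKH$ for the truths $\thetaKH,\psiH$; Cauchy--Schwarz against $E_1$ (\cref{eq:goodRewardPOLinearRegularBonus}) and against $\bar{E}_2$ (\cref{eq:goodTransitionProxPOLinearRegularBonus}) gives $|E|\le(\betaR+\betaP)\norm{\phi}_{(\covKH)^{-1}}\le(\betaR+\betaP)\,u$, where $u := \norm{\phi}_{(\hatCovKH)^{-1}}\ge\norm{\phi}_{(\covKH)^{-1}}$ (since $\hatCovKH\preceq\covKH$). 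Crucially, $\bar{E}_2$ applies to the algorithm's own iterate $\hat{V}^{k}_{h+1}$ precisely because the induction has already placed it in the class over which $\bar{E}_2$ quantifies --- this is what the uniform proxy event buys us, and why there is no circularity. On the upper side the bonus $\betaB u$ dominates the residual, so the bracketed factor is at most $\ell^k_h(x,a) + \sum_{x'}P_h(x'\mid x,a)\hat{V}^{k}_{h+1}(x')\le 1+2(H-h)$; multiplying by $\rho\in[0,1]$ keeps $\hat{Q}^{k}_h\le 1 + 2(H-h)\le\betaQh$.

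The hard part is the lower bound, and it is exactly here that the contraction is essential. Since $\hat{Q}^{k}_h = \rho\,\phi\tran w - \rho\betaB u$, the only dangerous quantity is the contracted bonus $\rho\betaB u$, which the trivial estimate $\rho\le 1$ cannot control because $\betaB$ is large. I would bound it by a constant by splitting on $u$: for $u\le(\log K)/\betaWarmup$ use $\rho\le 1$ and $\betaB u\le\betaB(\log K)/\betaWarmup = \tfrac14$ (by the choice $\betaWarmup = 4(\betaR+\betaP)\log K$); for $u\ge(\log K)/\betaWarmup$ use $\sig(z)\le e^{z}$ to get $\rho\le Ke^{-\betaWarmup u}$ and note $u\mapsto K\betaB u\, e^{-\betaWarmup u}$ is decreasing beyond its mode $1/\betaWarmup<(\log K)/\betaWarmup$, hence again $\le\tfrac14$ at the threshold. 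Thus $\rho\betaB u\le\tfrac14$, whence $\rho\phi\tran w\ge-2(H-h)-\rho\betaB u\ge-2(H-h)-\tfrac14$ and $\hat{Q}^{k}_h\ge-2(H-h)-\tfrac12\ge-\betaQh$, closing the induction. I expect this constant bound on the contracted bonus, together with the bookkeeping that keeps $\bar{E}_2$ applicable to the iterates without circularity, to be the main obstacle.
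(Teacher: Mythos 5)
Your proposal is correct and follows essentially the same route as the paper: backward induction on $h$, verification of the parametric constraints ($\beta$, $w$, spectral bounds on $\Lambda$), Cauchy--Schwarz against $E_1$ and the proxy event $\bar{E}_2$ applied to $\hat{V}^k_{h+1}$ via the induction hypothesis, and the key step of bounding the contracted bonus $\rho\,\betaB\norm{\phi}_{(\hatCovKH)^{-1}}$ by a constant using the choice $\betaWarmup = 4(\betaR+\betaP)\log K$. Your inline case-split on $u$ is precisely the paper's proof of \cref{lemma:logistic-to-linear}, and your separate upper/lower treatment (where the upper side needs no contraction at all, since the bonus absorbs the estimation error) is only a cosmetic refinement of the paper's one-shot bound on $\abs{\hat{Q}^k_h}$.
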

\begin{proof} 
    We show that the claim holds by backward induction on $h \in [H+1]$.
    \\
    \textbf{Base case $h = H+1$:}
    Since $\hat{V}_{H+1}^{k} = 0$ it is also implied that $\hat{Q}^{k}_{H+1} = 0$. Because $\beta, w = 0 \in \widehat{\Q}(\betaR + \betaP, 2 \betaQ K, \betaQh[H+1] = 0)$ we have that $\hat{Q}^{k}_{H+1} \in \widehat{\Q}(\betaR + \betaP, 2 \betaQ K, \betaQh[H+1] = 0)$, and similarly $V_{H+1}^{k} \in \widehat{\V}(\betaR + \betaP, 2 \betaQ K, \betaQh[H+1] = 0)$.
   
    \textbf{Induction step:}
    Now, suppose the claim holds for $h+1$ and we show it also holds for $h$. We have that
    \begin{align*}
        \abs{\hat{Q}_h^{k}(x,a)}
        &
        =
        \abs{
        \phiBarH^{\kEpoch}(x,a)\tran w_h^{k}
        -
        \betaB \norm{\bar{\phi}_h^{\kEpoch}(x,a)}_{(\hatCovKH)^{-1}}
        }
        \\
        &
        \le
        \abs{
        \phiBarH^{\kEpoch}(x,a)\tran (
        \thetaH + (\hatThetaKH - \thetaH)
        +
        (\psiKH - \psiH) \hat{V}^{k,i}_{h+1}
        +
        \psiH \hat{V}^{k,i}_{h+1}
        )
        }
        +
        \betaB \norm{\bar{\phi}_h^{\kEpoch}(x,a)}_{(\hatCovKH)^{-1}}
        \\
        \tag{triangle inequality, Cauchy-Schwarz, $\hatCovKH \preceq \covKH$}
        &
        \le
        1
        +
        \norm{\hat{V}_{h+1}^{k,i}}_\infty
        +
        \norm{\phiBarH^{\kEpoch}(x,a)}_{(\hatCovKH)^{-1}} 
        \brk[s]*{
        \norm{\hatThetaKH - \thetaH}_{\covKH}
        +
        \norm{(\psiKH - \psiH)\hat{V}^{k,i}_{h+1}}_{\covKH}
        +
        \betaB
        }
        \\
        \tag{induction hypothesis, \cref{eq:goodRewardPOLinearRegularBonus,eq:goodTransitionProxPOLinearRegularBonus}}
        &
        \le
        1
        +
        \betaQh[h+1]
        +
        \brk*{\betaR + \betaPh +\betaB}
        \norm{\phiBarH^{\kEpoch}(x,a)}_{(\hatCovKH)^{-1}}
        \\
        \tag{$\phiBarH^{\kEpoch}$ definition}
        &
        \le
        1
        +
        \betaQh[h+1]
        +
        \brk*{\betaR + \betaPh + \betaB}
        \max_{y \ge 0} \brk[s]{y \cdot \sig(-\betaWarmup y + \log K)}
        \\
        \tag{\cref{lemma:logistic-to-linear}}
        &
        \le
        1
        +
        \betaQh[h+1]
        +
        \frac{2 \log K}{\betaWarmup}
        \brk*{\betaR + \betaPh + \betaB}
        \\
        \tag{$\betaWarmup \ge 2 (\betaR + \betaPh + \betaB) \log K$}
        &
        \le
        2
        +
        \betaQh[h+1]
        \\
        &
        =
        \betaQh
        .
    \end{align*}
    Additionally, $\betaB = \betaR + \betaP$, $(\hatCovKH)^{-1} \preceq I$, $\norm{\hatCovKH} \le 1 + \sum_{k \in [K]} \norm{\phi_h^k} \le 2K$, thus $(\hatCovKH)^{-1} \succeq (2K)^{-1} I$, and
    \begin{align*}
        \norm{w_h^{k}}
        &
        =
        \norm{
        \hatThetaKH
        +
        \psiKH  \hat{V}^{k,i}_{h+1}
        }
        \le
        K + \betaQ K
        \le
        2 \betaQ K
        =
        C_w
        .
    \end{align*}
    We conclude that $\hat{Q}_h^k \in \widehat{\Q}(\betaR + \betaP, 2 \betaQ K, \betaQh)$.
    Since $\pi_h^{k} \in \Pi(K(\betaR + \betaP), 2 \betaQ K^2)$, we also conclude that $\hat{V}^{k}_h \in \widehat{\V}(\betaR + \betaP, 2 \betaQ K, \betaQh)$, proving the induction step and finishing the proof.
\end{proof}

\begin{lemma*}[restatement of \cref{lemma:good-event-PO-linear-regular-bonus}]
    Consider the parameter setting of \cref{thm:regret-bound-PO-linear-regular-bonus}.
    If $\etaO \le 1, \betaWarmup^2 \le K / (32 H d)$ then $\Pr[\Egood] \ge 1 - \delta.$
\end{lemma*}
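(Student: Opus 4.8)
The plan is to derive the good event $\Egood = E_1 \cap E_2 \cap E_3$ from the proxy good event $\EgoodBar = E_1 \cap \bar{E}_2 \cap E_3$, which already holds with probability at least $1-\delta$ by \cref{lemma:good-proxy-event-PO-regular-bonus} (under the same hypotheses $\etaO \le 1$, $\betaWarmup^2 \le K/(32Hd)$). Since $E_1$ (\cref{eq:goodRewardPOLinearRegularBonus}) and $E_3$ (\cref{eq:goodBernsteinPOLinearRegularBonus}) are common to both events, it suffices to argue that, conditioned on $\EgoodBar$, the event $E_2$ (\cref{eq:goodTransitionPOLinearRegularBonus}) holds \emph{deterministically}. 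Recall $E_2$ bundles two claims for all $k,h$: the sup-norm bound $\norm{\hat{Q}^{k}_{h+1}}_\infty \le \betaQ$ and the dynamics estimation bound $\norm{(\psiH - \psiKH)\hat{V}^{k}_{h+1}}_{\covKH} \le \betaP$.

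The key observation is that both parts of $E_2$ follow once the algorithm's iterates are shown to lie in the function classes of \cref{sec:good-event-proofs-PO-linear-regular-bonus}. Concretely, if $\hat{Q}^{k}_h \in \widehat{\Q}(\betaR+\betaP, 2\betaQ K, \betaQh)$ with $\betaQh = 2(H+1-h) \le 2H = \betaQ$, then the sup-norm bound is immediate; and if $\hat{V}^{k}_{h+1} \in \widehat{\V}(\betaR+\betaP, 2\betaQ K, \betaQh[h+1])$, then the uniform-over-the-class bound $\bar{E}_2$ (\cref{eq:goodTransitionProxPOLinearRegularBonus}) applies to the specific value function $\hat{V}^{k}_{h+1}$, yielding exactly the dynamics bound. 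Thus the whole task reduces to verifying the membership $\hat{Q}^{k}_h, \hat{V}^{k}_h$ in their respective classes, for which \cref{lemma:value-in-class-regular-bonus} is tailor-made. Note that its standing requirement $\betaWarmup \ge 2(\betaR+\betaP+\betaB)\log K$ holds (with equality) under the parameter choice $\betaWarmup = 4(\betaR+\betaP)\log K$, $\betaB = \betaR+\betaP$.

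The remaining gap is that \cref{lemma:value-in-class-regular-bonus} \emph{assumes} the policy membership $\piK[h] \in \Pi\brk{K(\betaR+\betaP), 2\betaQ K^2}$, which I would establish by induction on the episode index $k$ within each epoch. For the base case $k = \kEpoch$ the policy is reset to uniform, which lies in the class with $\beta = w = 0$. For the inductive step, the OMD update (\cref{eq:policy-omd-update}) gives that $\piK[h]$ is a soft-max over $-\etaO \sum_{\tau} \hat{Q}^{\tau}_h$, summed over the earlier episodes $\tau$ of the current epoch. The crucial structural point is that \emph{within an epoch} the covariance $\hatCovKH$, and hence the sigmoid contraction factor, is fixed across episodes; consequently the aggregate again has the form $\hat{Q}(\cdot,\cdot;\beta,w,\Lambda)$ with shared $\Lambda = \hatCovKH$, aggregated bonus scale $\beta = \etaO\sum_\tau \betaB \le K(\betaR+\betaP)$ (using $\etaO \le 1$ and at most $K$ episodes per epoch), and aggregated weight $\norm{w} = \norm{\etaO\sum_\tau w^\tau_h} \le 2\betaQ K^2$ (using the inductive hypothesis that each $\hat{Q}^\tau_h \in \widehat{\Q}$, so $\norm{w^\tau_h}\le 2\betaQ K$). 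This places $\piK[h]$ in $\Pi\brk{K(\betaR+\betaP), 2\betaQ K^2}$, so \cref{lemma:value-in-class-regular-bonus} applies and returns the class membership of $\hat{Q}^{k}_h, \hat{V}^{k}_h$, closing the induction. The main obstacle is precisely this circular dependency—policy-class membership needs value-class membership from the previous episodes, which in turn needs the policy to be in its class—and the resolution hinges on the fact that the epoch structure keeps the contraction features (and thus the covariance in the bonus and sigmoid) frozen within an epoch, so that aggregating over the $\le K$ episodes only inflates the parameter scales by controlled polynomial-in-$K$ factors rather than enlarging the effective parameter dimension. Once $E_2$ is secured, combining with $E_1$ and $E_3$ from $\EgoodBar$ yields $\Pr[\Egood]\ge 1-\delta$.
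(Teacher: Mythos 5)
Your proposal is correct and follows essentially the same route as the paper's proof: reduce $\Egood$ to the proxy event $\EgoodBar$ via \cref{lemma:good-proxy-event-PO-regular-bonus}, then establish $E_2$ deterministically by an episode-wise induction within each epoch that alternates between policy-class membership (base case uniform, aggregated weights bounded by $2\betaQ K^2$ and bonus scale by $K(\betaR+\betaP)$ thanks to the frozen covariance $\hatCovKH$) and value/Q-class membership via \cref{lemma:value-in-class-regular-bonus}, finally invoking $\bar{E}_2$ on $\hat{V}^k_{h+1}$. Your identification of the circular dependency and its resolution through the epoch structure is exactly the paper's argument.
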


\begin{proof}
    Suppose that $\EgoodBar$ holds. By \cref{lemma:good-proxy-event-PO-regular-bonus}, this occurs with probability at least $1-\delta$. We show that $\EgoodBar$ implies $\Egood$, thus concluding the proof.
    Notice that 
    \begin{align*}
        &
        \pi_h^{k}(a | x)
        \propto
        \exp\brk*{\eta \sum_{k' = \kEpoch}^{k-1} \hat{Q}_h^{k'}(x,a)}
        \\
        &
        =
        \exp\brk*{
         \sig\brk{-\betaWarmup\norm{\phi(x,a)}_{(\hatCovKH)^{-1}} + \log K}
         \cdot
         \brk[s]*{
         \phi(x,a)\tran  \sum_{k' = \kEpoch}^{k-1} \eta w_h^{k}
         -
         \eta \betaB (k - \kEpoch)\norm{\phi(x,a)}_{(\hatCovKH)^{-1}}
         }
         }
        .
    \end{align*}
    We show by induction on $k \in \setEpochsE$ that $\pi_h^{k} \in \Pi(K(\betaR+\betaP), 2\betaQ K^2)$ for all $h \in [H]$. For the base case, $k = \kEpoch$, $\pi_h^{k}$ are uniform, corresponding to $w,\beta=0 \in \Pi(K(\betaR+\betaP), 2 \betaQ K^2)$.
    Now, suppose the claim holds for all $k' < k$. Then by \cref{lemma:value-in-class-regular-bonus} we have that $\hat{Q}_h^{k'} 
    \in \widehat{\Q}(\betaR + \betaP, 2\betaQ K, \betaQh)$ for all $k' < k$ and $h \in [H]$. This implies that 
    $
    \norm{\sum_{k' = \kEpoch}^{k-1} \eta w_h^{k}}
    \le
    2 \betaQ K^2
    $
    for all $h \in [H]$, thus $\pi_h^{k} \in \Pi(K(\betaR+\betaP), 2\betaQ K^2)$ for all $h \in [H]$, concluding the induction step.

    Now, since $\pi_h^{k} \in \Pi(K(\betaR+\betaP), 2\betaQ K^2)$ for all $k \in [K], h \in [H]$, we can apply \cref{lemma:value-in-class-regular-bonus} to get that $\hat{Q}_h^{k} \in \widehat{\Q}(\betaR+\betaP, 2 \betaQ K, \betaQh), \hat{V}_h^{k} \in \widehat{\V}(\betaR+\betaP, 2 \betaQ K, \betaQh)$ for all $k \in [K], h \in [H]$.
    Using $\bar{E}_2$ (\cref{eq:goodTransitionProxPOLinearRegularBonus}) we conclude that $E_2$ (\cref{eq:goodTransitionPOLinearRegularBonus}) holds, thus concluding the proof.
\end{proof}

\newpage
\section{Technical tools}
\label{sec:technical}

\subsection{Online Mirror Descent}
We begin with a standard regret bound for entropy regularized online mirror descent (hedge). See \cite[Lemma 25]{sherman2023rate}.

\begin{lemma}
\label{lem:omd-regret}
    Let $y_1, \dots, y_T \in \RR[A]$ be any sequence of vectors, and $\eta > 0$ such that $\eta y_t(a) \ge -1$ for all $t \in [T],a \in [A]$. Then if $x_t \in \Delta_A$ is given by $x_1(a) = 1/A \ \forall a$, and for $t \ge 1$:
    \begin{align*}
        x_{t+1}(a)
        =
        \frac{x_t(a) e^{- \eta y_t(a)}}{\sum_{a' \in [A]} x_t(a') e^{- \eta y_t(a')}},
    \end{align*}
    then,
    \begin{align*}
        \max_{x \in \Delta_A} \sum_{t=1}^T \sum_{a \in [A]} y_t(a) \brk*{x_t(a) - x(a)}
        \le
        \frac{\log A}{\eta}
        +
        \eta \sum_{t=1}^T \sum_{a \in [A]} x_t(a) y_t(a)^2.
    \end{align*}
\end{lemma}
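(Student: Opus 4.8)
The plan is to prove this by the standard log-partition (potential) argument for multiplicative weights / Hedge, which produces the second-order regret term directly. First I would introduce cumulative losses $L_t(a) = \sum_{s=1}^{t} y_s(a)$ and note that the stated update unrolls into the closed form $x_{t+1}(a) \propto x_1(a)\exp(-\eta L_t(a))$. This makes the natural potential the normalizing constant $Z_t = \sum_{a \in [A]} x_1(a)\exp(-\eta L_t(a))$, with $Z_0 = \sum_a x_1(a) = 1$. The key one-step identity is that the ratio of consecutive potentials equals the $x_t$-weighted exponential moment of the losses, namely $Z_t / Z_{t-1} = \sum_{a} x_t(a)\exp(-\eta y_t(a))$, which follows by substituting $x_t(a) = x_1(a)\exp(-\eta L_{t-1}(a))/Z_{t-1}$.

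For the upper bound on the potential, I would invoke the elementary inequality $e^{-z} \le 1 - z + z^2$, which holds precisely for $z \ge -1$; this is exactly where the hypothesis $\eta y_t(a) \ge -1$ is used. Applying it with $z = \eta y_t(a)$ and taking the $x_t$-weighted average gives $Z_t/Z_{t-1} \le 1 - \eta \sum_a x_t(a) y_t(a) + \eta^2 \sum_a x_t(a) y_t(a)^2$. Then the bound $1 + u \le e^{u}$, followed by taking logarithms and telescoping over $t = 1,\dots,T$, yields $\log Z_T \le -\eta \sum_{t} \sum_a x_t(a) y_t(a) + \eta^2 \sum_{t} \sum_a x_t(a) y_t(a)^2$.

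For the matching lower bound, I would fix an arbitrary comparator $x \in \Delta_A$ and apply Jensen's inequality to the concave $\log$ (equivalently, use nonnegativity of relative entropy), writing $\log Z_T = \log \sum_a x(a)\,\bigl(\tfrac{1}{A x(a)}e^{-\eta L_T(a)}\bigr) \ge -\log A - \sum_a x(a)\log x(a) - \eta \sum_a x(a) L_T(a)$. Discarding the nonnegative entropy term $-\sum_a x(a)\log x(a) \ge 0$ (I avoid the symbol $H$ here, since it denotes the horizon elsewhere) and using $\sum_a x(a) L_T(a) = \sum_t \sum_a x(a) y_t(a)$ gives $\log Z_T \ge -\log A - \eta \sum_t \sum_a x(a) y_t(a)$. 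Chaining the two bounds on $\log Z_T$, cancelling, dividing by $\eta > 0$, and observing that the resulting inequality holds for \emph{every} $x \in \Delta_A$ so that one may pass to the maximum, delivers exactly the claimed bound.

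The only delicate point is the quadratic estimate $e^{-z} \le 1 - z + z^2$ valid for $z \ge -1$: without the stated constraint $\eta y_t(a) \ge -1$ one would be forced into a cruder first-order bound, and it is precisely this inequality that generates the variance-type term $\eta \sum_t \sum_a x_t(a) y_t(a)^2$ rather than a worst-case $\eta \sum_t \max_a y_t(a)^2$ term. Everything else is routine algebra (the telescoping of logarithms and the Jensen step), so I expect no further obstacles.
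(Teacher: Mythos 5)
Your proof is correct and is the standard potential-function (log-partition) argument for Hedge with a second-order term: the identity $Z_t/Z_{t-1}=\sum_a x_t(a)e^{-\eta y_t(a)}$, the quadratic bound $e^{-z}\le 1-z+z^2$ for $z\ge -1$ (the one place the hypothesis $\eta y_t(a)\ge -1$ is needed, as you note), telescoping, and Jensen against a fixed comparator all check out and chain to exactly the claimed bound. For reference, the paper does not prove this lemma at all --- it imports it as Lemma 25 of \citet{sherman2023rate} --- so your argument serves as a self-contained replacement for that citation; the only cosmetic caveats are that the quadratic inequality actually remains valid somewhat below $z=-1$ (so ``holds precisely for $z\ge -1$'' is a slight overstatement), and the Jensen step should formally be taken over the support of $x$ (with the convention $0\log 0=0$) to avoid $\log$ of zero.
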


\subsection{Value difference lemma}
We use the following extended value difference lemma by \cite{shani2020optimistic}. We note that the lemma holds unchanged even for MDP-like structures where the transition kernel $P$ is a sub-stochastic transition kernel, i.e., one with non-negative values that sum to at most one (instead of exactly one).

\begin{lemma}[Extended Value difference Lemma 1 in \cite{shani2020optimistic}]
\label{lemma:extended-value-difference}
    Let $\M$ be a (sub) MDP, $\pi, \hat{\pi} \in \piClass$ be two policies, $\hat{Q}_h: \X \times \A \to \RR, h \in [H]$ be arbitrary function, and $\hat{V}_h : \X \to \RR$ be defined as
    $\hat{V}(x) = \sum_{a \in \A} \hat{\pi}_h(a \mid x) \hat{Q}_h(x, a).$ Then
    \begin{align*}
        V^\pi_1(x_1) - \hat{V}_1(x_1)
        &
        =
        \EE[P,\pi]
        \brk[s]*{
        \sum_{h \in [H]}\sum_{a \in \A} \hat{Q}_h(x_h,a) (\pi(a \mid x_h) - \hat{\pi}(a \mid x_h))
        }
        \\
        &
        +
        \EE[P,\pi]
        \brk[s]*{
        \sum_{h \in [H]} \ell_h(x_h, a_h) + \sum_{x' \in \X} P(x' \mid x_h, a_h) \hat{V}_{h+1}(x') - \hat{Q}_h(x_h, a_h)
        }
        .
    \end{align*}
    We note that, in the context of linear MDP $\ell_h(x_h, a_h) + \sum_{x' \in \X} P(x' \mid x_h, a_h) \hat{V}_{h+1}(x') = \phi(x_h, a_h)\tran(\thetaH + \psiH \hat{V}_{h+1}).$
\end{lemma}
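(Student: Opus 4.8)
The plan is to prove this by a single one-step decomposition followed by a downward unrolling over the horizon, arranged so that the normalization $\sum_{x'}P_h(x'\mid x,a)=1$ is \emph{never} used; this is what makes the argument apply verbatim to sub-stochastic kernels. First I would fix a layer $h$ and state $x_h$ and compare the two defining expressions directly: $\hat V_h(x_h)=\sum_a \hat\pi_h(a\mid x_h)\hat Q_h(x_h,a)$ against the Bellman recursion $V_h^\pi(x_h)=\sum_a \pi_h(a\mid x_h)[\ell_h(x_h,a)+\sum_{x'}P_h(x'\mid x_h,a)V_{h+1}^\pi(x')]$. Adding and subtracting $\sum_a \pi_h(a\mid x_h)\hat Q_h(x_h,a)$ isolates the policy-mismatch term $\sum_a (\pi_h(a\mid x_h)-\hat\pi_h(a\mid x_h))\hat Q_h(x_h,a)$, and then adding and subtracting $\sum_{x'}P_h(x'\mid x_h,a)\hat V_{h+1}(x')$ inside the remaining sum isolates the Bellman-residual term $\sum_a \pi_h(a\mid x_h)[\ell_h(x_h,a)+\sum_{x'}P_h(x'\mid x_h,a)\hat V_{h+1}(x')-\hat Q_h(x_h,a)]$.

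After these two purely algebraic insertions, what remains is exactly $\sum_a \pi_h(a\mid x_h)\sum_{x'}P_h(x'\mid x_h,a)(V_{h+1}^\pi(x')-\hat V_{h+1}(x'))$, i.e. the one-step backup of the \emph{same} gap $V_{h+1}^\pi-\hat V_{h+1}$ one layer deeper. This gives a clean recursion for $V_h^\pi(x_h)-\hat V_h(x_h)$ as the sum of the two named terms plus the next-layer gap. I would then unroll this recursion from $h=1$ down to $h=H$, terminating via the boundary condition $V_{H+1}^\pi\equiv\hat V_{H+1}\equiv 0$. The nested one-step backups $\sum_a \pi_h(a\mid\cdot)\sum_{x'}P_h(x'\mid\cdot,a)$ compose into the trajectory expectation $\EE[P,\pi][\cdot]$: the sum over $a$ weighted by $\pi_h$ becomes sampling $a_h\sim\pi_h(\cdot\mid x_h)$, turning the policy-mismatch term (a function of $x_h$ alone) into $\sum_a \hat Q_h(x_h,a)(\pi(a\mid x_h)-\hat\pi(a\mid x_h))$ and the residual term into $\ell_h(x_h,a_h)+\sum_{x'}P_h(x'\mid x_h,a_h)\hat V_{h+1}(x')-\hat Q_h(x_h,a_h)$ evaluated at the sampled pair. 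Summing over $h$ recovers precisely the two expectation terms in the statement.

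The one point that needs care is the sub-stochastic case flagged in the remark: for a leaky kernel the partial-trajectory weights $\prod_{h'}\pi_{h'}(a_{h'}\mid x_{h'})P_{h'}(x_{h'+1}\mid x_{h'},a_{h'})$ no longer sum to one, so $\EE[P,\pi]$ is not a genuine probability expectation. I would therefore phrase the unrolling as repeated substitution of the one-step identity into the residual term, emphasizing that each substitution uses only linearity of the map $v\mapsto\sum_{x'}P_h(x'\mid x,a)v(x')$ and at no point the equality $\sum_{x'}P_h(x'\mid x,a)=1$; hence the telescoping and the identification with $\EE[P,\pi]$ (read as the unnormalized unrolled sum) go through unchanged. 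The closing linear-MDP remark is then immediate: substituting $P_h(x'\mid x,a)=\phi(x,a)\tran\psiH(x')$ and $\ell_h(x,a)=\phi(x,a)\tran\thetaH$ and using linearity gives $\ell_h(x_h,a_h)+\sum_{x'}P_h(x'\mid x_h,a_h)\hat V_{h+1}(x')=\phi(x_h,a_h)\tran(\thetaH+\psiH\hat V_{h+1})$.
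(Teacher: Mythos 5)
Your proof is correct. Note that the paper itself does not prove this lemma: it imports it from \cite{shani2020optimistic} (Lemma 1 there) and merely \emph{asserts}, in a remark, that it holds unchanged for sub-stochastic kernels. Your argument is the standard one-step decomposition (two add-and-subtract insertions isolating the policy-mismatch and Bellman-residual terms, then unrolling the recursion for the gap $V_h^\pi-\hat V_h$ with the boundary condition at $H+1$), which is essentially the proof in the cited reference. The genuine value you add is making the paper's sub-stochasticity remark rigorous: you correctly identify that every step uses only linearity of $v\mapsto\sum_{x'}P_h(x'\mid x,a)v(x')$ and never the normalization $\sum_{x'}P_h(x'\mid x,a)=1$, and you correctly flag that in the leaky case $\EE[P,\pi]$ must be read as the unnormalized unrolled sum over trajectory weights $\prod_{h'}\pi_{h'}(a_{h'}\mid x_{h'})P_{h'}(x_{h'+1}\mid x_{h'},a_{h'})$ --- which is exactly the reading the paper relies on when it applies the lemma with the contracted kernel $\bar P^{k_e}$ in its regret decomposition. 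One cosmetic point: the lemma statement leaves the convention $V^\pi_{H+1}\equiv\hat V_{H+1}\equiv 0$ implicit, so stating it as you do when terminating the unrolling is appropriate rather than an extra assumption.
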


\subsection{Algebraic lemmas}

Next, is a well-known bound on harmonic sums \citep[see, e.g.,][Lemma 13]{cohen2019learning}. This is used to show that the optimistic and true losses are close on the realized predictions. 
\begin{lemma}
\label{lemma:elliptical-potential}
    Let $z_t \in \RR[d']$ be a sequence such that $\norm{z_t}^2 \le \lambda$, and define $V_t = \lambda I + \sum_{s=1}^{t-1} z_s z_s\tran$. Then
    \begin{align*}
        \sum_{t=1}^{T} \norm{z_t}_{V_t^{-1}}
        \le
        \sqrt{T \sum_{t=1}^{T} \norm{z_t}_{V_t^{-1}}^{2}}
        \le
        \sqrt{2 T d' \log (T+1)}
        .
    \end{align*}
\end{lemma}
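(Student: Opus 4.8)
The plan is to prove the elliptical potential lemma (\cref{lemma:elliptical-potential}) in three moves: first reduce the sum of norms to a sum of squared norms via Cauchy--Schwarz, then relate the sum of squared norms to a telescoping determinant ratio, and finally bound that ratio. The first inequality is immediate: by Cauchy--Schwarz in $\RR[T]$,
\begin{align*}
    \sum_{t=1}^{T} \norm{z_t}_{V_t^{-1}}
    =
    \sum_{t=1}^{T} 1 \cdot \norm{z_t}_{V_t^{-1}}
    \le
    \sqrt{T \sum_{t=1}^{T} \norm{z_t}_{V_t^{-1}}^{2}}
    .
\end{align*}
So the real content is the second inequality, bounding $\sum_{t=1}^{T} \norm{z_t}_{V_t^{-1}}^{2}$ by $2 d' \log(T+1)$.

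\textbf{Core argument.} The standard route is the potential/telescoping trick. First I would establish the pointwise bound $\norm{z_t}_{V_t^{-1}}^2 \le 1$, which holds because $V_t \succeq \lambda I$ gives $\norm{z_t}_{V_t^{-1}}^2 \le \norm{z_t}^2/\lambda \le 1$ using the hypothesis $\norm{z_t}^2 \le \lambda$. This lets me apply the elementary inequality $u \le 2\log(1+u)$ valid for $u \in [0,1]$, so that
\begin{align*}
    \norm{z_t}_{V_t^{-1}}^{2}
    \le
    2 \log\brk*{1 + \norm{z_t}_{V_t^{-1}}^{2}}
    .
\end{align*}
Next I would identify $1 + \norm{z_t}_{V_t^{-1}}^2$ as the determinant ratio $\det(V_{t+1})/\det(V_t)$, using $V_{t+1} = V_t + z_t z_t\tran$ and the matrix determinant lemma $\det(V_t + z_t z_t\tran) = \det(V_t)(1 + z_t\tran V_t^{-1} z_t)$. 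Summing the logarithms telescopes:
\begin{align*}
    \sum_{t=1}^{T} \norm{z_t}_{V_t^{-1}}^{2}
    \le
    2 \sum_{t=1}^{T} \log \frac{\det(V_{t+1})}{\det(V_t)}
    =
    2 \log \frac{\det(V_{T+1})}{\det(V_1)}
    .
\end{align*}

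\textbf{Final bound.} It remains to control $\det(V_{T+1})/\det(V_1)$. Since $V_1 = \lambda I$, we have $\det(V_1) = \lambda^{d'}$. For the numerator, $V_{T+1} = \lambda I + \sum_{s=1}^{T} z_s z_s\tran$ has trace at most $\lambda d' + \sum_{s=1}^T \norm{z_s}^2 \le \lambda d'(1 + T)$, and by AM--GM on the eigenvalues $\det(V_{T+1}) \le (\tr{V_{T+1}}/d')^{d'} \le (\lambda(1+T))^{d'}$. Hence $\det(V_{T+1})/\det(V_1) \le (1+T)^{d'}$, giving $\sum_t \norm{z_t}_{V_t^{-1}}^2 \le 2 d' \log(T+1)$ as claimed. \textbf{The main obstacle} I anticipate is purely bookkeeping: making sure the index conventions ($V_t$ uses $z_1,\dots,z_{t-1}$, so the new point $z_t$ enters at step $t\to t+1$) line up so that the determinant ratio telescopes cleanly and the trace bound picks up the correct number of terms; the inequalities themselves are elementary.
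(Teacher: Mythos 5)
Your proof is correct and is exactly the canonical argument: Cauchy--Schwarz, then the pointwise bound $\norm{z_t}_{V_t^{-1}}^2 \le 1$ (from $\norm{z_t}^2 \le \lambda$) combined with $u \le 2\log(1+u)$ on $[0,1]$ and the matrix determinant lemma to telescope the sum into $2\log\brk*{\det(V_{T+1})/\det(V_1)}$, which the trace/AM--GM bound controls by $2d'\log(T+1)$. Note that the paper itself gives no proof of this lemma---it cites it as a well-known bound (Lemma 13 of Cohen et al., 2019)---and your argument, including the index bookkeeping $V_{t+1} = V_t + z_t z_t\tran$, is precisely the standard one behind that citation.
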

%

Next, we need the following well-known matrix inequality.
\begin{lemma}[\cite{cohen2019learning}, Lemma 27]
\label{lemma:matrix-norm-inequality}
    If $N \succeq M \succ 0$ then for any vector $v$
    \begin{align*}
        \norm{v}_{N}^2
        \le
        \frac{\det{N}}{\det{M}} \norm{v}_{M}^2
    \end{align*}
\end{lemma}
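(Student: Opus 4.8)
The plan is to reduce the inequality to a statement about a single symmetric matrix whose eigenvalues are all at least one, where the determinant bound becomes transparent. Since $M \succ 0$ admits a symmetric positive-definite square root $M^{1/2}$, I would substitute $v = M^{-1/2} w$, so that $\norm{v}_M^2 = w\tran w = \norm{w}^2$ and $\norm{v}_N^2 = w\tran A w$, where $A = M^{-1/2} N M^{-1/2}$. The assumption $N \succeq M$ is equivalent, via congruence by $M^{-1/2}$, to $A \succeq I$, so $A$ is symmetric with eigenvalues $\lambda_1, \dots, \lambda_d \ge 1$. In these coordinates the claim becomes $w\tran A w \le \det(A)\, \norm{w}^2$, using that $\det(A) = \det(M^{-1}) \det(N) = \det N / \det M$.

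The remaining computation is a two-line eigenvalue argument. First, the Rayleigh quotient bound gives $w\tran A w \le \lambda_{\max}(A)\, \norm{w}^2$. Second, since every eigenvalue of $A$ is at least $1$, the largest eigenvalue is dominated by the full product, $\lambda_{\max}(A) \le \prod_{i} \lambda_i = \det(A)$. Chaining these yields $w\tran A w \le \det(A)\, \norm{w}^2$, which is exactly the desired bound once the substitution is undone.

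The step that carries the real content — and the only place where both hypotheses $N \succeq M$ and $M \succ 0$ are essential — is the inequality $\lambda_{\max}(A) \le \det(A)$. It holds precisely because the normalization by $M^{-1/2}$ forces every eigenvalue of $A$ above $1$; without the lower bound $A \succeq I$ the remaining eigenvalues could have product below one and the bound would fail. I do not anticipate any analytic obstacle beyond verifying that $M^{1/2}$ and $M^{-1/2}$ are well defined (guaranteed by $M \succ 0$) and that the congruence $A = M^{-1/2} N M^{-1/2}$ preserves the ordering against the identity, both of which are standard facts about positive-definite matrices.
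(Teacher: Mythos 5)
Your proof is correct, and all of its steps check out: the substitution $w = M^{1/2}v$ is a bijection since $M \succ 0$, the congruence $A = M^{-1/2} N M^{-1/2} \succeq I$ preserves the ordering, and the chain $w\tran A w \le \lambda_{\max}(A)\,\norm{w}^2 \le \det(A)\,\norm{w}^2$ is valid because the remaining eigenvalues of $A$ all lie above $1$. Note that the paper itself offers no proof of this lemma---it imports it verbatim from \cite{cohen2019learning} (Lemma 27)---and your argument is precisely the standard one used to establish this fact in that literature, so there is nothing to reconcile.
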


Next, we need a bound on the Lipschitz constant of the spectral norm of a square-root matrix.
\begin{lemma}
\label{lemma:norm-of-sqrt-is-lipschitz}
For any $\lambda > 0$ and matrices $\Lambda, \Lambda' \in \RR[d \times d]$ satisfying $\Lambda, \Lambda' \succeq \lambda I$ we have that
\begin{align*}
    \norm{\Lambda^{1/2} - \Lambda'^{1/2}}
    \le
    \frac{1}{2 \sqrt{\lambda}}\norm{\Lambda - \Lambda'}
    .
\end{align*}
\end{lemma}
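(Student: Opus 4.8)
The plan is to use the integral representation of the matrix square root, which turns this operator-Lipschitz estimate into a single scalar integral with the exact constant falling out. For any symmetric positive definite $M$ one has $M^{1/2} = \tfrac{1}{\pi}\int_0^\infty M(M + tI)^{-1}\, t^{-1/2}\, dt$; this follows by diagonalizing $M$ and invoking the scalar identity $\sqrt{s} = \tfrac{1}{\pi}\int_0^\infty \tfrac{s}{s+t}\, t^{-1/2}\, dt$, which I would verify by the substitution $t = s u$ and the standard value $\int_0^\infty \tfrac{u^{-1/2}}{1+u}\,du = \pi$. Writing $M(M+tI)^{-1} = I - t(M+tI)^{-1}$, the identity part cancels in the difference, so that $\Lambda^{1/2} - \Lambda'^{1/2} = \tfrac{1}{\pi}\int_0^\infty t^{1/2}\bigl[(\Lambda'+tI)^{-1} - (\Lambda+tI)^{-1}\bigr]\, dt$.

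Next I would apply the resolvent identity $(\Lambda' + tI)^{-1} - (\Lambda + tI)^{-1} = (\Lambda' + tI)^{-1}(\Lambda - \Lambda')(\Lambda + tI)^{-1}$ to get the clean representation $\Lambda^{1/2} - \Lambda'^{1/2} = \tfrac{1}{\pi}\int_0^\infty t^{1/2}(\Lambda' + tI)^{-1}(\Lambda - \Lambda')(\Lambda + tI)^{-1}\, dt$. Taking spectral norms, using submultiplicativity, and using $\norm{(\Lambda + tI)^{-1}} \le (\lambda + t)^{-1}$ (which holds because $\Lambda \succeq \lambda I$ forces every eigenvalue of $\Lambda + tI$ to be at least $\lambda + t$, and likewise for $\Lambda'$), yields $\norm{\Lambda^{1/2} - \Lambda'^{1/2}} \le \tfrac{1}{\pi}\norm{\Lambda - \Lambda'}\int_0^\infty \tfrac{t^{1/2}}{(\lambda+t)^2}\, dt$. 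The final step is the scalar integral: the substitution $t = \lambda u$ gives $\int_0^\infty \tfrac{t^{1/2}}{(\lambda+t)^2}\,dt = \lambda^{-1/2}\int_0^\infty \tfrac{u^{1/2}}{(1+u)^2}\,du = \lambda^{-1/2} B(3/2,1/2) = \tfrac{\pi}{2\sqrt{\lambda}}$, so the factors of $\pi$ cancel and we land exactly on $\norm{\Lambda^{1/2} - \Lambda'^{1/2}} \le \tfrac{1}{2\sqrt{\lambda}}\norm{\Lambda - \Lambda'}$.

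The step I expect to be the real obstacle is obtaining the sharp constant in the spectral (operator) norm, as opposed to the Frobenius norm. The tempting elementary route is to set $X = \Lambda^{1/2} - \Lambda'^{1/2}$ and note $\Lambda - \Lambda' = \Lambda^{1/2}X + X\Lambda'^{1/2}$ (a Sylvester equation), then diagonalize to obtain $X_{ij} = (\Lambda-\Lambda')_{ij}/(\mu_i + \nu_j)$ with $\mu_i,\nu_j \ge \sqrt{\lambda}$; this instantly gives the Frobenius bound since each denominator is at least $2\sqrt{\lambda}$, but a Hadamard-type reweighting of the entries need not contract the operator norm, so it does not close the argument directly. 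The integral representation above is precisely what circumvents this difficulty.

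As a remark, one can stay entirely within the Sylvester viewpoint and still recover the operator-norm bound by writing $\tfrac{1}{\mu_i + \nu_j} = \int_0^\infty e^{-(\mu_i+\nu_j)s}\,ds$, which expresses $X$ as $\int_0^\infty D_s (\Lambda-\Lambda') D_s'\, ds$ for diagonal conjugations with $\norm{D_s}, \norm{D_s'} \le e^{-\sqrt{\lambda}\,s}$; bounding the integrand by $e^{-2\sqrt{\lambda}\,s}\norm{\Lambda-\Lambda'}$ and integrating gives the same $\int_0^\infty e^{-2\sqrt{\lambda}\,s}\,ds = 1/(2\sqrt{\lambda})$ factor. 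Either route produces the claimed constant, and since only the inequality is needed here I would present whichever is shorter to write out in full.
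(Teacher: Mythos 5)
Your proof is correct, but it takes a genuinely different route from the paper's. In fact, the paper's proof is precisely the ``tempting elementary route'' you dismissed --- rescued by one twist. Write $X = \Lambda^{1/2} - \Lambda'^{1/2}$ and use the same Sylvester identity $\Lambda - \Lambda' = X\Lambda^{1/2} + \Lambda'^{1/2}X$; but instead of diagonalizing $\Lambda^{1/2}$ and $\Lambda'^{1/2}$ separately (which creates the Hadamard-multiplier problem you correctly identified), the paper exploits the fact that $X$ itself is symmetric, so $\norm{X} = \abs{\mu}$ for some eigenvalue $\mu$ of $X$ with eigenvector $x$. Testing the identity against this single vector gives $x\tran(\Lambda - \Lambda')x = \mu\, x\tran(\Lambda^{1/2} + \Lambda'^{1/2})x$, and since $\abs{x\tran(\Lambda-\Lambda')x} \le \norm{x}^2\norm{\Lambda - \Lambda'}$ while $x\tran(\Lambda^{1/2}+\Lambda'^{1/2})x \ge 2\sqrt{\lambda}\norm{x}^2$, the bound $\abs{\mu} \le \norm{\Lambda - \Lambda'}/(2\sqrt{\lambda})$ follows in three lines with no integrals; the operator norm is handled for free because it is attained at an eigenvalue of the symmetric matrix $X$. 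Your Cauchy/resolvent-integral argument (and the Laplace-transform variant in your remark) buys generality: it is the standard template for operator-Lipschitz estimates and adapts to other operator monotone functions such as $M^{\alpha}$, with the sharp constant emerging from a Beta-function evaluation. The paper's eigenvector argument buys brevity and complete elementariness, but leans on algebra specific to the square root (namely that $f(\Lambda)^2 = \Lambda$ yields the Sylvester identity), so it does not generalize as readily. Both arguments are sound and give the identical constant.
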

\begin{proof}
    Let $\mu$ be an eigenvalue of $\Lambda^{1/2} - \Lambda'^{1/2}$ with eigenvector $x \in \RR[d]$. Then we have that
    \begin{align*}
        \abs{x\tran (\Lambda - \Lambda') x}
        &
        =
        \abs{
        x\tran (\Lambda^{1/2} - \Lambda'^{1/2}) \Lambda^{1/2} x
        +
        x\tran \Lambda'^{1/2}(\Lambda^{1/2} - \Lambda'^{1/2}) x
        }
        \\
        &
        =
        \abs{\mu} x\tran (\Lambda^{1/2} + \Lambda'^{1/2}) x
        .
    \end{align*}
    Next, notice that
    $
    \abs{x\tran (\Lambda - \Lambda') x}
    \le
    \norm{x}^2 \norm{\Lambda - \Lambda'}
    ,
    $
    and
    $
    x\tran (\Lambda^{1/2} + \Lambda'^{1/2})
    \ge
    2 \sqrt{\lambda}\norm{x}^2
    .
    $
    We thus therefore change sides to get that
    \begin{align*}
        \abs{\mu}
        \le
        \frac{1}{2\sqrt{\lambda}}\norm{\Lambda - \Lambda'}
        ,
    \end{align*}
    and since we can take $\mu = \pm \norm{\Lambda^{1/2} - \Lambda'^{1/2}}$, the proof is concluded.
\end{proof}

Finally, we need the following bounds on the logistic function.
\begin{lemma}
    \label{lemma:logistic-to-linear}
    For any $K \ge 1, \beta > 0$ we have that
    \begin{align*}
        \max_{y \ge 0} \brk[s]{y \cdot \sig(-\beta y + \log K)}
        \le
        \frac{2 \log K}{\beta}
    \end{align*}
\end{lemma}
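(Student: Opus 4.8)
The plan is to reduce the maximization to two trivial pointwise bounds on the logistic function, namely $\sig(z) \le 1$ and $\sig(z) \le e^{z}$ for every $z \in \RR$ (the second because $1 + e^{-z} \ge e^{-z}$), and to split the range of $y$ at the threshold $y_0 = 2\log(K)/\beta$. Writing $g(y) = y \cdot \sig(-\beta y + \log K)$, the goal is to verify $g(y) \le 2\log(K)/\beta$ separately on $y \le y_0$ and on $y > y_0$, after which taking the supremum over $y \ge 0$ finishes the proof.

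On the low regime $y \le y_0$ the bound is immediate: since $\sig(\cdot) \le 1$ we simply get $g(y) \le y \le y_0 = 2\log(K)/\beta$, with no further work required.

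On the high regime $y > y_0$ I would instead use the exponential bound, which gives $\sig(-\beta y + \log K) \le e^{-\beta y + \log K} = K e^{-\beta y}$, so that $g(y) \le h(y) := y K e^{-\beta y}$. The key observation is that $h'(y) = K e^{-\beta y}(1 - \beta y)$, so $h$ is strictly decreasing for $y > 1/\beta$; since $y_0 = 2\log(K)/\beta \ge 1/\beta$, the function $h$ is decreasing throughout $y > y_0$, and therefore $\sup_{y > y_0} h(y) = h(y_0)$. A one-line evaluation then yields $h(y_0) = y_0 \cdot K e^{-\beta y_0} = \tfrac{2\log K}{\beta}\cdot K \cdot K^{-2} = \tfrac{2\log K}{\beta K} \le \tfrac{2\log K}{\beta}$, where the final inequality uses $K \ge 1$. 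Combining the two regimes gives the claimed bound.

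The only genuinely delicate point is the monotonicity step in the high regime, which relies on $\beta y_0 \ge 1$, i.e.\ $\log K \ge 1/2$: this holds comfortably in the regime of interest (the number of interaction episodes $K$ is large), and is exactly what prevents the supremum of $h$ on $y > y_0$ from being pulled toward the interior peak at $y = 1/\beta$, where $h(1/\beta) = K/(\beta e)$ is large. Every other step is a single elementary inequality, so I expect no additional obstacles once this threshold condition is in place.
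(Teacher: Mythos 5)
Your proof is correct and shares the paper's skeleton -- the same split point $y_0 = 2\log(K)/\beta$ and the identical low-regime bound via $\sig(\cdot) \le 1$ -- but the high-regime argument is genuinely different. The paper stays inside sigmoid algebra: for $y \ge y_0$ it uses monotonicity of $\sig$ to write $\sig(-\beta y + \log K) \le \sig(-\beta y/2)$, and then the elementary inequality $1 + e^{z} \ge z$ to get $y\,\sig(-\beta y/2) = y/(1+e^{\beta y/2}) \le 2/\beta$, with no calculus and no evaluation at the threshold. You instead pass to the exponential bound $\sig(z) \le e^{z}$, reducing to $h(y) = yKe^{-\beta y}$, and use the derivative to argue $h$ is decreasing past $1/\beta$, then evaluate $h(y_0) = \tfrac{2\log K}{\beta K}$. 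Your route buys a tighter high-regime constant ($\tfrac{2\log K}{\beta K}$ versus the paper's $2/\beta$) at the cost of the monotonicity condition $\log K \ge 1/2$, which you correctly flag. Note that this flagged condition is not actually a weakness relative to the paper: the paper's high-regime bound $2/\beta$ only implies the claimed $2\log(K)/\beta$ when $\log K \ge 1$, a restriction the paper leaves implicit, and indeed the lemma as stated is false for $K$ near $1$ (at $K=1$ the right-hand side is $0$ while the left-hand side is positive for any $y>0$), so some lower bound on $K$ is unavoidable in either proof. Your argument in fact covers a slightly wider range ($K \ge \sqrt{e}$) than the paper's ($K \ge e$), and both are harmless in the application, where $K$ is the number of episodes.
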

\begin{proof}
    First, if $y' \le (2\log K)/\beta$ then using $\sig(y) \in [0,1]$ we have that
    \begin{align*}
        y' \sig(-\beta y' + \log K)
        \le
        y'
        \le 
        (2 \log K) / \beta
        ,
    \end{align*}
    as desired.
    Now, if $y' \ge (2 \log K) / \beta$ then
    \begin{align*}
        y' \sig(-\beta y' + \log K)
        \le
        y' \sig(-\beta y' / 2)
        =
        \frac{y'}{1 + e^{\beta y' / 2}}
        \le
        \frac{y'}{\beta y' / 2}
        =
        \frac{2}{\beta}
        ,
    \end{align*}
    where the first inequality also used that $\sig(y)$ is increasing and the last inequality used that $1 + e^y \ge y$ for all $y \ge 0$.
\end{proof}

\begin{lemma}\label{lemma:logistic-to-quadratic}
    For any $K \ge 1, z \ge 0$ we have that $\sig(z - \log K) \le 2 (z^2 + K^{-1}).$
\end{lemma}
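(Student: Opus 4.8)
The plan is to rewrite the sigmoid in a more convenient form and then split on the magnitude of $z$. Writing $\sig(z - \log K) = 1/(1 + e^{-(z-\log K)}) = 1/(1 + K e^{-z})$, I would first record two elementary bounds valid for every argument: $\sig(x) \le 1$, and $\sig(x) = e^x/(1+e^x) \le e^x$, the latter giving $\sig(z - \log K) \le e^{z - \log K} = e^z/K$. Thus $\sig(z - \log K) \le \min\{1,\, e^z/K\}$, and the whole statement reduces to checking that this minimum never exceeds $2(z^2 + K^{-1})$.

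Next I would split into two regimes at $z = 1$. When $z \ge 1$ the right-hand side already dominates trivially, using only $\sig \le 1$: $2(z^2 + K^{-1}) \ge 2z^2 \ge 2 \ge 1 \ge \sig(z - \log K)$. When $0 \le z \le 1$, I would instead invoke $\sig(z - \log K) \le e^z/K$ together with $K \ge 1$, so that it suffices to establish the purely scalar inequality $e^z \le 2 + 2z^2$ on $[0,1]$. Granting this, $\sig(z - \log K) \le e^z/K \le (2 + 2z^2)/K \le 2/K + 2z^2 = 2(z^2 + K^{-1})$, where the last inequality discards the factor $1/K \le 1$ multiplying $2z^2$.

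The only non-trivial ingredient is therefore the one-variable bound $e^z \le 2 + 2z^2$ for $z \in [0,1]$, which I would obtain from the power series. For $z \in [0,1]$ one has $z^n \le z^2$ for every $n \ge 2$, hence $e^z = 1 + z + \sum_{n \ge 2} z^n/n! \le 1 + z + (e-2)z^2$. It then remains to verify $1 + z + (e-2)z^2 \le 2 + 2z^2$, i.e.\ that the quadratic $q(z) = (4-e)z^2 - z + 1$ is nonnegative; this is immediate since its leading coefficient $4 - e > 0$ and its discriminant $4e - 15 < 0$, so $q > 0$ everywhere.

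I expect the main obstacle to be exactly this middle step. The naive estimate $\sig(z - \log K) \le e^z/K$ carries the constant $e \approx 2.718 > 2$, so one cannot simply bound $e^z \le e$ on $[0,1]$ and finish: the stated factor $2$ forces one to genuinely exploit the slack in the $2z^2$ term through the refined inequality $e^z \le 2 + 2z^2$. Everything else is routine bookkeeping with the two global sigmoid bounds and the normalization $K \ge 1$.
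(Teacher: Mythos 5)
Your proof is correct, but it takes a genuinely different route from the paper's. The paper proves the stronger intermediate inequality $\sig(z - \log K) \le (z + K^{-1/2})^2$ by a calculus argument on the auxiliary function $g(z) = \sig(z-\log K) - (z+K^{-1/2})^2$: it checks $g(0) \le 0$ and $g'(0) \le 0$, shows $g''(z) \le 0$ for all $z \ge 0$ (using $\sig' = \sig(\cdot)\sig(-\cdot)$ and $\sig \in [0,1]$), concludes $g \le 0$ on $[0,\infty)$, and finishes with AM--GM to pass from $(z+K^{-1/2})^2$ to $2(z^2+K^{-1})$. You instead split at $z=1$ and use only elementary pointwise bounds: $\sig \le 1$ handles $z \ge 1$ outright, while on $[0,1]$ the crude bound $\sig(z-\log K) \le e^{z-\log K} = e^z/K$ reduces everything to the scalar inequality $e^z \le 2 + 2z^2$, which you verify via the power series ($e^z \le 1 + z + (e-2)z^2$ on $[0,1]$) and the negativity of the discriminant $4e - 15$ of $(4-e)z^2 - z + 1$. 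Your approach buys elementarity and transparency --- no differentiation of the sigmoid, no monotonicity bookkeeping, and the case analysis makes clear exactly where the constant $2$ is tight (near $z=0$, where $e^z/K$ must be beaten by $2/K$, as you correctly note the naive bound $e^z \le e$ fails). The paper's approach buys a sharper intermediate statement, $\sig(z-\log K) \le (z+K^{-1/2})^2$, which is strictly stronger than the lemma as stated and is structured so the same template (sign of $g$, $g'$ at zero plus concavity) could be reused for other sigmoid-versus-polynomial comparisons; but for the purpose this lemma serves in bounding the contraction cost, your weaker conclusion is exactly what is needed, so nothing is lost.
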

\begin{proof}
    Recall the logistic function $\sig(z) = 1 / (1+ e^{-x})$ and define the function $g(z) = \sig(z - \log K) - (z + K^{-1/2})^2$. We show that $g(z) \le 0$ for all $z \ge 0$.
    First, notice that 
    \begin{align*}
        g(0) = \sig(-\log K) - K^{-1} = (K+1)^{-1} - K^{-1} \le 0
        .
    \end{align*}
    Next, recall that $\sig'(x) = \sig(x) \sig(-x)$ and thus
    \begin{align*}
        g'(z) = \sig(z - \log K) \sig(-z + \log K) - 2 (z + K^{-1/2})
        .
    \end{align*}
    Examining $z=0$ we further have that
    \begin{align*}
        g'(0)
        &
        =
        \sig(-\log K) \sig(\log K) - 2 K^{-1/2}
        \\
        &
        =
        (K+1)^{-1} (1 + K^{-1})^{-1} - 2 K^{-1/2}
        \\
        &
        \le
        2 \brk[s]{(K+1)^{-1} - K^{-1/2}}
        \le
        0
        ,
    \end{align*}
    where the last two inequalities used $K \ge 1$.
    Now, we have that
    \begin{align*}
        g''(z)
        =
        \sig(z - \log K) \sig(-z + \log K)^2
        -
        \sig(z - \log K)^2 \sig(-z + \log K) 
        -
        2
        \le
        0
        ,
    \end{align*}
    where the inequality is since $\sig(z) \in [0,1]$ for all $z \in \RR$.
    Since $g(0), g'(0) \le 0$ and $g''(z) \le 0$ for all $z \ge 0$, we conclude that $g(z) \le 0$ for all $z \ge 0$. The proof is concluded using the AM-GM inequality.
\end{proof}

\subsection{Concentration bounds}

We give the following Bernstein type tail bound (see e.g., \cite[Lemma D.4]{rosenberg2020near}.
\begin{lemma}
\label{lemma:multiplicative-concentration}
Let $\brk[c]{X_t}_{t \ge 1}$
be a sequence of random variables with expectation adapted to a filtration
$\mathcal{F}_t$.
Suppose that $0 \le X_t \le 1$ almost surely. Then with probability at least $1-\delta$
\begin{align*}
    \sum_{t=1}^{T} \EE \brk[s]{X_t \mid \mathcal{F}_{t-1}}
    \le
    2 \sum_{t=1}^{T} X_t
    +
    4 \log \frac{2}{\delta}
\end{align*}
\end{lemma}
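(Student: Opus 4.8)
The plan is to prove this self-bounding (``empirical'') Bernstein inequality by the exponential supermartingale method, exploiting that a $[0,1]$-valued variable has its conditional variance controlled by its conditional mean. Write $\mu_t = \EE\brk[s]{X_t \mid \mathcal{F}_{t-1}}$; the quantity to control is the upper deviation $\sum_t \mu_t - \sum_t X_t$ of the compensator from the observed partial sum. The whole argument hinges on exponentiating $-\lambda X_t$ (rather than centering the increments and invoking a generic Freedman bound), so that a single scalar parameter $\lambda>0$ simultaneously produces both the multiplicative constant on $\sum_t X_t$ and the additive $\log(1/\delta)$ term.

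First I would establish the elementary one-variable bound $e^{-\lambda x} \le 1 - (1 - e^{-\lambda})x$ for all $x \in [0,1]$ and $\lambda > 0$, which is immediate from convexity of $x \mapsto e^{-\lambda x}$ (on $[0,1]$ it lies below the chord through its endpoint values). Taking conditional expectations and using $X_t \in [0,1]$ gives the key self-bounding estimate
$\EE\brk[s]{e^{-\lambda X_t} \mid \mathcal{F}_{t-1}} \le 1 - (1-e^{-\lambda})\mu_t \le \exp\brk*{-(1-e^{-\lambda})\mu_t}$,
where the last step is $1+y \le e^{y}$. Thus the conditional log-MGF of $-\lambda X_t$ is bounded by the conditional mean $\mu_t$ itself, with no separately tracked quadratic-variation term.

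Next I would define the nonnegative process $M_t = \exp\brk*{-\lambda \sum_{s \le t} X_s + (1-e^{-\lambda}) \sum_{s \le t} \mu_s}$ and verify from the previous display that $\EE\brk[s]{M_t \mid \mathcal{F}_{t-1}} = M_{t-1}\, e^{(1-e^{-\lambda})\mu_t}\,\EE\brk[s]{e^{-\lambda X_t}\mid \mathcal{F}_{t-1}} \le M_{t-1}$, so $M_t$ is a supermartingale with $M_0 = 1$ and hence $\EE\brk[s]{M_T} \le 1$. Since $T$ is fixed, Markov's inequality gives $\PP{M_T \ge 1/\delta} \le \delta$, so with probability at least $1-\delta$ we have $(1-e^{-\lambda})\sum_t \mu_t \le \lambda \sum_t X_t + \log(1/\delta)$, i.e. $\sum_t \mu_t \le \frac{\lambda}{1-e^{-\lambda}}\sum_t X_t + \frac{1}{1-e^{-\lambda}}\log(1/\delta)$.

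Finally I would fix the constants by choosing $\lambda = \log 2$, so that $1 - e^{-\lambda} = 1/2$, giving $\frac{\lambda}{1-e^{-\lambda}} = 2\log 2 \le 2$ and $\frac{1}{1-e^{-\lambda}} = 2$; then, using $\sum_t X_t \ge 0$ and $\log(1/\delta) \le \log(2/\delta)$, the bound relaxes to the stated $\sum_t \mu_t \le 2 \sum_t X_t + 4\log(2/\delta)$. The only real obstacle is getting this self-bounding MGF step right — recognizing that exponentiating $-\lambda X_t$ and bounding its conditional expectation by $\exp(-(1-e^{-\lambda})\mu_t)$ is what makes the factor $2$ on $\sum_t X_t$ fall out of the single parameter $\lambda$; everything after that is routine supermartingale bookkeeping and constant tuning.
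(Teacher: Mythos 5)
Your proof is correct, and every step checks out: the chord bound $e^{-\lambda x}\le 1-(1-e^{-\lambda})x$ on $[0,1]$, the resulting conditional MGF estimate $\mathbb{E}[e^{-\lambda X_t}\mid \mathcal{F}_{t-1}]\le e^{-(1-e^{-\lambda})\mu_t}$ (valid since $\mu_t$ is $\mathcal{F}_{t-1}$-measurable), the supermartingale property of $M_t$, Markov's inequality at the fixed time $T$, and the constant tuning with $\lambda=\log 2$ giving $2\log 2\le 2$ on the empirical sum and $2\log(1/\delta)\le 4\log(2/\delta)$ on the deviation term. Note, however, that the paper itself does not prove this lemma: it is quoted verbatim from prior work (Lemma D.4 of Rosenberg et al., 2020), so the comparison is with that reference's argument rather than with anything in this document. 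The standard proof there goes through Freedman's martingale inequality combined with the observation that for $[0,1]$-valued variables the conditional variance is bounded by the conditional mean, i.e.\ $\mathrm{Var}[X_t\mid\mathcal{F}_{t-1}]\le\mathbb{E}[X_t^2\mid\mathcal{F}_{t-1}]\le\mu_t$, which after choosing the Freedman parameter $\eta=1/2$ yields the same multiplicative-plus-additive form. Your route is genuinely different in that it never invokes Freedman as a black box: you build the exponential supermartingale directly from convexity, which is essentially a conditional (martingale) version of the multiplicative Chernoff bound. What this buys is a fully self-contained, elementary proof with slightly sharper constants ($2\log 2$ instead of $2$ before relaxation); what the Freedman route buys is brevity when that inequality is already available as a citation, which is exactly the trade-off the paper's authors made by citing rather than proving.
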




We state the well-known self normalized error bounds for regularized least squares estimation of the rewards and dynamics (see e.g., \cite{abbasi2011improved,jin2020provably}).
\begin{lemma}[reward error bound]
\label{lemma:reward-error}
    Let $\hatThetaKH$ be as in \cref{eq:LS-estimate-theta} of \cref{alg:r-opo-for-linear-mdp-regular-bonus}. With probability at least $1 - \delta$, for all $k \ge 1, h \in [H]$
    \begin{align*}
        \norm{\thetaH - \hatThetaKH}_{\covKH}
        \le
        2 \sqrt{2d \log (2KH/\delta)}
        .
    \end{align*}
\end{lemma}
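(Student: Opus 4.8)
The plan is to prove this as a standard self-normalized bound for regularized least-squares, following \citet{abbasi2011improved}. Note first that the lemma is vacuous in the full-information case, where $\hatThetaKH = \thetaKH$ exactly; all the content is in the bandit setting, where $\thetaKH = \thetaH$ and the observed loss $\ell_h^\tau(x_h^\tau,a_h^\tau)$ is an unbiased estimate of $(\phi_h^\tau)\tran\thetaH$. Fixing $h \in [H]$, I would write the noise $\xi_\tau = \ell_h^\tau(x_h^\tau,a_h^\tau) - (\phi_h^\tau)\tran\thetaH$, set $S_k = \sum_{\tau \in [k-1]} \phi_h^\tau \xi_\tau$, and substitute $\ell_h^\tau(x_h^\tau,a_h^\tau) = (\phi_h^\tau)\tran\thetaH + \xi_\tau$ into the estimator \cref{eq:LS-estimate-theta}. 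Using the identity $\sum_{\tau \in [k-1]} \phi_h^\tau (\phi_h^\tau)\tran = \covKH - I$, this yields the exact decomposition
\[
    \hatThetaKH - \thetaH = (\covKH)^{-1} S_k - (\covKH)^{-1}\thetaH.
\]
Taking the $\covKH$-norm and applying the triangle inequality splits the error into a martingale term $\norm{S_k}_{(\covKH)^{-1}}$ and a regularization-bias term $\norm{(\covKH)^{-1}\thetaH}_{\covKH} = \norm{\thetaH}_{(\covKH)^{-1}} \le \norm{\thetaH} \le \sqrt{d}$, where the last two bounds use $\covKH \succeq I$ and the normalization assumption.

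Next I would control the martingale term. Since the losses lie in $[0,1]$, the noise $\xi_\tau$ is a bounded (hence sub-Gaussian) martingale difference: this uses the stochastic-loss assumption $\EE\brk[s]*{\ell_h^\tau(x,a) \mid x,a} = \ell_h(x,a) = \phi(x,a)\tran\thetaH$, together with the fact that $\phi_h^\tau$ is predictable with respect to the natural filtration. The self-normalized tail bound of \citet{abbasi2011improved} then gives, with probability at least $1 - \delta/H$ and simultaneously for all $k \ge 1$,
\[
    \norm{S_k}_{(\covKH)^{-1}}^2 \le 2\sigma^2 \log\brk*{\frac{\det\brk*{\covKH}^{1/2}}{\det(I)^{1/2}} \cdot \frac{H}{\delta}},
\]
where $\sigma$ is the sub-Gaussian parameter of $\xi_\tau$. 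To make this explicit I would bound the determinant using $\norm{\phi_h^\tau} \le 1$: every eigenvalue of $\covKH$ is at most $1 + K$, so $\log\det\brk*{\covKH} \le d \log(2K)$, giving $\norm{S_k}_{(\covKH)^{-1}} \lesssim \sqrt{d\log(2K) + \log(H/\delta)}$.

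Finally I would take a union bound over $h \in [H]$ (which is why the failure probability was allocated as $\delta/H$), combine the bias term $\sqrt{d}$ with the martingale term, and collect everything into the single bound $2\sqrt{2d\log(2KH/\delta)}$; this last step is elementary arithmetic using $\log(2KH/\delta) \ge 1/2$ and $d \ge 1$ to absorb the additive pieces. I do not anticipate any genuine obstacle: the two substantive points requiring care are verifying the martingale-difference structure of $\{\phi_h^\tau \xi_\tau\}_\tau$ so that the self-normalized inequality applies, and tracking the correct sub-Gaussian constant $\sigma$ of the bounded noise so that the final numerical constant matches the stated $2\sqrt{2}$.
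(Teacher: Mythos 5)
Your proof is correct and is essentially the paper's own approach: the paper states this lemma without proof, deferring to exactly the standard self-normalized least-squares bound it cites (\citet{abbasi2011improved}, see also \citet{jin2020provably}), and your argument is precisely that standard one. The decomposition into martingale and regularization-bias terms, the bias bound $\norm{\thetaH}_{(\covKH)^{-1}} \le \norm{\thetaH} \le \sqrt{d}$, the determinant bound $\log\det(\covKH) \le d\log(2K)$, the union bound over $h \in [H]$, and the closing arithmetic all check out, since the noise is conditionally centered and bounded in $[-1,1]$ (hence sub-Gaussian with constant at most $1$), which gives $\sqrt{d\log(2K)+2\log(H/\delta)} + \sqrt{d} \le 2\sqrt{2d\log(2KH/\delta)}$ as required.
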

\begin{lemma}[dynamics error uniform convergence]
\label{lemma:dynamics-error-set-v}
    Let $\psiKH : \RR[\X] \to \RR[d]$ be the linear operator defined in \cref{eq:LS-estimate-psi} inside \cref{alg:r-opo-for-linear-mdp-regular-bonus}.
    For all $h \in [H]$, let $\mathcal{V}_h \subseteq \RR[\X]$ be a set of mappings $V: \X \to \RR$ such that $\norm{V}_\infty \le \beta$ and $\beta \ge 1$. 
    With probability at least $1-\delta$, for all $h \in [H]$, $V \in \mathcal{V}_{h+1}$ and $k \ge 1$
    \begin{align*}
        \norm{(\psiH - \psiKH)V}_{\covKH}
        \le
        4 \beta \sqrt{d \log (K+1) + 2\log (H \mathcal{N}_\epsilon /\delta)}
        ,
    \end{align*} 
    where $\epsilon \le \beta \sqrt{d}/{2K}$, $\mathcal{N}_\epsilon = \sum_{h \in [H]} \mathcal{N}_{h,\epsilon}$, and $\mathcal{N}_{h,\epsilon}$ is the $\epsilon-$covering number of $\mathcal{V}_h$ with respect to the supremum distance.
\end{lemma}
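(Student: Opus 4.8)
The plan is to follow the standard two-stage recipe for self-normalized least-squares bounds (as in \cite{abbasi2011improved,jin2020provably}): first establish the estimate for a fixed, data-independent $V$, and then promote it to a uniform statement over the data-dependent class $\mathcal{V}_{h+1}$ through an $\epsilon$-net argument. Throughout I fix $h$ and recall $\covKH = I + \sum_{\tau \in [k-1]} \phi_h^\tau (\phi_h^\tau)\tran$.

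For a fixed $V$ with $\norm{V}_\infty \le \beta$, I would use the identity $\covKH \psiH V = \psiH V + \sum_{\tau \in [k-1]} \phi_h^\tau (\phi_h^\tau)\tran \psiH V$ to decompose
\begin{align*}
    \psiH V - \psiKH V
    =
    (\covKH)^{-1}\psiH V
    +
    (\covKH)^{-1}\sum_{\tau \in [k-1]} \phi_h^\tau \brk[s]*{ (\phi_h^\tau)\tran \psiH V - V(x_{h+1}^\tau) }
    .
\end{align*}
The first (regularization) term has $\covKH$-norm equal to $\norm{\psiH V}_{(\covKH)^{-1}} \le \norm{\psiH V} \le \beta\sqrt d$, using $(\covKH)^{-1}\preceq I$ together with $\norm{V}_\infty\le\beta$ and the normalization $\norm{\sum_{x'}\abs{\psiH(x')}}\le\sqrt d$. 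The bracketed terms form a martingale-difference sequence: conditioned on $(x_h^\tau,a_h^\tau)$ the mean is zero since $\EE\brk[s]{V(x_{h+1}^\tau)\mid x_h^\tau,a_h^\tau} = (\phi_h^\tau)\tran\psiH V$, and each term lies in an interval of width $2\beta$, hence is $\beta$-subgaussian. Since the second term's $\covKH$-norm equals $\norm{\sum_\tau \phi_h^\tau[\cdots]}_{(\covKH)^{-1}}$, the self-normalized inequality of \cite{abbasi2011improved} bounds it, \emph{simultaneously for all $k\ge1$}, by roughly $\beta\sqrt{d\log(K+1)+2\log(1/\delta)}$, after invoking $\log\det(\covKH)\le d\log(K+1)$ (using $\norm{\phi_h^\tau}\le1$).

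To handle the data-dependence of the $V$ that the algorithm actually feeds into $\psiKH$, I would take an $\epsilon$-net $\mathcal{C}_{h+1}$ of $\mathcal{V}_{h+1}$ in supremum distance, of size $\mathcal{N}_{h+1,\epsilon}$, apply the fixed-$V$ bound to every net element at confidence $\delta/\mathcal{N}_\epsilon$, and union bound over $h\in[H]$ and the nets, which turns $\log(1/\delta)$ into $\log(H\mathcal{N}_\epsilon/\delta)$. For arbitrary $V\in\mathcal{V}_{h+1}$ I pick the nearest $\tilde V\in\mathcal{C}_{h+1}$ and control the discretization error $\norm{(\psiH-\psiKH)(V-\tilde V)}_{\covKH}$: the true part is at most $\epsilon\sqrt d$ by the normalization, while the empirical part is $\norm{\sum_\tau \phi_h^\tau (V-\tilde V)(x_{h+1}^\tau)}_{(\covKH)^{-1}} \le \sum_\tau \norm{\phi_h^\tau}\,\abs{(V-\tilde V)(x_{h+1}^\tau)} \le K\epsilon$, again using $(\covKH)^{-1}\preceq I$ and $\norm{\phi_h^\tau}\le1$. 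With the stated $\epsilon\le\beta\sqrt d/(2K)$ both contributions are $O(\beta\sqrt d)$; adding them to the fixed-$V$ bias $\beta\sqrt d$ and absorbing all $\beta\sqrt d$ terms into the martingale term (legitimate since $\sqrt d\le\sqrt{d\log(K+1)+\cdots}$) yields the claimed constant $4$.

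I expect the main obstacle to be exactly this data-dependence: the value $\hat V_{h+1}^k$ is a function of the same trajectories that define $\covKH$ and the martingale noise, so the clean conditional-mean-zero structure holds only for a \emph{fixed} $V$. The net resolves this, but it requires (i) uniformity in $k$, so that a single high-probability event covers all episodes — provided automatically by the self-normalized bound via its $\det(\covKH)$ dependence — and (ii) a genuinely negligible discretization error, which is what forces the aggressive choice $\epsilon\le\beta\sqrt d/(2K)$ and, in turn, makes the covering number (bounded for the relevant value class in \cref{lemma:regular-bonus-function-class-covering-number}) enter only logarithmically.
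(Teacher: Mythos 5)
The paper itself contains no proof of this lemma: it is listed under the ``well-known self-normalized error bounds for regularized least squares'' and delegated to \cite{abbasi2011improved,jin2020provably}. Your proposal reconstructs exactly that standard argument --- the ridge decomposition of $\psiH V - \psiKH V$ into a regularization term and a self-normalized martingale term for fixed $V$, the Abbasi-Yadkori bound (which gives uniformity in $k$ for free), and an $\epsilon$-net union bound over $\mathcal{V}_{h+1}$ and $h$ producing the $\log(H\mathcal{N}_\epsilon/\delta)$ factor --- so in approach it matches what the paper implicitly relies on. The conditional-mean-zero structure, the $\beta$-subgaussianity of the noise (an interval of width $2\beta$ after centering), the $\beta\sqrt{d}$ bound on the regularization term, and the $K\epsilon$ bound on the empirical discretization piece are all correct.

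One step is false as written, though easily repaired. You claim the ``true part'' of the discretization error, $\norm{\psiH(V-\tilde V)}_{\covKH}$, is at most $\epsilon\sqrt d$ ``by the normalization''. For the empirical part you correctly passed to the $(\covKH)^{-1}$-norm and used $(\covKH)^{-1} \preceq I$, but the true part sits in the $\covKH$-norm, where the inequality goes the \emph{other} way: since $\norm{\covKH}$ can be of order $K$, the normalization alone only gives $\norm{\psiH(V-\tilde V)}_{\covKH} \le \sqrt{K+1}\,\epsilon\sqrt d$, which is not in general absorbed into $O(\beta\sqrt d)$ by your choice of $\epsilon$. Two standard fixes: (i) apply your own ridge decomposition to $V - \tilde V$ as well, so the regularization piece appears as $\norm{\psiH(V-\tilde V)}_{(\covKH)^{-1}} \le \epsilon\sqrt d$ and the residual piece is $\norm{\sum_{\tau} \phi_h^\tau [ (\phi_h^\tau)\tran\psiH(V-\tilde V) - (V-\tilde V)(x_{h+1}^\tau) ]}_{(\covKH)^{-1}} \le 2K\epsilon$; or (ii) observe that $(\phi_h^\tau)\tran\psiH(V-\tilde V) = \EE[ (V-\tilde V)(x_{h+1}) \mid x_h^\tau, a_h^\tau ] \in [-\epsilon,\epsilon]$, so that $\norm{\psiH(V-\tilde V)}^2_{\covKH} \le d\epsilon^2 + K\epsilon^2$ directly. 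With either fix, $\epsilon \le \beta\sqrt d/(2K)$ keeps the entire discretization cost at order $\beta\sqrt d$, and the constant $4$ in the statement is recoverable by the bookkeeping you describe.
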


\subsection{Covering numbers}

The following results are (mostly) standard bounds on the covering number of several function classes.

\begin{lemma}
\label{lemma:ball-cover}
    For any $\epsilon > 0$, the $\epsilon$-covering of the Euclidean ball in $\RR[d]$ with radius $R \ge 0$ is upper bounded by $(1 + 2R/\epsilon)^d$.
\end{lemma}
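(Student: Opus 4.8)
The plan is to use a standard volumetric packing argument. First I would take a maximal $\epsilon$-separated subset $\{x_1, \dots, x_N\}$ of the ball $B(0,R) \subseteq \RR[d]$ --- that is, a maximal collection of points whose pairwise distances all exceed $\epsilon$. By maximality, no further point of $B(0,R)$ can be added without violating the separation, which is precisely the statement that every point of the ball lies within distance $\epsilon$ of some $x_i$. Hence the centers $x_1, \dots, x_N$ already constitute a valid $\epsilon$-cover, and it suffices to upper bound their number $N$.

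To bound $N$, I would exploit disjointness rather than the covering property. Since the centers are pairwise more than $\epsilon$ apart, the open balls $B(x_i, \epsilon/2)$ are pairwise disjoint, and each is contained in the enlarged ball $B(0, R + \epsilon/2)$. Comparing Lebesgue volumes, and using that the volume of a $d$-dimensional Euclidean ball of radius $r$ equals $c_d r^d$ with the \emph{same} dimensional constant $c_d$ for every radius, the constant $c_d$ cancels and I obtain
\[
    N \cdot (\epsilon/2)^d \le (R + \epsilon/2)^d,
\]
which rearranges directly to $N \le (1 + 2R/\epsilon)^d$, as claimed.

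There is no substantial obstacle here; the argument is entirely standard and the calculation is a one-line rearrangement. The only two points requiring a moment of care are (i) the logical step that a maximal $\epsilon$-separated set is automatically an $\epsilon$-cover, which follows immediately from maximality, and (ii) the cancellation of the dimensional volume constant $c_d$ --- it is this cancellation that produces the clean exponent $d$ and makes the bound independent of the exact (and otherwise awkward) value of the unit-ball volume.
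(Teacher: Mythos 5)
Your proof is correct. The paper itself states \cref{lemma:ball-cover} without proof, listing it among ``(mostly) standard bounds on covering numbers,'' and your argument --- taking a maximal $\epsilon$-separated subset, observing that maximality makes it an $\epsilon$-cover, and bounding its cardinality by comparing the volume of the disjoint balls $B(x_i,\epsilon/2)$ with that of $B(0,R+\epsilon/2)$ so that the dimensional constant $c_d$ cancels --- is precisely the standard volumetric proof of this fact. The only technicality left implicit is that the volume bound itself shows every $\epsilon$-separated subset of the ball is finite, which guarantees a maximal one exists without any appeal to Zorn's lemma; this is immediate and does not affect correctness.
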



\begin{lemma}
\label{lemma:lipschitz-cover}
    Let $\mathcal{V} = \brk[c]{V(\cdot; \theta) : \norm{\theta} \le W}$ denote a class of functions $V : \mathcal{X} \to \RR$. Suppose that any $V \in \mathcal{V}$ is $L$-Lipschitz with respect to $\theta$ and supremum distance, i.e.,
    \begin{align*}
        \norm{V(\cdot; \theta_1) - V(\cdot; \theta_2)}_\infty
        \le
        L \norm{\theta_1 - \theta_2}
        ,
        \quad
        \norm{\theta_1}, \norm{\theta_2} \le W
        .
    \end{align*}
    Let $\mathcal{N}_\epsilon$ be the $\epsilon-$covering number of $\mathcal{V}$ with respect to the supremum distance. Then
    \begin{align*}
        \log \mathcal{N}_\epsilon
        \le
        d \log(1 + 2 W L / \epsilon)
    \end{align*}
\end{lemma}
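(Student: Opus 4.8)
The plan is to reduce this supremum-norm covering problem over the function class $\mathcal{V}$ to a Euclidean covering of the parameter ball, transferring the guarantee through the Lipschitz hypothesis. First I would fix $\epsilon > 0$ and apply \cref{lemma:ball-cover} to the Euclidean ball $\brk[c]{\theta : \norm{\theta} \le W}$ with radius $R = W$ and covering scale $\epsilon / L$. This produces a finite set $\brk[c]{\theta_1, \dots, \theta_N}$ with $\norm{\theta_i} \le W$ such that every $\theta$ in the ball lies within Euclidean distance $\epsilon/L$ of some $\theta_i$, and with cardinality bounded by $N \le \brk{1 + 2WL/\epsilon}^d$.

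Next I would verify that the induced functions $\brk[c]{V(\cdot; \theta_i)}_{i=1}^N$ constitute an $\epsilon$-cover of $\mathcal{V}$ in supremum distance. Given any $V(\cdot; \theta) \in \mathcal{V}$, choose an index $i$ with $\norm{\theta - \theta_i} \le \epsilon/L$; since both $\theta$ and $\theta_i$ lie in the ball of radius $W$, the Lipschitz assumption applies and gives $\norm{V(\cdot;\theta) - V(\cdot;\theta_i)}_\infty \le L \norm{\theta - \theta_i} \le \epsilon$. Hence $\mathcal{N}_\epsilon \le N \le \brk{1 + 2WL/\epsilon}^d$, and taking logarithms yields the stated bound $\log \mathcal{N}_\epsilon \le d \log\brk{1 + 2WL/\epsilon}$.

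This argument is entirely routine, so I do not anticipate a genuine obstacle; the content is a standard Lipschitz-transfer of a Euclidean net. The only point demanding a little care is the radius bookkeeping: one must net the parameter ball at the finer scale $\epsilon/L$ (rather than $\epsilon$), so that after the Lipschitz inflation by the factor $L$ the resulting functions sit exactly within supremum radius $\epsilon$. With \cref{lemma:ball-cover} assumed as stated, no further machinery is required.
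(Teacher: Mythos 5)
Your proposal is correct and follows exactly the paper's own argument: cover the parameter ball at scale $\epsilon/L$ via \cref{lemma:ball-cover}, then transfer the net through the Lipschitz hypothesis to obtain a supremum-norm $\epsilon$-cover of the function class, yielding $\log \mathcal{N}_\epsilon \le d\log(1+2WL/\epsilon)$. Your explicit remark that the net points $\theta_i$ lie in the radius-$W$ ball (so the Lipschitz assumption indeed applies) is a point the paper leaves implicit, but the proofs are otherwise identical.
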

\begin{proof}
    Let $\Theta_{\epsilon/L}$ be an $(\epsilon/L)$-covering of the Euclidean ball in $\RR[d]$ with radius $W$. Define $\mathcal{V}_\epsilon = \brk[c]{V(\cdot; \theta) : \theta \in \Theta_{\epsilon / L}}$.
    By \cref{lemma:ball-cover} we have that $\log\abs{\mathcal{V}_\epsilon} \le d \log(1 + 2 W L / \epsilon)$. We show that $\mathcal{V}_\epsilon$ is an $\epsilon$-cover of $\mathcal{V}_{\epsilon}$, thus concluding the proof.
    Let $V \in \mathcal{V}$ and $\theta$ be its associated parameter. Let $\theta' \in \Theta_{\epsilon / L}$ be the point in the cover nearest to $\theta$ and $V' \in \mathcal{V}$ its associated function. Then we have that 
    \begin{align*}
        \norm{V(\cdot) - V'(\cdot)}_\infty
        =
        \norm{V(\cdot; \theta) - V(\cdot; \theta')}_\infty
        \le
        L \norm{\theta - \theta'}
        \le
        L (\epsilon / L) 
        = 
        \epsilon
        .
        &
        \qedhere
    \end{align*}
\end{proof}

\end{document}